\tikzset{	>=stealth',
	bend angle=45,
	auto,
	baseline=(current  bounding  box.center),
}
\let\temp\varphi
\let\varphi\phi
\let\phi\temp
\let\temp\S
\let\S\sect
\newcommand{\bb}[1]{[\![ #1 ]\!]}
\newcommand{\zero}{\varepsilon}
\newcommand{\Cppo}{\mathbb{CPPO}}
\newcommand{\C}{\mathbb{C}}
\newcommand{\Set}{\mathbb S \mathrm{et}}
\newcommand{\Rel}{\mathbb R \mathrm{el}}
\newcommand{\Rp}{\mathbb R^+}
\newcommand{\EM}[1]{\mathbb {EM}(#1)} 
\newcommand{\Kl}[1]{\mathbb K \mathrm l(#1)} 
\newcommand{\F}[1]{F^{#1}} 
\newcommand{\U}[1]{U^{#1}} 
\newcommand{\FK}[1]{F_{#1}} 
\newcommand{\UK}[1]{U_{#1}} 
\newcommand{\N}{\mathbb N}
\newcommand{\natset}[1]{\underline{#1}} 
\newcommand{\Bool}{\mathbb B \mathrm{ool}}
\renewcommand{\P}{\mathcal P} 
\newcommand{\Pf}{\mathcal P_f} 
\newcommand{\pow}{\mathcal P}
\newcommand{\etaP}{\eta^\P}
\newcommand{\muP}{\mu^\P}
\newcommand{\Graph}[1]{\Gamma(#1)}
\newcommand{\pre}[1]{{#1}^{-1}}
\newcommand{\mon}[1]{\mathcal S} 
\newcommand{\etaS}{\eta^\S}
\newcommand{\muS}{\mu^\S}
\DeclareMathOperator{\supp}{\mathit{supp}}
\newcommand{\convpow}{\mathcal P_c}
\newcommand{\ConvPow}[2]{\mathcal P_c^{#2} #1} 
\newcommand{\convclos}[2]{\overline{#1}^{#2}}
\newcommand{\choice}[1]{\mathfrak c(#1)} 
\newcommand{\convpowS}{{\mathcal P_c \mathcal S}} 
\newcommand{\convpowfS}{{\mathcal P_{fc} \mathcal S}} 
\newcommand{\id}[1]{\mathit{id}_{#1}}
\newcommand{\Alg}{\mathbb{A}\mathrm{lg}} 
\newcommand{\CSL}{\mathcal{CSL}} 
\newcommand{\LSM}{\mathcal{LSM}}
\newcommand{\SL}{\mathcal{SL}}
\newcommand{\D}{\mathcal{D}}
\newcommand{\Sup}{\bigsqcup}
\newcommand{\card}[1]{|#1|}
\newcommand{\Sharp}[1]{{#1}^\sharp}
\spnewtheorem*{notation}{Notation}{\bfseries}{\rmfamily}
\begin{document}

\title{Combining Semilattices and Semimodules\thanks{Supported by the Ministero dell'Universit\`a e della Ricerca of Italy under Grant No.\ 201784YSZ5, PRIN2017 -- ASPRA (\emph{Analysis of Program Analyses}).}}
\author{Filippo Bonchi\inst{}
\and
Alessio Santamaria
(\Letter)
\inst{}}

\authorrunning{F.\ Bonchi and A.\ Santamaria}
\institute{Dipartimento di Informatica, Università degli Studi di Pisa, 56127 Pisa, Italy
\email{filippo.bonchi@unipi.it, alessio.santamaria@di.unipi.it}}
\maketitle              

\begin{abstract}
We describe the canonical weak distributive law $\delta \colon \S \P \to \P \S$ of the powerset monad $\P$ over the $S$-left-semimodule monad $\S$, for a class of semirings $S$. We show that the composition of $\P$ with $\S$ by means of such $\delta$ yields \emph{almost} the monad of convex subsets previously introduced by Jacobs: 
the only difference consists in the absence in Jacobs's monad of the empty convex set. 
We provide a handy characterisation of the canonical weak lifting of $\P$ to $\EM{\S}$ as well as an algebraic theory for the resulting composed monad. Finally, we restrict the composed monad to finitely generated convex subsets and we show that it is presented by an algebraic theory combining semimodules and semilattices with bottom, which are the algebras for the finite powerset monad $\P_f$.

\keywords{algebraic theories  \and monads \and weak distributive laws.}
\end{abstract}

\section{Introduction}
Monads play a fundamental role in different areas of computer science since they embody notions of computations~\cite{DBLP:journals/iandc/Moggi91}, like nondeterminism, side effects and exceptions. 
 Consider for instance automata theory: deterministic automata can be conveniently regarded as certain kind of coalgebras on $\Set$~\cite{DBLP:conf/concur/Rutten98}, nondeterministic automata as the same kind of coalgebras but on $\EM{\P_f}$~\cite{DBLP:journals/corr/abs-1302-1046}, and weighted automata on $\EM{\S}$~\cite{DBLP:journals/iandc/BonchiBBRS12}. Here, $\P_f$ is the finite powerset monad, modeling nondeterministic computations, while $\S$ is the monad of semimodules over a semiring $S$, modelling various sorts of quantitative aspects when varying the underlying semiring $S$. It is worth mentioning two facts: first, rather than taking coalgebras over $\EM{T}$, the category of algebras for the monad $T$, one can also consider coalgebras over $\Kl{T}$, the Kleisli category induced by $T$~\cite{hasuo_generic_2006}; second, these two approaches based on monads have lead not only to a deeper understanding of the subject, but also to effective proof techniques~\cite{DBLP:conf/concur/Bonchi0P18,DBLP:conf/csl/BonchiPPR14,erkens2020up}, algorithms~\cite{barlocco2019coalgebra,DBLP:journals/cacm/BonchiP15,van2020learning,smolka2019guarded,urbat2020automata} and logics~\cite{hasuo2015generic,DBLP:conf/popl/HasuoSC16,DBLP:conf/fossacs/KlinR15}.
 
Since compositionality is often the key to master complex structures, computer scientists devoted quite some efforts to \emph{compose monads}~\cite{DBLP:journals/mscs/VaraccaW06} or the equivalent notion of \emph{algebraic theories} \cite{DBLP:journals/tcs/HylandPP06}. Indeed, the standard approach of composing monads by means of \emph{distributive laws}~\cite{beck_distributive_1969} turned out to be somehow unsatisfactory. On the one hand, distributive laws do not exist in many relevant cases: see~\cite{DBLP:journals/entcs/KlinS18,zwart_no-go_2019} for some no-go theorems; on the other hand, proving their existence is error-prone: see~\cite{DBLP:journals/entcs/KlinS18} for a list of results that were mistakenly assuming the existence of a distributive law of the powerset monad over itself. 

Nevertheless, some sort of weakening of the notion of distributive law--e.g., distributive laws of functors over monads~\cite{DBLP:journals/tcs/Klin11}--proved to be ubiquitous in computer science: they are GSOS specifications~\cite{turi1997towards}, they are sound coinductive up-to techniques~\cite{DBLP:conf/csl/BonchiPPR14} and complete abstract domains~\cite{DBLP:conf/lics/BonchiGGP18}. 
In this paper we will exploit \emph{weak distributive laws} in the sense of~\cite{garner_vietoris_2020} that have been recently shown successful in composing the monads for nondeterminism and probability~\cite{goy_combining_2020}.

\medskip

The goal of this paper is to somehow combine the monads $\P_f$ and $\S$ mentioned above. Our interest in $\S$ relies on the wide expressiveness provided by the possibility of varying $S$: for instance by taking $S$ to be the Boolean semiring, one obtains the monad $\P_f$; by fixing $S$ to be the field of reals, coalgebras over $\EM{\S}$ turn out be linear dynamical systems~\cite{DBLP:journals/tcs/Rutten05}. 

\medskip

We proceed as follows. Rather than composing $\P_f$, we found it convenient to compose the \emph{full}, not necessarily finite, powerset monad $\P$ with $\S$. In this way we can reuse several results in~\cite{clementino_monads_2014} that provide necessary and sufficient conditions on the semiring $S$ for the existence of a canonical weak~\cite{garner_vietoris_2020} distributive law  $\delta \colon \S \P \to \P \S$. 
Our first contribution (Theorem~\ref{thm:delta for positive refinable semifields}) consists in showing that such $\delta$ has a convenient alternative characterisation, whenever the underlying semiring is a \emph{positive semifield}, a condition that is met, e.g., by the semirings of Booleans and non-negative reals.

Such characterisation allows us to give a handy definition of the \emph{canonical weak lifting} of $\P$ over $\EM{\S}$ (Theorem~\ref{thm:weaklift}) and to observe that such lifting is \emph{almost} the same as the monad $\mathcal{C} \colon \EM{\S} \to \EM{\S} $ defined by Jacobs in~\cite{jacobs_coalgebraic_2008} (Remark~\ref{remarkJacobs}): the only difference is the absence in $\mathcal{C}$ of the empty subset. Such difference becomes crucial when considering the composed monads, named $\mathcal{C}\mathcal M \colon \Set \to \Set$ in~\cite{jacobs_coalgebraic_2008} and $\convpowS\colon \Set \to \Set$ in this paper: the latter maps a set $X$ into the set of convex subsets of $\S X$, while the former additionally requires the subsets to be non-empty. It turns out that while $\Kl{\mathcal{C}\mathcal M}$ is not $\mathbb{CPPO}$-enriched, a necessary condition for the coalgebraic framework in \cite{hasuo_generic_2006}, $\Kl{\convpowS}$ indeed is (Theorem \ref{thm:Kleisli is CPPO enriched}). 

Composing monads by means of weak distributive laws is rewarding in many respects: here we exploit the fact that algebras for the composed monad $\convpowS$ coincide with $\delta$-algebras, namely algebras for both $\P$ and $\S$ satisfying a certain pentagonal law. One can extract from this law some distributivity axioms that, together with the axioms for semimodules (algebras for the monad $\S$) and those for complete semilattices (algebras for the monad $\P$), provide an algebraic theory presenting the monad $\convpowS$ (Theorem \ref{thm:pres}).

We conclude by coming back to the finite powerset monad $\P_f$. By replacing, in the above theory, complete semilattices with semilattices with bottom (algebras for the monad $\P_f$) one obtains a theory  presenting the monad $\convpowfS$ of \emph{finitely generated} convex subsets (Theorem \ref{thm:convpowfS is presented by (Sigma', E')}), which is formally defined as a restriction of the canonical $\convpowS$. The theory, displayed in Table \ref{tab:axiomsinitial}, consists of the theory presenting the monad $\P_f$ and the theory presenting the monad $\S$ with four distributivity axioms.

\begin{table}[t]
    \centering
    \caption{The sets of axioms $E_{\SL}$ for semilattices (left), $E_{\LSM}$ for $S$-semimodules (right) and $E_{\D'}$ for their distributivity (bottom).}
\begin{tabular}{|c|}
    \begin{tabular}{rcl||rclrcl}
    \hline
$(x \sqcup y) \sqcup z $&=&$ x\sqcup(y \sqcup z)$ \; &\; $(x+y)+z$ &=& $x+(y+z)$ \;&\; $ (\lambda +_S \mu) \cdot x$ &=& $\lambda \cdot x + \mu \cdot x $  \\
     $x\sqcup y $&=&$ y \sqcup x$     \;&\; $x+y $&=&$ y+x$     \;&\; $0_S \cdot x $&=&$ 0$  \\
    $x\sqcup \bot $&=&$ x$ \; &\; $x+0 $&=&$ x$ \; &\;  $(\lambda \mu) \cdot x $&=&$ \lambda \cdot (\mu \cdot x)$   \\
    $x\sqcup x $&=&$ x$ \;& && &\;  $\lambda \cdot (x+y)$&=& $\lambda \cdot x + \lambda \cdot y$ \\
    && \;& && &\; $\lambda \cdot 0$&=& $0$ \\
    \hline
    \hline
    \end{tabular}\\
   \begin{tabular}{rclrcl}
       $\lambda \cdot \bot$ & =& $\bot \text{ for } \lambda\neq 0_S$ \quad & \quad $\lambda \cdot (x \sqcup y) $&=&$(\lambda \cdot x) \sqcup (\lambda \cdot y)$ \\
        $x + \bot $&=&$\bot$ \quad & \quad $x + (y \sqcup z) $&=&$ (x + y) \sqcup (x + z)$ \\
   \end{tabular}\\
       \hline
   \end{tabular}
    \label{tab:axiomsinitial}
\end{table}

\begin{notation}
			 We assume the reader to be familiar with monads and their maps.	Given a monad $(M,\eta^M,\mu^M)$ on $\C$, $\EM M$ and $\Kl M$ denote, respectively, the Eilenberg-Moore category and the Kleisli category of $M$. The latter is defined as the category whose objects are the same as $\C$ and a morphism $f \colon X \to Y$ in $\Kl M$ is a morphism $f \colon X \to M(Y)$ in $\C$. We write $\U M \colon \EM M \to \C$ and $\UK M \colon \Kl M \to \C$ for the canonical forgetful functors, and $\F M \colon \C \to \EM M$, $\FK M \colon \C \to \Kl M$ for their respective left adjoints. Recall, in particular, that $\F M (X) = (X,\mu^M_X)$ and, for $f \colon X \to Y$, $\F M(f) = M(f)$. Given $n$ a natural number, we denote by $\natset n$ the set $\{ 1, \dots, n\}$.
\end{notation}

\section{(Weak) Distributive laws}

Given two monads $S$ and $T$ on a category $\C$, is there a way to compose them to form a new monad $ST$ on $\C$? This question was answered by Beck~\cite{beck_distributive_1969} and his theory of \emph{distributive laws}, which are natural transformations $\delta \colon TS \to ST$ satisfying four axioms and that provide a canonical way to endow the composite functor $ST$ with a monad structure. We begin by recalling the classic definition. In the following, let $(T,\eta^T,\mu^T)$ and $(S,\eta^S,\mu^S)$ be two monads on a category $\C$.

\begin{definition}\label{def:distributive law}
	A \emph{distributive law} of the monad $S$ over the monad $T$ is a natural transformation $\delta\colon TS \to ST$ such that the following diagrams commute.
	\begin{equation}\label{eqn:def distributive law}
	\begin{tikzcd}[ampersand replacement=\&,column sep=2em]
	TSS \ar[r,"\delta S"] \ar[d,"T\mu^S"'] \& STS \ar[r,"S\delta"] \& SST \ar[d,"\mu^S T"] \& \& TTS \ar[r,"T\delta"] \ar[d,"\mu^T S"'] \& TST \ar[r,"\delta T"] \& STT \ar[d,"S\mu^T"] \\
	TS \ar[rr,"\delta"] \& \& ST \& \& 	TS \ar[rr,"\delta"] \& \& ST \\
	\& T \ar[dl,"T\eta^S"'] \ar[dr,"\eta^S T"] \& \& \& \& S \ar[dl,"\eta^T S"'] \ar[dr,"S \eta^T"] \\
	TS \ar[rr,"\delta"] \& \& ST \& \& 	TS \ar[rr,"\delta"] \& \& ST
	\end{tikzcd}	
	\end{equation}
\end{definition}

One important result of Beck's theory is the bijective correspondence between distributive laws, liftings to Eilenberg-Moore algebras and extensions to Kleisli categories, in the following sense.

\begin{definition}
		A \emph{lifting} of the monad $S$ to $\EM T$ is a monad $(\tilde S, \eta^{\tilde S}, \mu^{\tilde S} )$ where
		\[
		\begin{tikzcd}
		\EM T \ar[r,"\tilde S"] & \EM T \\
		\C \ar[r,"S"] \ar[u,"\F T"] & \C \ar[u,"\F T"']
		\end{tikzcd} \quad \text{commutes,} \quad \U T \eta^{\tilde S} = \eta^S \U T, \quad \U T \mu^{\tilde S} = \mu^S \U T.
		\]
		An \emph{extension} of the monad $T$ to $\Kl S$ is a monad $(\tilde T, \eta^{\tilde T}, \mu^{\tilde T})$ such that
		\[
		\begin{tikzcd}
		\C \ar[r,"T"] \ar[d,"\FK S"'] & \C \ar[d,"\FK S"] \\
		\Kl S \ar[r,"\tilde T"] & \Kl S
		\end{tikzcd} \quad \text{commutes,} \quad \eta^{\tilde T} \FK S = \FK S \eta^T, \quad \mu^{\tilde T} \FK S = \FK S \mu^T.
		\]
\end{definition}

Böhm~\cite{bohm_weak_2010} and Street~\cite{street_weak_2009} have studied various weaker notions of distributive law; here we shall use the one that consists in dropping the axiom involving $\eta^T$ in Definition~\ref{def:distributive law}, following the approach of Garner~\cite{garner_vietoris_2020}.

\begin{definition}\label{def:weak distributive law}
	 A \emph{weak distributive law of $S$ over $T$} is a natural transformation $\delta \colon TS \to ST$ such that the diagrams in~\eqref{eqn:def distributive law} regarding $\mu^S$, $\mu^T$ and $\eta^S$ commute.
\end{definition}

There are suitable weaker notions of liftings and extensions which also bijectively correspond to weak distributive laws as proved in~\cite{bohm_weak_2010,garner_vietoris_2020}.

\begin{definition}\label{def:weak lifting}
	A \emph{weak lifting} of $S$ to $\EM T$ consists of a monad $(\tilde S,\eta^{\tilde S},\mu^{\tilde S})$ on $\EM{T}$ and two natural transformations
	\[
	\begin{tikzcd}
	\U T \tilde S \ar[r,"\iota"] & S \U T \ar[r,"\pi"] & \U T \tilde S
	\end{tikzcd}
	\]
	such that $\pi \iota = id_{\U T \tilde S}$ and such that the following diagrams commute:
	\begin{equation}\label{eqn:weak lifting diagrams iota}
	\begin{tikzcd}
	\U T \tilde S \tilde S \ar[r,"\iota \tilde S"] \ar[d,"\U T \mu^{\tilde S}"'] & S \U T \tilde S \ar[r,"S \iota"] & S S \U T \ar[d,"\mu^S \U T"] \\
	\U T \tilde S \ar[rr,"\iota"] & & S \U T
	\end{tikzcd}
	\quad
	\begin{tikzcd}
	& \U T \ar[dl,"\U T \eta^{\tilde S}"'] \ar[dr,"\eta^S \U T"] \\
	\U T \tilde S \ar[rr,"\iota"] & & S \U T
	\end{tikzcd}
	\end{equation}
	\begin{equation}\label{eqn:weak lifting diagrams pi}
	\begin{tikzcd}
	S S \U T \ar[r,"S\pi"] \ar[d,"\mu^S \U T"'] & S \U T \tilde S \ar[r,"\pi \tilde S"] & \U T \tilde S \tilde S \ar[d,"\U T \mu^{\tilde S}"] \\
	S \U T \ar[rr,"\pi"] & & \U T \tilde S
	\end{tikzcd}
	\quad
	\begin{tikzcd}
	& \U T \ar[dl,"\eta^S \U T"'] \ar[dr,"\U T \eta^{\tilde S}"] \\
	S \U T \ar[rr,"\pi"] & & \U T \tilde S
	\end{tikzcd}
	\end{equation}
	A \emph{weak extension} of $T$ to $\Kl{S}$ is a functor $\tilde T \colon \Kl S \to \Kl S$ together with a natural transformation $\mu^{\tilde T} \colon \tilde T \tilde T \to \tilde T$ such that $\FK S T = \tilde T \FK S$ and $\mu^{\tilde T} \FK S = \FK S \mu^T$.
\end{definition}

\begin{theorem}[\cite{beck_distributive_1969,bohm_weak_2010,garner_vietoris_2020}]\label{thm:bijective correspondence (weak) distributive laws and (weak) rest}
	There is a bijective correspondence between (weak) distributive laws $TS \to ST$, (weak) liftings of $S$ to $\EM T$ and (weak) extensions of $T$ to $\Kl S$.
\end{theorem}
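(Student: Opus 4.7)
The plan is to establish each of the three correspondences by giving explicit constructions in both directions and checking they are mutually inverse, handling the strong and weak cases in parallel. For the strong case (Beck's original bijection), given a distributive law $\delta \colon TS \to ST$ I would define the lifting on objects by $\tilde S(X,a) = (SX, Sa \circ \delta_X)$ and on morphisms by $\tilde S f = S f$; the four $\delta$-axioms ensure in turn that $Sa \circ \delta_X$ is a $T$-algebra (two axioms on the $T$-side) and that $\eta^{\tilde S}_{(X,a)} = \eta^S_X$ and $\mu^{\tilde S}_{(X,a)} = \mu^S_X$ are algebra morphisms (two axioms on the $S$-side). Symmetrically, I would define the Kleisli extension by $\tilde T X = TX$ on objects and $\tilde T f = \delta_Y \circ T f$ for $f \colon X \to SY$, with $\mu^{\tilde T}$ inherited from $\mu^T$.

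For the reverse directions, given a lifting $\tilde S$ I would extract $\delta_X$ by evaluating the $T$-algebra structure of $\tilde S \F T X$ (whose carrier is $STX$ by the lifting condition $\U T \tilde S = S \U T$) and precomposing with $TS\eta^T_X$; given an extension $\tilde T$, I would define $\delta_X = \tilde T(\mu^S_X)$, using that $\mu^S_X \colon SSX \to SX$ is an identity Kleisli arrow $SX \to X$ in $\Kl S$, so $\tilde T(\mu^S_X)$ is a $\C$-morphism $TSX \to STX$. Each composite (from $\delta$ to a lifting/extension and back, and vice versa) is checked to be the identity by a diagram chase that uses unit and multiplication laws of the involved monad and, in the backward direction, naturality of $\eta^S$ and $\eta^T$.

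For the weak case, the key adjustment is that without the $\eta^T$-axiom the formula $(SX, Sa \circ \delta_X)$ no longer defines a $T$-algebra; instead, the endomorphism $e_X = \delta_X \circ \eta^T_{SX}$ is an idempotent on $SX$, and $\tilde S \F T X$ is obtained by \emph{splitting} this idempotent in $\EM T$ (which exists whenever $\C$ has splittings of idempotents, as in $\Set$). The natural transformations $\iota$ and $\pi$ in Definition~\ref{def:weak lifting} precisely record this splitting: $\pi \iota = \mathit{id}$ captures retraction, and the diagrams~\eqref{eqn:weak lifting diagrams iota}-\eqref{eqn:weak lifting diagrams pi} express that $\eta^{\tilde S}$ and $\mu^{\tilde S}$ are obtained from $\eta^S$ and $\mu^S$ by conjugation with $\iota, \pi$. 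The Kleisli weak extension $\tilde T$ is then defined on morphisms by the same formula $\tilde T f = \delta_Y \circ T f$, but $\tilde T$ is no longer a monad—only the multiplication $\mu^{\tilde T}$ and functoriality are retained, matching Definition~\ref{def:weak lifting}.

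The main obstacle is the weak case: once one removes the $\eta^T$-axiom, the bare composite $(SX, Sa \circ \delta_X)$ stops being a $T$-algebra, so the construction of $\tilde S$ is no longer the trivial transport of structure. One must verify that the splitting of the idempotent $e_X$ is canonical, natural in $X$, and compatible with $\eta^S, \mu^S$ under conjugation by $\iota, \pi$—and dually that, starting from a weak lifting equipped with $(\iota, \pi)$, the formula $\delta_X = \iota_{\F T X} \circ \U T \eta^{\tilde S}_{\F T X}$ (or an equivalent expression via $\pi$) recovers a weak distributive law and that this assignment is inverse to the first. The core computations are the ones carried out in~\cite{bohm_weak_2010,garner_vietoris_2020}; a complete proof essentially amounts to importing their diagram chases, together with Beck's original argument for the strong part.
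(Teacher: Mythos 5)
The paper gives no proof of this statement---it is imported wholesale from \cite{beck_distributive_1969,bohm_weak_2010,garner_vietoris_2020}---so the relevant benchmark is whether your sketch reconstructs the constructions of those references, which are also the ones the paper instantiates later (the idempotent~\eqref{eqn:idempotent e} in Section~\ref{sec:proof}, and the Kleisli map $\id{\P X} \colon \P X \to X$ in Appendix~\ref{Appendix Section 4}). In outline it does: Beck's transport of structure $(X,a) \mapsto (SX, Sa \circ \delta_X)$ and $\tilde T f = \delta_Y \circ Tf$ in the strong case, with the correct allocation of the four axioms; idempotent splitting in the weak case; recovery of $\delta$ from free algebras, respectively from a canonical Kleisli arrow.

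However, three of your explicit formulas do not type-check, and two of them sit exactly where the weak case differs from the strong one, so they are not cosmetic. First, $\delta_X \circ \eta^T_{SX}$ is a map $SX \to STX$, not an endomorphism: the idempotent to split is $Sa \circ \delta_X \circ \eta^T_{SX} \colon SX \to SX$ for a $T$-algebra $(X,a)$ (on free algebras, $S\mu^T_X \circ \delta_{TX} \circ \eta^T_{STX}$), which is precisely the map $e_{(X,a)}$ of~\eqref{eqn:idempotent e}; omitting the algebra structure $Sa$ leaves nothing to split, and it is through $Sa$ that the would-be structure map $Sa \circ \delta_X$---which fails to be a $T$-algebra once the $\eta^T$-axiom is dropped---enters the construction at all. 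Second, $\mu^S_X \colon SSX \to SX$ is, as a Kleisli arrow, a map $SSX \to X$, not an ``identity Kleisli arrow $SX \to X$''; the arrow to which $\tilde T$ must be applied is $\id{SX}$ regarded as a Kleisli map $SX \to X$ (exactly the device used in the proof of Theorem~\ref{thm:existence of delta}), so your $\tilde T(\mu^S_X)$ has domain $TSSX$ rather than $TSX$ and does not yield $\delta_X$. Third, $\iota_{\F{T} X} \circ \U{T} \eta^{\tilde S}_{\F{T} X}$ has type $TX \to STX$: recovering $\delta$ from a weak lifting requires both $\pi$ and the structure map $b$ of $\tilde S \F{T} X$, giving a composite of the shape $\iota_{\F{T} X} \circ b \circ T\pi_{\F{T} X} \circ TS\eta^T_X \colon TSX \to STX$. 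All three slips are repairable along the lines of \cite{bohm_weak_2010,garner_vietoris_2020}, but as written the key maps in both weak-case directions are ill-typed, so the diagram chases you defer to could not even be set up.
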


\section{The Powerset and Semimodule Monads}\label{sec:monad}

\subsubsection{The Monad $\P$.} Let us now consider, as $S$, the \emph{powerset} monad $(\pow,\eta^\P,\mu^\P)$, where $\etaP_X(x)=\{x\}$ and $\muP_X(\mathcal U) = \bigcup_{U \in \mathcal U} U$. Its algebras are precisely the complete semilattices and we have that $\Kl \P$ is isomorphic to the category $\Rel$ of sets and relations. Hence, giving a distributive law $T\P \to \P T$ is the same as giving an extension of $T$ to $\Rel$: for this to happen the notion of weak cartesian functor and natural transformation is crucial.
\begin{definition}
	A functor $T \colon \Set \to \Set$ is said to be \emph{weakly cartesian} if and only if it preserves weak pullbacks. A natural transformation $\phi \colon F \to G$ is said to be \emph{weakly cartesian} if and only if its naturality squares are weak pullbacks.
\end{definition}
Kurz and Velebil~\cite{kurz_relation_2016} proved, using an original argument of Barr~\cite{barr_relational_1970}, that an endofunctor $T$ on $\Set$ has at most one extension to $\Rel$ and this happens precisely when it is weakly cartesian; similarly a natural transformation $\phi \colon F \to G$, with $F$ and $G$ weakly cartesian, has at most one extension $\tilde \phi \colon \tilde F \to \tilde G$, precisely when it is weakly cartesian. The following result is therefore immediate.
\begin{proposition}[{\cite[Corollary 16]{garner_vietoris_2020}}]\label{prop:(weak)distributive law with P iff weakly cartesian}
	For any monad $(T,\eta^T,\mu^T)$ on $\Set$:
	\begin{enumerate}
		\item There exists a unique distributive law of $\P$ over $T$ if and only if $T$, $\eta^T$ and $\mu^T$ are weakly Cartesian.
		\item There exists a unique \emph{weak} distributive law of $\P$ over $T$ if and only if $T$ and $\mu^T$ are weakly Cartesian.
	\end{enumerate}
\end{proposition}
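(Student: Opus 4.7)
The plan is to reduce both items to the Kurz--Velebil/Barr characterisation of functors and natural transformations on $\Set$ that extend to $\Rel$, combined with Theorem~\ref{thm:bijective correspondence (weak) distributive laws and (weak) rest} and the standard isomorphism $\Kl \P \cong \Rel$. Under this isomorphism, a (weak) distributive law $\delta \colon T\P \to \P T$ corresponds to a (weak) extension of $T$ to $\Rel$, so the task reduces to characterising those monads $T$ that admit such an extension.

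For item~(2), a weak extension of $T$ to $\Rel$ consists of a functor $\tilde T$ extending $T$, together with a natural transformation $\mu^{\tilde T}$ extending $\mu^T$. By Kurz--Velebil, the former exists (and is then unique) precisely when $T$ is weakly cartesian, and the latter exists (and is then unique) precisely when $\mu^T$ is weakly cartesian. Assuming both, one still must check that the extended $\mu^{\tilde T}$ is associative on $\Rel$: but both sides of the associativity square are extensions of the common $\Set$-level natural transformation $\mu^T \circ T\mu^T = \mu^T \circ \mu^T T$, so uniqueness of the extension of that natural transformation forces the two sides to agree. Conversely, the existence of any weak extension directly provides extensions of $T$ and $\mu^T$ to $\Rel$, which by Kurz--Velebil must then be weakly cartesian.

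Item~(1) is proved by the same argument, additionally requiring $\eta^T$ to extend to $\Rel$; by Kurz--Velebil this is equivalent to $\eta^T$ being weakly cartesian, and the unit axioms for the resulting monad again follow automatically from uniqueness of extensions. Uniqueness of the distributive law in both items is immediate from the bijection in Theorem~\ref{thm:bijective correspondence (weak) distributive laws and (weak) rest} together with the uniqueness of extensions to $\Rel$. The only substantive ingredient is the Kurz--Velebil/Barr theorem quoted just before the proposition; the mildly delicate (but routine) step I expect to have to make explicit is that the monad axioms for the extended data are automatic, because each side of each axiom is a legitimate extension of a single $\Set$-level natural transformation, hence coincide by the \emph{at most one extension} clause of Kurz--Velebil.
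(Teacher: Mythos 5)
Your proposal is correct and follows essentially the same route as the paper, which derives the proposition immediately from the Kurz--Velebil/Barr characterisation of extensions to $\Rel$ together with the bijection of Theorem~\ref{thm:bijective correspondence (weak) distributive laws and (weak) rest} and $\Kl\P \cong \Rel$ (the paper simply cites Garner's Corollary~16 and calls the result ``immediate''). Your added observation that the monad axioms for the extended data are automatic by uniqueness of extensions is exactly the routine step the paper leaves implicit.
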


\subsubsection{The Monad $\mon S$.} Recall that a \emph{semiring} is a tuple $(S,+,\cdot,0,1)$ such that $(S,+,0)$ is a commutative monoid, $(S,\cdot,1)$ is a monoid, $\cdot$ distributes over $+$ and $0$ is an annihilating element for $\cdot$. In other words, a semiring is a ring where not every element has an additive inverse. Natural numbers $\N$ with the usual operations of addition and multiplication form a semiring. Similarly, integers, rationals and reals form semirings. Also the booleans $\Bool=\{0,1\}$ with $\vee$ and $\wedge$ acting as $+$ and $\cdot$, respectively, form a semiring. 

Every semiring $S$ generates a \emph{semimodule} monad $\S$ on $\Set$ as follows. Given a set $X$, $\S(X) = \{ \phi \colon X \to S \mid \supp \phi \text{ finite} \}$, where $\supp \phi = \{ x \in X \mid \phi(x) \ne 0 \}$. For $f \colon X \to Y$, define for all $\phi\in \S (X)$
\[
\S(f)(\phi) = \Bigl( y \mapsto \sum_{x \in f^{-1}\{y\}} \phi(x) \Bigr) \colon Y \to S.
\]
This makes $\S$ a functor. The unit $\etaS_X \colon X \to \S(X)$ is given by $\etaS_X(x)=\Delta_x$, where $\Delta_x$ is the Dirac function centred in $x$, while the multiplication $\muS_X \colon \S^2(X) \to \S(X)$ is defined for all $\Psi\in \S^2(X)$ as
\[
\muS_X(\Psi) = \Bigl( x \mapsto \sum_{\phi \in \supp \Psi} \Psi(\phi) \cdot \phi(x)  \Bigr) \colon X \to S.
\]
An algebra for $\S$ is precisely a \emph{left-S-semimodule}, namely a set $X$ equipped with a binary operation $+$, an element $0$ and a unary operation $\lambda \cdot$ for each $\lambda\in S$, satisfying the equations in Table~\ref{tab:axiomsinitial}.
Indeed, if $X$ carries a semimodule structure then one can define a map $a \colon \S X \to X$ as, for $\phi \in \mon S X$,
\begin{equation}\label{eq:S-algebra associated to semimodule}
a(\phi) = \sum_{x \in X} \phi(x) \cdot x
\end{equation}
where the above sum is finite because so is $\supp \phi$. Vice versa, if $(X,a)$ is an $\mon S$-algebra, then the corresponding left-semimodule structure on $X$ is obtained by defining for all $\lambda \in S$ and $x,y \in X$
\begin{equation}\label{eqn:semimodule associated to S-algebra}    x+^a y = a (x \mapsto 1, y \mapsto 1), \qquad
	0^a = a(\zero), \qquad
	\lambda \cdot^a x = a (x\mapsto \lambda).
\end{equation}
Above and in the remainder of the paper, we write the list $(x_1\mapsto s_1, \dots, x_n \mapsto s_n)$ for the only function $\phi\colon X \to S$ with support $\{x_1,\dots , x_n\}$ mapping $x_i$ to $s_i$ and we write the empty list $\zero$ for the function constant to $0$. For instance, for $a=\muS_X \colon \mon S \mon S X \to \mon S X$, the left-semimodule structure is defined for all $\phi_1,\phi_2 \in \mon S X$ and $x\in X$ as
\[(\phi_1+^{\mu^{\mon S}}\phi_2)(x)=\phi_1(x)+\phi_2(x), \qquad 0^{\mu^{\mon S}}(x)=0, \qquad (\lambda \cdot^{\mu^{\mon S}} \phi_1 )(x)= \lambda \cdot \phi_1( x).\]

Proposition~\ref{prop:(weak)distributive law with P iff weakly cartesian} tells us exactly when a (weak) distributive law of the form $T\P \to \P T$ exists for an arbitrary monad $T$ on $\Set$. Take then $T=\S$: when are the functor $\S$ and the natural transformations $\etaS$ and $\muS$ weakly cartesian? The answer has been given in~\cite{clementino_monads_2014} (see also~\cite{gumm_monoid-labeled_2001}), where a complete characterisation in purely algebraic properties for $S$ is provided. In Table~\ref{tab:semiring properties} we recall such properties.

\begin{table}[t]
    \centering
     \caption{Definition of some properties of a semiring $S$. Here $a,b,c,d \in S$.}
    \begin{tabular}{|c|p{30em}|}
    \hline
    Positive     & $a+b=0 \implies a=0=b$ \\ \hline
    Semifield    & $a \ne 0 \implies \exists x \ldotp a \cdot x = x \cdot a = 1$ \\ \hline
    Refinable    & $a+b=c+d \implies \exists x,y,z,t \ldotp x+y=a,\, z+t=b, \, x+z=c, \,y+t = d$ \\ \hline
    (A) & $a+b=1 \implies a = 0$ or $b=0$ \\ \hline
    (B) & $a \cdot b = 0 \implies a=0 \text{ or } b=0$ \\ \hline
    (C) & $a+c=b+c \implies a = b$ \\ \hline
    (D) & $\forall a,b$. $\exists x \ldotp a+x = b$ or $b+x = a$\\ \hline
    (E) & $a+b = c \cdot d \implies \exists t \colon \{ (x,y) \in S^2 \mid x+y=d  \} \to S$ such that \newline
    $
    \sum\limits_{x+y=d} t(x,y) x = a, \quad \sum\limits_{x+y=d} t(x,y)y=b, \quad \sum\limits_{x+y=d} t(x,y)=c.
    $ \\ \hline
    \end{tabular}
    \label{tab:semiring properties}
\end{table}

\begin{theorem}[{\cite{clementino_monads_2014}}]\label{thm:S,etaS,muS weakly cartesian iff}
	Let $S$ be a semiring.
	\begin{enumerate}
		\item The functor $\S$ is weakly cartesian if and only if $S$ is positive and refinable.
		\item $\etaS$ is weakly cartesian if and only if $S$ enjoys \emph{(A)} in Table \ref{tab:semiring properties}.
		\item\label{condition:muS weakly cartesian iff} If $\S$ is weakly cartesian, then $\muS$ is weakly cartesian if and only if $S$ enjoys \emph{(B)} and \emph{(E)} in Table \ref{tab:semiring properties}.
	\end{enumerate}
\end{theorem}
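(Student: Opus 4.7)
In $\Set$, a natural transformation is weakly cartesian iff every naturality square, viewed as a span, admits at least one filler over each compatible pair of elements; a functor is weakly cartesian iff it sends pullback squares to weak pullbacks. My plan is to (a) make this filler condition concrete for $\S$ by computing in terms of finitely supported functions, (b) derive necessity of the algebraic conditions (positivity, refinability, (A), (B), (E)) by instantiating the relevant square on small test diagrams between sets of size at most three, and (c) prove sufficiency by constructing fillers algorithmically from the supplied algebraic data.

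\textbf{Parts (1) and (2).} For (1), I would test the functor $\S$ on the pullback of the codiagonal $\nabla \colon \natset 2 \to \natset 1$ against itself: a filler for the resulting square over compatible $\phi_1, \phi_2 \in \S(\natset 2)$ with $\phi_1(1) + \phi_1(2) = \phi_2(1) + \phi_2(2)$ is exactly a refinement of a decomposition $a+b=c+d$, yielding refinability. Instantiating instead on the pullback of $\nabla$ with the empty map $\emptyset \hookrightarrow \natset 1$ forces positivity: the only filler over $(\phi,\zero)$ with $\phi(1)+\phi(2)=0$ must satisfy $\phi = \zero$. Sufficiency is the converse construction: given positivity and refinability, fillers for an arbitrary pullback are built by iteratively splitting supports along refinements of the $S$-coefficients and discarding summands forced to vanish. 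Part (2) is immediate: a $\phi \in \S(X)$ with $\S(f)(\phi) = \Delta_y$ has fibre-sums equal to $1$ over $y$ and $0$ elsewhere, so filling the naturality square of $\etaS$ at $f$ over a compatible pair $(x,\Delta_y)$ reduces to showing $\phi$ must itself be a Dirac, which is equivalent to condition (A) (a non-Dirac $\phi$ produces a non-trivial decomposition $a+b=1$ witnessing failure).

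\textbf{Part (3).} This is the heart of the theorem. I would first unpack the naturality square of $\muS$ at $f \colon X \to Y$: a compatible pair is $\Phi \in \S\S(X)$ and $\Psi \in \S\S(Y)$ whose $\muS$-images agree after applying $\S(f)$, and a filler is a $\Phi' \in \S\S(X)$ with $\S\S(f)(\Phi') = \Psi$ and $\muS(\Phi') = \muS(\Phi)$. Necessity of (B) and (E) falls out of specific small test diagrams: (B) rules out zero divisors that would let phantom terms in the outer layer escape detection by $\muS$, while (E) is literally the refinement datum needed to realise an equation $a+b = c \cdot d$ as a family of coefficients $t(x,y)$ specifying a filler. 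The hard part, and where I would budget most of the effort, is the reverse direction: assuming positivity, refinability, (B), and (E), assemble a filler for an arbitrary naturality square of $\muS$. This demands a two-level refinement --- apply (E) pointwise to split each outer coefficient $\Psi(\psi)$ into pieces aligned with the inner decompositions produced by part (1), then reconcile these local choices at the outer layer --- and verifying that the resulting double refinement assembles into a well-defined $\Phi' \in \S\S(X)$ with the correct projections is the principal combinatorial obstacle.
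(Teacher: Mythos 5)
Before anything else: the paper does not prove this statement --- it is imported verbatim from Clementino, Hofmann and Janelidze \cite{clementino_monads_2014}, as the theorem's own citation indicates, so there is no in-paper proof to measure you against; what you attempted is a reconstruction of that external result. Judged on its own terms, your text correctly identifies the shape of the problem but is not a proof, because the sufficiency directions are never carried out. For (1), ``fillers are built by iteratively splitting supports along refinements'' is a restatement of what must be proved, not an argument: one needs the $m \times n$ refinement property (a matrix $(c_{ij})$ with prescribed row sums $a_i$ and column sums $b_j$ whenever $\sum_i a_i = \sum_j b_j$), obtained from the $2 \times 2$ case by induction, together with positivity to kill coefficients over empty fibres, and then an induction over supports for an arbitrary weak pullback. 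For (3) you explicitly defer the ``two-level refinement'' reconciling the (E)-splittings of the outer coefficients with the inner decompositions, calling it the principal combinatorial obstacle --- but that obstacle \emph{is} the theorem; it is also where the finiteness issues live, since the function $t$ in (E) is defined on the possibly infinite set $\{(x,y) \in S^2 \mid x+y=d\}$ and one must verify that the assembled $\Phi' \in \S\S X$ is finitely supported with the correct marginals. Your necessity tests (the square at $\nabla \colon \natset 2 \to \natset 1$ for refinability, the square against $\emptyset \to \natset 1$ for positivity, reading off the datum $t$ for (E)) are fine, but they are the routine half; a proof that stops where the difficulty begins is a plan.

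There is also a genuine error in your part (2). Compatibility over a pair $(\phi, y)$ in the naturality square of $\etaS$ at $f$ says that the fibrewise sums of $\phi$ are $1$ over $y$ and $0$ over every $y' \ne y$; your claim that a non-Dirac $\phi$ ``produces a non-trivial decomposition $a+b=1$'' only sees the fibre over $y$ and silently assumes that the zero-sum fibres carry zero coefficients --- which is exactly positivity, and (A) does not imply positivity. Concretely, $S = \mathbb Z / 2$ satisfies (A), yet taking $f(x_1)=y_1$, $f(x_2)=f(x_3)=y_2$ and $\phi = (x_1 \mapsto 1,\, x_2 \mapsto 1,\, x_3 \mapsto 1)$ gives $\S f(\phi) = \Delta_{y_1}$ while $\phi$ is no Dirac, so no filler exists. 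Hence your reduction of weak cartesianness of $\etaS$ to (A) is incomplete as stated: it needs positivity as an ambient hypothesis (which is how the source must be read --- compare the explicit proviso ``if $\S$ is weakly cartesian'' in item (3), weak cartesianness of $\S$ entailing positivity by item (1)), or else a repaired argument handling the fibres away from $y$. Similarly, your necessity sketch for (B) in part (3) (``phantom terms escaping detection by $\muS$'') names no test diagram and verifies nothing; unlike your other necessity claims, it is not yet even a computation.
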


\begin{remark}\label{rem:S positive refinable semifield then ok}
In~\cite[Proposition 9.1]{clementino_monads_2014} it is proved that if $S$ enjoys (C) and (D), then $S$ is refinable; if $S$ is a positive semifield, then it enjoys (B) and (E). In the next Proposition we prove that if $S$ is a positive semifield then it is also refinable, hence $\S$ and $\mu^\S$ are weakly cartesian.
\end{remark}

\begin{proposition}\label{prop:positive semifield implies refinable}
If $S$ is a positive semifield, then it is refinable.
\end{proposition}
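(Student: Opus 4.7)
\medskip

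My plan is to build the witnesses $x,y,z,t$ directly using multiplicative inverses, which exist in a semifield. Given $a+b = c+d$, I split on whether the common value is zero.

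First I would handle the degenerate case. If $c+d = 0$, then positivity gives $c = 0 = d$, so $a+b = 0$ and positivity again yields $a = 0 = b$; then $x = y = z = t = 0$ trivially satisfies all four refinability equations.

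Next, assuming $c+d \neq 0$, I would set $s := (c+d)^{-1}$, which exists because $S$ is a semifield, and then define
\[
x := a\, s\, c, \qquad y := a\, s\, d, \qquad z := b\, s\, c, \qquad t := b\, s\, d.
\]
The verification is a direct computation using $s(c+d) = 1 = (c+d)s$, distributivity of $\cdot$ over $+$, and the hypothesis $a+b = c+d$. For instance,
\[
x + y = a s c + a s d = a s (c+d) = a \cdot 1 = a,
\]
and
\[
x + z = a s c + b s c = (a+b) s c = (c+d) s c = 1 \cdot c = c,
\]
and symmetrically $z + t = b$ and $y + t = d$. Notice that I am careful to multiply on the correct side: $s$ always sits between the $\{a,b\}$-factor and the $\{c,d\}$-factor, which makes the argument go through without assuming commutativity of $\cdot$.

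The main (very mild) obstacle is simply guessing the right ansatz for $x,y,z,t$: once one thinks of a refinement as a two-dimensional ``proportional splitting'' of the sum $a+b = c+d$ into a $2\times 2$ table with marginals $(a,b)$ and $(c,d)$, the choice $a s c$, $a s d$, $b s c$, $b s d$ is the canonical one, and the verification is a matter of applying the semimodule axioms from Table~\ref{tab:axiomsinitial} (applied with $S$ acting on itself).
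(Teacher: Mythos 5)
Your proof is correct and follows essentially the same route as the paper's: the same case split on whether the common sum is zero (using positivity there) and the same witnesses $x = a(c+d)^{-1}c$, $y = a(c+d)^{-1}d$, $z = b(c+d)^{-1}c$, $t = b(c+d)^{-1}d$, which the paper writes as fractions $\frac{ac}{c+d}$, etc. Your explicit placement of the inverse between the factors is a nice touch that makes the argument visibly valid without assuming commutativity of $\cdot$, but it is the same proof.
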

\begin{proof}
Let $a$, $b$, $c$ and $d$ in $S$ be such that $a+b=c+d$. If $a+b=0$, then take $x=y=z=t=0$, otherwise take
\[
x=\frac{ac}{c+d}, \quad y = \frac{ad}{c+d}, \quad z=\frac{bc}{c+d}, \quad t=\frac{bd}{c+d}.
\]
Then $x+y=a,\, z+t=b, \, x+z=c, \,y+t = d$.\qed
\end{proof}


\begin{example}\label{example:distributive law for N}
	It is known that, for $\S=\N$,  a distributive law $\delta \colon \mon S \P \to \P \mon S$ exists. Indeed one can check  that all conditions of Theorem~\ref{thm:S,etaS,muS weakly cartesian iff} are satisfied, therefore we can apply Proposition~\ref{prop:(weak)distributive law with P iff weakly cartesian}.1. In this case, the monad $\mon S X$ is naturally isomorphic to the commutative monoid monad, which given a set $X$ returns 
	the collection of all \emph{multisets} of elements of $X$. 
	The law $\delta$ is well known (see e.g.~\cite{garner_vietoris_2020,DBLP:journals/entcs/HylandNPR06}): given a multiset $\langle A_1,\dots,A_n \rangle$ of subsets of $X$ in $\mon S \pow X$, where the $A_i$'s need not be distinct, it returns the set of multisets
		$\{ \langle a_1, \dots, a_n \rangle \mid a_i\in A_i \}$.
\end{example}

\subsubsection{Convex Subsets of Left-semimodules.}
Theorem~\ref{thm:S,etaS,muS weakly cartesian iff} together with Proposition~\ref{prop:(weak)distributive law with P iff weakly cartesian}.1 tell us that whenever the element $1$ of $S$ can be decomposed as a non-trivial sum there is no distributive law $\delta \colon \mon S \P \to \P \mon S$. Semirings with this property abound, for example $\mathbb Q$, $\mathbb R$, $\Rp$ with the usual operations of sum an multiplication, as well as $\Bool$ (since $1 \vee 1 = 1$). Such semirings are precisely those for which the notion of \emph{convex subset} of their left-semimodules is non-trivial. For the existence of a \emph{weak} distributive law, however, this condition on $1_S$ is not required: convexity will indeed play a crucial role in the definition of the weak distributive law.

\begin{definition}\label{def:convex closure SEMIMODULE}
	Let $S$ be a semiring, $X$ an $S$-left-semimodule and $A \subseteq X$. The \emph{convex closure} of $A$ is the set
	\[
	\convclos{A}{} =\left\{ \sum_{i=1}^n \lambda_i \cdot a_i \mid n \in \N,\, a_i \in A,\,  \sum_{i=1}^n \lambda_i =1 \right\} \subseteq X.
	\]
	The set $A$ is said to be \emph{convex} if and only if $A = \convclos A {}$.
\end{definition}

Recalling that the category of $S$-left-semimodules is isomorphic to $\EM{\mon S}$, we can use~\eqref{eq:S-algebra associated to semimodule} to translate Definition~\ref{def:convex closure SEMIMODULE} of convex subset of a semimodule into the following notion of convex subset of a $\mon S$-algebra $a\colon \mon S X \to X$.
\begin{definition}\label{def:convex closure S-ALGEBRA}
	Let $S$ be a semiring, $(X,a) \in \EM{\mon S}$, $A \subseteq X$. The \emph{convex closure} of $A$ in $(X,a)$ is the set
	\[
	\convclos A a = \left\{ a(\phi) \mid \phi \in \mon S X, \supp \phi \subseteq A, \sum_{x \in X} \phi(x) =1 \right\}.
	\]
	$A$ is said to be \emph{convex} in $(X,a)$ if and only if $A = \convclos A a$. We denote by $\ConvPow X a$ the set of convex subsets of $X$ with respect to $a$.
\end{definition}

\begin{remark}
	Observe that $\emptyset$ is convex, because $\convclos \emptyset a = \emptyset$, since there is no $\phi \in \mon S X$ with empty support such that $\sum_{x \in X} \phi(x) = 1$.
\end{remark}

\begin{example}\label{ex:convex subsets of N-semimodules}
Suppose $S$ is such that $\etaS$ is weakly cartesian (equivalently (A) holds: $x+y=1 \implies x=0$ or $y=0$), for example $S = \N$, and let $(X,a) \in \EM{\mon S}$. A $\phi \in \mon S X$ such that $\sum_{x \in X} \phi(x)=1$ and $\supp \phi \subseteq A$ is a function that assigns $1$ to \emph{exactly one} element of $A$ and $0$ to all the other elements of $X$. These functions are precisely all the $\Delta_x$ for those elements $x\in A$. Since $a\colon \mon S X \to X$ is a structure map for an $\mon S$-algebra, it maps the function $\Delta_x$ into $x$. Therefore $\convclos A a =\{a(\Delta_x) \mid x\in A\} = \{x \mid x \in A\} = A$. Thus \emph{all}  $A\in \pow \mon S X$ are convex.
\end{example}

\begin{example}\label{ex:S=Bool, convex subsets are directed}
	When $S = \Bool$, we have that $\S$ is naturally isomorphic to $\Pf$, the finite powerset monad, whose algebras are idempotent commutative monoids or equivalently semilattices with a bottom element. So, for $(X,a) \in \EM{\mon S}$, a $\phi \in \mon S X$ such that $\sum_{x \in X} \phi(x)=1$ and $\supp \phi \subseteq A$ is any finitely supported function from $X$ to $\Bool$ that assigns $1$ to at least one element of $A$. Intuitively, such a $\phi$ selects a non-empty finite subset of $A$, then  $a(\phi)$ takes the join of all the selected elements. Thus, $\convclos A a$ adds to $A$ all the possible joins of non-empty finite subsets of $A$: $A$ is convex if and only if it is closed under binary joins.
\end{example}

\section{The Weak Distributive Law $\delta \colon \S \P \to \P \S$}\label{sec:the weak distributive law}

Weak extensions of $\mon S$ to $\Kl\P = \Rel$ only consist of extensions of the functor $\mon S$ and of the multiplication $\muS$, for which necessary and sufficient conditions are listed in Theorem~\ref{thm:S,etaS,muS weakly cartesian iff}. Hence for semirings $S$ satisfying those criteria a weak distributive law $\delta \colon \mon S \P \to \P \mon S$ does exist, and it is unique because there is only one extension of the functor $\mon S$ to $\Rel$.

\begin{theorem}\label{thm:existence of delta} 
	Let $S$ be a positive, refinable semiring satisfying \emph{(B)} and \emph{(E)} in Table~\ref{tab:semiring properties}. Then there exists a unique weak distributive law $\delta \colon \mon S \P \to \P \mon S$ defined for all sets $X$ and $\Phi \in \mon S \pow X$ as:
		\begin{equation}\label{eqn:def of delta}
	\delta_X (\Phi) = \Biggl\{ \phi \in \mon SX \mid \exists \psi \in \mon S (\ni_X) \ldotp \begin{cases}
	\forall A \in \pow X \ldotp \Phi(A) = \sum\limits_{x \in A} \psi(A,x) & (a)\\
	\forall x \in X \ldotp \phi(x) = \sum\limits_{A \ni x} \psi(A,x) & (b)
	\end{cases}  
	\Biggr\}
	\end{equation}
	where ${}\ni_X{}$ is the set $\{ (A,x) \in \pow X \times X \mid x \in A \}$.
\end{theorem}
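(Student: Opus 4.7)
The plan is to invoke the earlier results for existence and uniqueness, and then read off the explicit formula by applying the unique Barr extension of $\S$ to the membership relation.

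For existence and uniqueness, under the stated hypotheses Theorem~\ref{thm:S,etaS,muS weakly cartesian iff} gives that both $\S$ and $\muS$ are weakly cartesian: the former by positivity and refinability, the latter by (B) and (E). Proposition~\ref{prop:(weak)distributive law with P iff weakly cartesian}.2 then produces a unique weak distributive law $\delta \colon \S\P \to \P\S$, which by Theorem~\ref{thm:bijective correspondence (weak) distributive laws and (weak) rest} corresponds to a unique weak extension $\tilde\S \colon \Rel \to \Rel$ of $\S$.

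To pin down the formula for $\delta$, I exploit the fact that on $\Set$ a weakly cartesian $\S$ admits only one extension to $\Rel$, given by the Barr construction: for a relation $R \subseteq Y \times Z$ with projections $\pi_1, \pi_2$,
\[
\tilde\S(R) = \{(\S\pi_1(\psi),\,\S\pi_2(\psi)) \mid \psi \in \S R\} \subseteq \S Y \times \S Z.
\]
I then recover $\delta_X$ via the standard recipe from Beck's correspondence: $\id{\P X} \colon \P X \to \P X$ in $\Set$, viewed as a Kleisli morphism $\P X \to X$ in $\Kl\P \cong \Rel$, is precisely the membership relation $\ni_X$. Since the extension of a Kleisli morphism $f \colon Y \to \P Z$ is $\delta_Z \circ \S f$, taking $f = \id{\P X}$ yields $\tilde\S(\id{\P X}) = \delta_X$. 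Hence the graph of $\delta_X \colon \S\P X \to \P\S X$, seen as a subset of $\S\P X \times \S X$, coincides with $\tilde\S(\ni_X)$.

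It only remains to unfold the latter. For $\pi_1(A,x)=A$, $\pi_2(A,x)=x$ and $\psi \in \S(\ni_X)$, the action of $\S$ on morphisms gives
\[
\S\pi_1(\psi)(A) = \sum_{x \in A} \psi(A,x), \qquad \S\pi_2(\psi)(x) = \sum_{A \ni x} \psi(A,x),
\]
which are exactly conditions $(a)$ and $(b)$ in~\eqref{eqn:def of delta}. I expect this short unfolding of $\S$ on the two projections of $\ni_X$ to be the only non-routine step; everything else is a direct appeal to the cited theorems and to the Barr--Kurz--Velebil description of the unique relational extension of a weakly cartesian endofunctor of $\Set$.
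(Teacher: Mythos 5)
Your proposal is correct and follows essentially the same route as the paper's own proof: the paper likewise obtains $\delta_X$ by applying the unique Barr extension $\widetilde{\S}$ of $\S$ to the membership relation $\ni_X$ (viewed as the Kleisli map $\id{\pow X} \colon \pow X \to X$ in $\Kl\P \cong \Rel$) and then unfolds the action of $\S$ on the two projections of $\ni_X$ to obtain exactly conditions $(a)$ and $(b)$. The only cosmetic difference is that the paper writes the Barr extension as the relational composite $\Graph{\S(\pi_B)} \circ \Graph{\S(\pi_A)}^{-1}$, which is precisely the span description $\{(\S\pi_1(\psi),\S\pi_2(\psi)) \mid \psi \in \S R\}$ that you use.
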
 
The above $\delta$, which is obtained by following the standard recipe of Proposition~\ref{prop:(weak)distributive law with P iff weakly cartesian} (see the proof in Appendix~\ref{Appendix Section 4}), 
is illustrated by the following example.

\begin{example}\label{ex:working example delta=delta' part 1}
Take $S=\Rp$ with the usual operations of sum and multiplication. Consider $X=\{x,y,z,a,b\}$, $A_1 =\{x,y\}$, $A_2=\{y,z\}$ and $A_3=\{a,b\}$. Let $\Phi \in \mon S (\pow X)$ be defined as
	\[\Phi = ( A_1 \mapsto 5, \quad A_2 \mapsto 9, \quad A_3 \mapsto 13)\]
	and $\Phi(A)=0$ for all other sets $A\subseteq X$, so $\supp \Phi = \{A_1,A_2,A_3\}$. In order to find an element $\phi \in \delta_X(\Phi)$, we can first take a $\psi\in \mon S (\ni_X)$ satisfying  condition (a) in \eqref{eqn:def of delta} and then compute the $\phi\in \mon S X$ using condition (b).
	
	Among the $\psi \in \mon S (\ni_X)$, consider for instance the following:
	\[
	\psi = \left(  
	\begin{array}{rclrclrcl}
	     (A_1,x)&\mapsto& 2 \quad & (A_2,y)&\mapsto& 4 \quad (A_3,a)&\mapsto& 6  \\
	     (A_1,y)&\mapsto& 3 \quad & (A_2,z)&\mapsto& 5 \quad (A_3,b)&\mapsto& 7  & 
	\end{array}
	\right).
	\]
Since $\Phi (A_1) = \psi(A_1,x) + \psi(A_1,y)$, $\Phi(A_2)=\psi(A_2,y) + \psi(A_2,z)$ and $\Phi(A_3)=\psi(A_3,a) + \psi(A_3, b)$, we have that $\psi$ satisfies condition (a) in \eqref{eqn:def of delta}. Condition (b) forces $\phi$ to be the following:
	\[\phi = ( x \mapsto 2, \quad y \mapsto 3+4, \quad z \mapsto 5, \quad a \mapsto 6, \quad b\mapsto 7).\]
\end{example}

\begin{remark}
	If $S$ enjoys (A) in Table~\ref{tab:semiring properties}, then the transformation $\delta$ given in~(\ref{eqn:def of delta}) is actually a distributive law, and for $S=\N$ we recover the well-known $\delta$ of Example~\ref{example:distributive law for N}. Example~\ref{ex:working example delta=delta' part 1} can be repeated with $S=\N$: then $\Phi$ is the multiset where the set $A_1$ occurs five times, $A_2$ nine times and $A_3$ thirteen times. The elements of $\delta_X(\Phi)$ are all those multisets containing one element per copy of $A_1$, $A_2$ and $A_3$ in $\supp\Phi$. The $\phi$ provided indeed contains five elements of $A_1$ (two copies of $x$ and three of $y$), nine elements of $A_2$ (four copies of $y$ and five of $z$), thirteen elements of $A_3$ (six copies of $a$ and seven of $b$).
\end{remark}

As Example~\ref{ex:working example delta=delta' part 1} shows, each element $\phi$ of $\delta_X(\Phi)$ is determined by a function $\psi$ choosing for each set $A \in \supp \Phi$ a finite number of elements $x^A_1,\dots,x^A_m$ in $A$ and $s^A_1,\dots,s^A_m$ in $S$ in such a way that $\sum_{j=1}^m s^A_j = \Phi(A)$. The function $\phi$ maps each $x^A_j$ to $s^A_j$ if the sets in $\supp \Phi$ are \emph{disjoint}; if however there are $x^A_{j}$ and $x^B_k$ such that $x^A_j=x^B_k$ (like $y$ in Example~\ref{ex:working example delta=delta' part 1}), then $x^A_j$ is mapped to $s^A_j + s^B_k$. 

Among those $\psi$'s, there are some special, \emph{minimal} ones as it were, that choose for each $A$ in $\supp \Phi$ exactly \emph{one} element of $A$, and assign to it $\Phi(A)$. The induced $\phi$ in $\delta_X(\Phi)$ can be described as $\sum_{A\in u^{-1}\{x\}} \Phi(A)$ (equivalently $\mon{S}(u)(\Phi)$\footnote{More precisely, we should write $\S(u)(\Phi')$ where $\Phi'$ is the restriction of $\Phi$ to $\supp \Phi$.}) where $u\colon \supp \Phi \to X$ is a function selecting an element of $A$ for each $A\in \supp \Phi$ (that is $u(A)\in A$). We denote the set of such $\phi$'s by $\choice \Phi$.
\begin{equation}\label{eqn:c(Phi)}
\choice \Phi =\{ \mon{S}(u)(\Phi) \mid u \colon \supp \Phi \to X \text{ such that } \forall A \in \supp \Phi \ldotp u(A)\in A \}
\end{equation}
\begin{example}
Take $X$, $A_1$ and $A_2$ as in Example~\ref{ex:working example delta=delta' part 1}, but a different, smaller, $\Phi\in  \mon S (\pow X)$ defined as
$\Phi = ( A_1 \mapsto 1, \quad A_2 \mapsto 2)$.
There are only four functions $u\colon \supp \Phi \to X$ such that $u(A)\in A$ and thus only four  functions $\phi$ in $\choice \Phi$:
\[\begin{array}{c|l}
     u_1=(A_1 \mapsto x, \quad A_2 \mapsto y) \quad & \quad  \phi_1 = (x \mapsto 1, \; y \mapsto 2)   \\
     u_2=(A_1 \mapsto x, \quad A_2 \mapsto z) \quad & \quad  \phi_2 = (x \mapsto 1, \; z \mapsto 2)   \\
     u_3=(A_1 \mapsto y, \quad A_2 \mapsto y) \quad & \quad  \phi_3 = ( y \mapsto 3)   \\
     u_4=(A_1 \mapsto y, \quad A_2 \mapsto z) \quad & \quad  \phi_4 = (y \mapsto 1, \; z \mapsto 2)  
\end{array}\]
Observe that the function $\phi= (x \mapsto 1, y\mapsto 1, z\mapsto 1)$ belongs to $\delta_X(\Phi)$ but not to $\choice \Phi$. Nevertheless $\phi$ can be retrieved as the convex combination $\frac{1}{2}\cdot \phi_1 + \frac{1}{2} \cdot \phi_2$.
\end{example}

Our key result states that every $\phi \in \delta_X(\Phi)$ can be written as a convex combination (performed in the $\S$-algebra $(\S X,\muS_X)$) of functions in $\choice\Phi$, at least when $S$ is a positive semifield, which by Remark~\ref{rem:S positive refinable semifield then ok} and Proposition~\ref{prop:positive semifield implies refinable} satisfies all the conditions that make (\ref{eqn:def of delta}) a weak distributive law. The proof is laborious and can be found in Appendix~\ref{Appendix Section 4};
we only remark that divisions in $S$ play a crucial role in it.
\begin{theorem}\label{thm:delta for positive refinable semifields}
	Let $S$ be a positive semifield. Then for all sets $X$ and $\Phi \in \S \P X$
	\begin{equation}\label{eqn:delta for positive refinable semifields}
	\delta_X(\Phi)=\left\{ \muS_X(\Psi) \mid \Psi \in \mon S^2 X\ldotp \sum\limits_{\phi \in \mon S X} \Psi(\phi)=1, \, \supp \Psi \subseteq \choice\Phi \right\} = \convclos{\choice \Phi}{\muS_X}.
	\end{equation}
\end{theorem}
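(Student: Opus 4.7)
The plan is to handle the second equality almost instantly and then focus on the first. The second equality in~\eqref{eqn:delta for positive refinable semifields} is exactly Definition~\ref{def:convex closure S-ALGEBRA} applied to the $\S$-algebra $(\S X,\muS_X)$ and the subset $\choice \Phi \subseteq \S X$, so nothing is to be done there. The substantive content is the first equality, which I would prove as two inclusions.

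For the easy inclusion $\supseteq$, take $\Psi \in \S^2 X$ with $\sum_\phi \Psi(\phi) = 1$ and $\supp \Psi \subseteq \choice \Phi$. Enumerate $\supp \Psi = \{\S(u_k)(\Phi)\}_{k=1}^n$ with choice functions $u_k\colon \supp \Phi \to X$ as in~\eqref{eqn:c(Phi)}, set $\lambda_k = \Psi(\S(u_k)(\Phi))$, and build a witness $\psi\in \S(\ni_X)$ for $\muS_X(\Psi) \in \delta_X(\Phi)$ by
$$\psi(A,x) = \Phi(A) \sum_{k:\, u_k(A)=x} \lambda_k.$$
Condition (a) of~\eqref{eqn:def of delta} reduces to $\Phi(A)\sum_k \lambda_k = \Phi(A)$; condition (b) unwinds, using $\S(u_k)(\Phi)(x) = \sum_{A:\, u_k(A)=x}\Phi(A)$, to the identity $\muS_X(\Psi)(x) = \sum_k \lambda_k \cdot \S(u_k)(\Phi)(x) = \sum_{A\ni x}\psi(A,x)$.

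The hard direction $\subseteq$ is where the positive semifield hypothesis is crucial. Given $\phi \in \delta_X(\Phi)$ with witness $\psi$, for each $A\in\supp\Phi$ we have $\Phi(A) \neq 0$, so division is available and $\sum_{x\in A}\psi(A,x) = \Phi(A)$ yields a finitely supported $p_A\colon A \to S$ with $p_A(x) = \psi(A,x)/\Phi(A)$ and $\sum_{x\in A}p_A(x)=1$. Now form a ``product distribution'' over the (finite) set of choice functions: set $\lambda_u = \prod_{A \in \supp\Phi} p_A(u(A))$ for each $u$ with $u(A)\in A$, and let $\Psi(\eta) = \sum_{u:\,\S(u)(\Phi)=\eta}\lambda_u$. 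Finiteness of each $\supp p_A$ makes $\Psi$ finitely supported, and $\supp\Psi \subseteq \choice\Phi$ by construction.

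The main work is then verifying $\sum_\eta \Psi(\eta) = 1$ and $\muS_X(\Psi) = \phi$, both by the same marginal calculation on the product. Total mass telescopes as $\sum_u \lambda_u = \prod_{A}\sum_{x\in A} p_A(x) = 1$. For the action, factoring $p_A(x)$ out of $\sum_{u:\,u(A)=x}\lambda_u$ and using $\sum_{y\in A'}p_{A'}(y)=1$ for each $A'\neq A$ collapses the inner sum to $p_A(x)$, whence
$$\muS_X(\Psi)(x) = \sum_{A \in \supp\Phi} \Phi(A)\sum_{u:\,u(A)=x}\lambda_u = \sum_{A \in \supp\Phi} \psi(A,x) = \sum_{A\ni x}\psi(A,x) = \phi(x),$$
where the last equality uses \emph{positivity}: by condition (a), $A\notin\supp\Phi$ forces $\sum_x \psi(A,x)=0$, hence $\psi(A,x)=0$ for all $x$. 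I expect the main obstacle to be this marginal collapse—mechanically straightforward but notation-heavy—together with the careful invocation of the semifield property for the divisions and of positivity for the sum-zero-implies-summands-zero argument.
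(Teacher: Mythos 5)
Your proposal is correct and takes essentially the same route as the paper's own proof (Theorem~\ref{thm:delta=delta'} in the appendix): the explicit witness $\psi(A,x)=\Phi(A)\sum_{k:\,u_k(A)=x}\lambda_k$ for the inclusion $\supseteq$ and the product distribution over choice functions for $\subseteq$ are exactly the paper's constructions (there $\Psi(\chi_w)=\prod_i\psi(A_i,x^i_{w_i})/\prod_i\Phi(A_i)$, which agrees with your per-factor normalisation $p_A=\psi(A,-)/\Phi(A)$), and your ``telescoping'' of the total mass and the marginal collapse is precisely the content of the paper's Lemma~\ref{lemma:generalised distributivity} on generalised distributivity. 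The only cosmetic difference is that the degenerate cases ($\Phi(\emptyset)\neq 0$, $\Phi=\zero$, $X=\emptyset$) that the paper treats separately are absorbed in your argument by the positivity observation that $A\notin\supp\Phi$ forces $\psi(A,-)=0$.
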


\begin{remark}
	If we drop the hypothesis of semifield and only have the minimal assumptions of Theorem~\ref{thm:existence of delta}, then (\ref{eqn:delta for positive refinable semifields}) does not hold any more: $S=\N$ is a counterexample. Indeed, in this case every subset of $\S X$ is convex with respect to $\muS_X$ (see Example~\ref{ex:convex subsets of N-semimodules}), therefore we would have $\delta_X(\Phi)=\choice\Phi$, which is false: the function $\phi$ of Example~\ref{ex:working example delta=delta' part 1} is an example of an element in $\delta_X(\Phi) \setminus \choice\Phi$.
	\end{remark}

\newcommand{\macroA}{\mathcal{A}}
\begin{remark}\label{remarkKlinRot}
When $S = \Bool$ (which is a positive semifield), the monad $\S$ coincides with the monad $\P_f$. The function $\choice{\cdot}$ in~\eqref{eqn:c(Phi)} can then be described as
\begin{equation*}
\choice \macroA =\{ \P_f(u)(\macroA) \mid u \colon \macroA \to X \text{ such that } \forall A \in \macroA \ldotp u(A)\in A \}
\end{equation*}
for all $\macroA \in \P_f \P X $. It is worth remarking that this is the transformation $\chi$ appearing in Example 9 of~\cite{DBLP:conf/fossacs/KlinR15} (which is in turn equivalent to the one in Example 2.4.7 of~\cite{manes2007monad}). This transformation was erroneously supposed to be a distributive law, as it fails to be natural (see~\cite{DBLP:journals/entcs/KlinS18}). However, by taking its convex closure, as displayed in~\eqref{eqn:delta for positive refinable semifields}, one can turn it into a \emph{weak} distributive law.
\end{remark}

\section{The Weak Lifting of $\P$ to $\EM\S$}\label{sec: the weak lifting}

By exploiting the characterisation of the weak distributive law $\delta$ (Theorem~\ref{thm:delta for positive refinable semifields}), we can now describe the weak lifting of $\P$ to $\EM\S$ generated by $\delta$.

Recall from Definition~\ref{def:convex closure S-ALGEBRA} that $\ConvPow X a$ is the set of convex subsets of $X$ with respect to the $\mon S$-algebra $a\colon \mon S X\to X$. The functions $\iota_{(X,a)} \colon	\ConvPow X a \to \pow X$
and $\pi_{(X,a)} \colon \pow X \to \ConvPow X a$ are defined for all $A\in \ConvPow X a$ and $B \in \pow X$ as
\begin{equation}\label{eq:iotapi}
\iota_{(X,a)}(A)=A    \qquad \text{and} \qquad \pi_{(X,a)}(B)=\convclos B a \text{,}
\end{equation}
that is $\iota_{(X,a)}$ is just the obvious set inclusion and 
$\pi_{(X,a)}$ performs the convex closure in $a$. The function $\alpha_a \colon \mon{S} \ConvPow X a \to \ConvPow X a$ is defined for all $\Phi \in \mon{S} \ConvPow X a$ as
\begin{equation}\label{eq:alphaa}
	\alpha_a(\Phi)= \{a(\phi) \; \mid \; \phi \in \choice{\Phi} \}\text{.}
\end{equation}
To be completely formal, above we should have written $\choice{\mon S (\iota) (\Phi)}$ in place of $\choice{\Phi}$, but it is immediate to see that the two sets coincide. Proving that $\alpha_a \colon \mon{S} \ConvPow X a \to \ConvPow X a$ is well defined (namely, $\alpha_a(\Phi)$ is a convex set) and forms an $\mon S$-algebra requires some ingenuity and will be shown later in Section~\ref{sec:proof}. 
The assignment $(X,a) \mapsto (\ConvPow X a, \alpha_a)$ gives rise to a functor $\tilde \P \colon \EM{\mon S} \to \EM{\mon S}$ defined on morphisms $f \colon (X,a) \to (X',a')$  as

\begin{equation}\label{eq:f}
    \tilde \P(f)(A) =  \pow f (A)
\end{equation}
for all $A\in \ConvPow X a$. For all $(X,a)$ in $\EM{\mon S}$,   $\eta^{\tilde \P}_{(X,a)} \colon (X,a) \to \tilde \P (X,a) $ and  $\mu^{\tilde \P}_{(X,a)} \colon \tilde \P \tilde \P (X,a) \to \tilde \P (X,a)$ are defined for $x\in X$ and $\mathcal A \in \ConvPow {(\ConvPow X a)} {\alpha_a}$ as
\begin{equation}\label{eq:etamu}
\eta^{\tilde \P}_{(X,a)}(x) =  \{x\} \qquad \text{and} \qquad \mu^{\tilde \P}_{(X,a)} (\mathcal A) = \bigcup_{A \in \mathcal A} A\text{.} 
\end{equation}

\begin{theorem}\label{thm:weaklift}
	Let $\S$ be a positive semifield. Then the canonical weak lifting of the powerset monad $\P$ to $\EM{\mon S}$, determined by~\eqref{eqn:delta for positive refinable semifields}, consists of the monad $(\tilde \P,\eta^{\tilde \P},\mu^{\tilde \P})$ on $\EM{\mon S}$ defined as in \eqref{eq:alphaa}, \eqref{eq:f}, \eqref{eq:etamu} and the natural transformations $\iota \colon \U{\mon S}\tilde \P \to \P \U{\mon S}$ and $\pi \colon \P \U{\mon S} \to \U{\mon S} \tilde \P$ defined as in \eqref{eq:iotapi}.
\end{theorem}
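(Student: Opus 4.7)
The plan is to invoke the general recipe (part of Theorem~\ref{thm:bijective correspondence (weak) distributive laws and (weak) rest}) that produces a weak lifting from any weak distributive law, and then simplify the resulting formulas using the explicit description of $\delta$ provided by Theorem~\ref{thm:delta for positive refinable semifields}. For each $(X,a) \in \EM{\mon S}$, the recipe first lifts $\P X$ to an $\mon S$-algebra via $a^\P := \P(a) \circ \delta_X$, then splits a canonical idempotent on $(\P X, a^\P)$ to produce $\tilde\P(X,a)$. Plugging the characterisation $\delta_X(\Phi) = \convclos{\choice\Phi}{\mu^{\mon S}_X}$ into the definition of $a^\P$ and using that $a$ commutes with convex combinations (a direct consequence of the $\mon S$-algebra laws) gives
\[
a^\P(\Phi) \;=\; \convclos{\{a(\phi) \mid \phi \in \choice\Phi\}}{a}
\]
for all $\Phi \in \mon S \P X$. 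Hence $a^\P$ always takes values in $\ConvPow X a$, the idempotent arising from the splitting is exactly the convex-closure operation $B \mapsto \convclos B a$, and it factors as $\iota_{(X,a)} \circ \pi_{(X,a)}$ with $\pi_{(X,a)} \circ \iota_{(X,a)} = \id{}$. This identifies the underlying set of $\tilde\P(X,a)$ with $\ConvPow X a$ and fixes $\iota$, $\pi$ and $\alpha_a$ as in \eqref{eq:iotapi}--\eqref{eq:alphaa}.

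The main technical step, which I expect to be the chief obstacle, is to verify that $\alpha_a(\Phi)$ is actually a convex subset of $(X,a)$ for every $\Phi \in \mon S \ConvPow X a$, so that $\alpha_a$ is well-typed. This is the point at which the convexity of each $A \in \supp \Phi$ is used non-formally. Take a convex combination $\sum_{i=1}^n \mu_i \cdot a(\phi_i)$ of elements of $\alpha_a(\Phi)$, with $\phi_i = \mon S(u_i)(\Phi)$ and choice functions $u_i \colon \supp\Phi \to X$ satisfying $u_i(A) \in A$. Define a new function $u^* \colon \supp\Phi \to X$ by $u^*(A) := \sum_{i=1}^n \mu_i \cdot u_i(A)$, interpreted as a convex combination in the semimodule $(X,a)$. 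Since $A$ is convex and $\sum_i \mu_i = 1$, we have $u^*(A) \in A$, so $u^*$ is itself a legitimate choice function for $\Phi$. A direct calculation using $a(\phi) = \sum_{x} \phi(x) \cdot x$ then yields
\[
a(\mon S(u^*)(\Phi)) \;=\; \sum_{A} \Phi(A) \cdot u^*(A) \;=\; \sum_{i,A} \mu_i \Phi(A) \cdot u_i(A) \;=\; \sum_i \mu_i \cdot a(\phi_i),
\]
which shows that the convex combination lies in $\alpha_a(\Phi)$.

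The remaining verifications are essentially routine. That $(\ConvPow X a, \alpha_a)$ is an $\mon S$-algebra and that $\iota$, $\pi$ are $\mon S$-algebra morphisms follow from the general splitting-of-idempotents construction in $\EM{\mon S}$ together with the computation of $a^\P$ above. Functoriality of $\tilde\P$ on morphisms is immediate from the observation that any $\mon S$-algebra morphism sends convex combinations to convex combinations, and hence convex sets to convex sets. The assignments in \eqref{eq:etamu} are well-defined: singletons are trivially convex, and $\bigcup \mathcal A$ is convex for $\mathcal A \in \ConvPow{\ConvPow X a}{\alpha_a}$ by an argument analogous to the one above, combining elements picked from different members of $\mathcal A$ through $\alpha_a$-convex combinations. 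Naturality, the monad laws for $(\tilde\P, \eta^{\tilde\P}, \mu^{\tilde\P})$, and the weak-lifting diagrams \eqref{eqn:weak lifting diagrams iota}--\eqref{eqn:weak lifting diagrams pi} are verified by unfolding the formulas \eqref{eq:iotapi}--\eqref{eq:etamu}; the uniqueness clause in Theorem~\ref{thm:bijective correspondence (weak) distributive laws and (weak) rest} finally ensures that the weak lifting so built is precisely the canonical one determined by $\delta$.
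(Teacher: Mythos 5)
Your proposal is correct and follows essentially the same route as the paper's proof: Garner's splitting recipe, the identification of the idempotent $\pow a \circ \delta_X \circ \etaS_{\pow X}$ with the convex-closure operator (so that the splitting yields $\ConvPow X a$, $\iota$ and $\pi$ as in \eqref{eq:iotapi}), and, as the crux, the recombination argument showing that $\alpha_a(\Phi)$ is convex when $\Phi$ is supported on convex sets --- your choice function $u^*(A)=\sum_{i}\mu_i\cdot u_i(A)$ is exactly the paper's $u(A_i)=a(\chi^i)$ in the proof of Proposition~\ref{thm:convpow(X,a) is a S-algebra}, and your closing appeal to uniqueness for $\eta^{\tilde\P}$, $\mu^{\tilde\P}$ matches the paper's appendix. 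One cosmetic slip: $(\pow X,\, \pow(a)\circ\delta_X)$ is \emph{not} an $\mon S$-algebra for a merely weak law (the unit axiom fails, which is precisely why the idempotent must be split), but nothing in your argument actually uses that phrasing.
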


It is worth spelling out the left-semimodule structure on $\ConvPow X a$ corresponding to the $\mon S$-algebra $\alpha_a \colon \mon{S} \ConvPow X a \to \ConvPow X a$. Let us start with $\lambda \cdot^{\alpha_a} A $ for some $A\in \ConvPow X a$. By~\eqref{eqn:semimodule associated to S-algebra},  $\lambda \cdot^{\alpha_a} A  = \alpha_a(\Phi)$ where $\Phi=(A \mapsto \lambda)$. By~\eqref{eq:alphaa}, $\alpha_a(\Phi)=\{a(\phi) \; \mid \; \phi \in \choice{\Phi} \}$. Following the definition of $\choice{\Phi}$ given in \eqref{eqn:c(Phi)}, one has to consider functions $u\colon \supp \Phi \to X$ such that $u(B)\in B$ for all $B\in \supp \Phi$: if $\lambda \neq 0$, then $\supp \Phi= \{A\}$ and thus, for each $x\in A$, there is exactly one function $u_x \colon \supp \Phi \to X$ mapping $A$ into $x$. It is immediate to see that $\mon S (u_{x}) (\Phi)$ is exactly the function $(x \mapsto \lambda)$ and thus $a(\mon S (u_{x}) (\Phi))$ is, by \eqref{eqn:semimodule associated to S-algebra}, $\lambda \cdot^a x$. Now if $\lambda=0$, then $\supp \Phi = \emptyset$, so there is \emph{exactly one} function $u\colon \supp \Phi \to X$ and $\mon S (u) (\Phi)$ is the function mapping all $x \in X$ into $0$ and thus, by \eqref{eqn:semimodule associated to S-algebra}, $a(\mon S (u) (\Phi)) = 0^a$. Summarising,
    \begin{equation}\label{eq:lambdaP}
\lambda \cdot^{\alpha_a} A = \begin{cases}
	\{\lambda \cdot^a x \, \mid \; x \in A\} & \text{if }\lambda \neq 0\\
	\{0^a\}  & \text{if }\lambda = 0 
	\end{cases} \end{equation}
Following similar lines of thoughts, one can check that 
    \begin{equation}\label{eq:sumP}
A+^{\alpha_a} B = \{x+^ay \; \mid \; x\in A, \; y\in B\} \qquad \text{and} \qquad 0^{\alpha_a}=\{0^a\}\text{.}\end{equation}

\begin{remark}\label{remarkJacobs} 
By comparing \eqref{eq:sumP} and \eqref{eq:lambdaP} with (4) and (5) in~\cite{jacobs_coalgebraic_2008}, it is immediate to see
that our monad $\tilde \P$ coincides with a slight variation of Jacobs's convex powerset monad $\mathcal C$, the only difference being that we do allow for $\emptyset$ to be in $\ConvPow X a$. Jacobs insisted on the necessity of $\mathcal C(X)$ to be the set of \emph{non-empty} convex subsets of $X$, because otherwise he was not able to define a semimodule structure on $\mathcal C (X)$ such that $0 \cdot \emptyset = \{0^a\}$. However, we do manage to do so, since by~\eqref{eq:lambdaP}, $0 \cdot A ={0^a}$ for all $A$ and in particular for $A=\emptyset$. At first sight, this may look like an ad-hoc solution, but this is not the case: it is intrinsic in the definition of the unique weak lifting of $\P$ to $\EM{\mon S}$, as stated by Theorem \ref{thm:weaklift} and shown next.
\end{remark}

\subsection{Proof of Theorem~\ref{thm:weaklift}}\label{sec:proof}
By Theorem~\ref{thm:bijective correspondence (weak) distributive laws and (weak) rest}, the weak distributive law~\eqref{eqn:def of delta} corresponds to a weak lifting $\tilde \P$ of $\P$ to $\EM\S$, which we are going to show coincides with the data of \eqref{eq:iotapi}-\eqref{eq:etamu}. 
The image along $\tilde \P$ of a $\mon S$-algebra $(X,a)$ will be a set $Y$ together with a structure map $\alpha_a$ that makes it a $\mon S$-algebra in turn. Garner~\cite[Proposition 13]{garner_vietoris_2020} gives us the recipe to build $Y$ and $\alpha_a$ appropriately. $Y$ is obtained by splitting the following idempotent in $\Set$:
\begin{equation}\label{eqn:idempotent e}
e_{(X,a)}=\begin{tikzcd}
\pow X \ar[r,"\etaS_{\pow X}"] & \mon S (\pow X) \ar[r,"\delta_X"] & \pow(\mon S X) \ar[r,"\pow a"] & \pow X
\end{tikzcd}
\end{equation}
as a composite $e_{(X,a)}=\iota_{(X,a)} \circ \pi_{(X,a)}$, where $\pi_{(X,a)}$ is the corestriction of $e_{(X,a)}$ to its image and $\iota_{(X,a)}$ is the set-inclusion of the image of $e_{(X,a)}$ into $\pow X$. In other words, $Y$ is 
the set of fixed points of $e_{(X,a)}$. $\alpha_a$ is obtained as the composite
\[
\alpha_a=\begin{tikzcd}
\mon S Y \ar[r,"\mon S \iota_{(X,a)}"] & \mon S \pow X \ar[r,"\delta_X"] & \pow \mon S X \ar[r,"\pow a"] & \pow X \ar[r,"\pi_{(X,a)}"] & Y.
\end{tikzcd}
\]

Let us, then, fix an $\S$-algebra $(X,a)$. 
Given $A \in \pow X$, we have $\etaS_{\pow X}(A)=\Delta_A \colon \pow X \to S$, the Dirac-function centred in $A$. The set $\delta_X(\etaS_{\pow X}(A))$ has a simple description, shown in the next Lemma, whose proof is in Appendix~\ref{Appendix Section 5}.

\begin{lemma}\label{lemma:delta of Dirac}
	For all $A \in \pow X$ 
	\[
	\delta_X(\etaS_{\pow X}(A)) = \left\{ \phi \in \mon S X \mid \supp \phi \subseteq A, \sum_{x \in X} \phi(x) = 1 \right\}.
	\]
\end{lemma}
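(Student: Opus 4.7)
The plan is to unfold the definition of $\delta$ given in equation~\eqref{eqn:def of delta} with $\Phi = \Delta_A$ and extract the two constraints on the auxiliary function $\psi \in \mon S (\ni_X)$ coming from condition~(a), then read off condition~(b) to describe the resulting $\phi$.

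First I would observe that $\Delta_A \in \mon S \pow X$ has support $\{A\}$ (when $A \neq \emptyset$, or more generally $\supp \Delta_A \subseteq \{A\}$), so condition~(a) of~\eqref{eqn:def of delta} splits into two cases: for $B = A$ it requires $\sum_{x \in A} \psi(A,x) = 1$, while for $B \neq A$ it requires $\sum_{x \in B} \psi(B,x) = 0$. Here I use that $S$ is positive (Table~\ref{tab:semiring properties}) to conclude from the latter that $\psi(B,x) = 0$ for every $x \in B$ whenever $B \neq A$. Hence any witnessing $\psi$ is supported on pairs $(A,x)$ with $x \in A$ and satisfies $\sum_{x \in A} \psi(A,x) = 1$.

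Next I would apply condition~(b): for $x \in A$ the only summand is $\psi(A,x)$, giving $\phi(x) = \psi(A,x)$; for $x \notin A$ no term survives, so $\phi(x) = 0$. This shows $\supp \phi \subseteq A$ and $\sum_{x \in X} \phi(x) = \sum_{x \in A} \psi(A,x) = 1$, establishing the inclusion $\subseteq$ in the statement. For the converse inclusion, given any $\phi \in \mon S X$ with $\supp \phi \subseteq A$ and $\sum_x \phi(x)=1$, I would define $\psi(A,x) = \phi(x)$ for $x \in A$ and $\psi(B,x)=0$ otherwise, and verify directly that $\psi$ meets conditions~(a) and~(b), witnessing $\phi \in \delta_X(\etaS_{\pow X}(A))$.

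No step seems to present a real obstacle: the only non-trivial ingredient is the use of positivity of $S$ to force $\psi$ to vanish outside $\{A\} \times A$, which is precisely the hypothesis already guaranteed by Theorem~\ref{thm:existence of delta}. The entire argument is a direct unravelling of~\eqref{eqn:def of delta}, so the write-up will be short.
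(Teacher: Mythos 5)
Your proposal is correct and takes essentially the same approach as the paper's own proof: both unfold~\eqref{eqn:def of delta} at $\Phi=\Delta_A$, use positivity of $S$ to force any witnessing $\psi$ to vanish on all pairs $(B,x)$ with $B\neq A$, and exhibit the identical witness $\psi(A,x)=\phi(x)$ (and $0$ elsewhere) for the converse inclusion. The only difference is organisational: the paper dispatches $A=\emptyset$ as an explicit preliminary case, whereas your argument covers it vacuously (no witnessing $\psi$ and no admissible $\phi$ exist), which is equally valid.
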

The image along $A$ of the idempotent $e$ is therefore
\[
e(A)=\pow a (\delta_X(\etaS_{\pow X}(A))) = \left\{a(\phi) \mid \phi \in \mon S X, \supp \phi \subseteq A, \sum_{x \in X} \phi(x)=1 \right\} = \convclos A a.
\]
Hence the idempotent $e$ computes the convex closure of elements of $\pow X$ and its fixed points are precisely the convex subsets of $X$ with respect to the structure map $a$. Therefore, the carrier set of $\tilde \P (X,a)$ is precisely $\ConvPow X a$, the natural transformations $\pi$ and $\iota$ are, respectively, the convex closure operator and the set-inclusion of $\ConvPow X a$ into $\pow X$ as in \eqref{eq:iotapi}.

$\ConvPow X a$ is then equipped with a structure map $\alpha_a \colon \mon S \ConvPow X a \to \ConvPow X a$ given by
\[
\alpha_a=\begin{tikzcd}
\mon S \ConvPow X a \ar[r,"\mon S \iota_{(X,a)}"] & \mon S \pow X \ar[r,"\delta_X"] & \pow \mon S X \ar[r,"\pow a"] & \pow X \ar[r,"\pi_{(X,a)}"] & \ConvPow X a.
\end{tikzcd}
\]
Let us try to calculate $\alpha_a$: given $\Phi \colon \ConvPow X a \to S$ with finite support, we have that $\mon S ({\iota_{(X,a)}}) (\Phi)$ is just the extension of $\Phi$ to $\pow X$ which assigns $0$ to each non-convex subset of $X$. If we write $\iota$ instead of $\iota_{(X,a)}$ for short, we have

\begin{equation}\label{eqn:alpha_a}
\alpha_a(\Phi) = \convclos { \pow a (\delta_X (\mon S (\iota) (\Phi))) } {a}.
\end{equation}
Next, we can use the following technical result, whose proof is in Appendix~\ref{Appendix Section 5}. \begin{proposition}\label{cor:image along a of convex subset of S(X) is convex in X}
	Let $(X,a)$ be a $\mon S$-algebra. If $\mathcal A$ is a convex subset of $(\mon S X, \muS_X)$, then
	$\pow a (\mathcal A)$
	is convex in $(X,a)$.
\end{proposition}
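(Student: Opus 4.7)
The plan is to unfold the definition of convex closure for $\pow(a)(\mathcal{A})$ in $(X,a)$, pull a witnessing $\psi$ back to a convex combination in $\S X$, and then collapse it via $a$ using the Eilenberg--Moore algebra axiom.

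Concretely, I will start with an arbitrary element $t\in\convclos{\pow(a)(\mathcal{A})}{a}$. By Definition~\ref{def:convex closure S-ALGEBRA} there is $\psi\in\S X$ with $\supp\psi\subseteq\pow(a)(\mathcal{A})$ and $\sum_{x\in X}\psi(x)=1$ such that $t=a(\psi)$. For each $x\in\supp\psi$, the axiom of choice provides $\phi_x\in\mathcal{A}$ with $a(\phi_x)=x$; note that the assignment $x\mapsto\phi_x$ is automatically injective since $a(\phi_x)=x\neq x'=a(\phi_{x'})$ whenever $x\neq x'$.

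Next, I will build the second-order combination $\Psi\in\S^2 X$ by setting
\[
\Psi(\phi)=\begin{cases}\psi(x) & \text{if }\phi=\phi_x\text{ for some }x\in\supp\psi,\\ 0 & \text{otherwise}.\end{cases}
\]
By the injectivity noted above, $\supp\Psi=\{\phi_x\mid x\in\supp\psi\}\subseteq\mathcal{A}$ is finite and $\sum_{\phi\in\S X}\Psi(\phi)=\sum_{x\in\supp\psi}\psi(x)=1$. Convexity of $\mathcal{A}$ in $(\S X,\muS_X)$ then yields $\muS_X(\Psi)\in\mathcal{A}$.

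Finally, I will apply the Eilenberg--Moore axiom $a\circ\muS_X=a\circ\S(a)$. A direct calculation from the definition of $\S$ on morphisms shows that $\S(a)(\Psi)(y)=\sum_{\phi\in a^{-1}\{y\}}\Psi(\phi)$ equals $\psi(y)$ for every $y\in X$: the only $\phi$'s with $\Psi(\phi)\neq 0$ are the $\phi_x$'s, and $a(\phi_x)=x$, so the sum collapses to $\psi(y)$ if $y\in\supp\psi$ and to $0$ otherwise. Hence $\S(a)(\Psi)=\psi$, giving $t=a(\psi)=a(\muS_X(\Psi))\in\pow(a)(\mathcal{A})$. Since $t$ was arbitrary, $\pow(a)(\mathcal{A})$ is convex in $(X,a)$. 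The only potentially subtle point is the bookkeeping in the last computation, specifically verifying that the preimage $a^{-1}\{y\}$ contributes exactly $\psi(y)$; this is where the injectivity of $x\mapsto\phi_x$ is essential, but it follows for free from $a(\phi_x)=x$, so no real obstacle arises.
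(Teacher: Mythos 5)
Your proof is correct and uses the same core construction as the paper: choosing preimages $\phi_x \in \mathcal A$ with $a(\phi_x)=x$, assembling them into a second-order combination $\Psi \in \mon S^2 X$ supported in $\mathcal A$, and collapsing via the algebra law $a \circ \muS_X = a \circ \mon S(a)$ is precisely the right-to-left inclusion in the paper's Proposition~\ref{prop:image along a of convex closures}, which establishes the stronger equality $\pow a \bigl( \convclos{\mathcal A}{\muS_X} \bigr) = \convclos{\pow a (\mathcal A)}{a}$ and then specialises it to convex $\mathcal A$. The only difference is packaging: you apply convexity of $\mathcal A$ directly to obtain $\convclos{\pow a(\mathcal A)}{a} \subseteq \pow a(\mathcal A)$ (the reverse inclusion, which you leave implicit, is immediate via Dirac functions $\Delta_x$), whereas the paper records the two-sided equality because it is reused later, e.g.\ in the proof of Theorem~\ref{thm:distributivity property of delta-algebras}.
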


Since $\delta_X(\Phi')$ is the convex closure of $\choice{\Phi'}$ in $(\S X, \muS_X)$ for every $\Phi' \in \S \P X$, by Proposition~\ref{cor:image along a of convex subset of S(X) is convex in X} we can avoid to perform the $a$-convex closure in~(\ref{eqn:alpha_a}). Therefore
\[
\alpha_a(\Phi)=\pow a (\delta_X (\mon S (\iota) (\Phi)))=\pow a \bigl(\convclos{\choice{\mon S (\iota) (\Phi)}}{\muS_X}\bigr).
\] 
In the next Proposition we show that also the $\muS_X$-convex closure is superfluous, due to the fact that $\Phi \in \S \ConvPow X a$ (and not simply $\S\P X$), thus obtaining~\eqref{eq:alphaa}.

\begin{proposition}\label{thm:convpow(X,a) is a S-algebra}
	Let $S$ be a positive semifield, $(X,a)$ a $\mon S$-algebra, $\Phi \in \S\ConvPow X a$. Then $\pow a (\delta_X (\mon S (\iota) (\Phi))) = \pow a (\choice{\mon S (\iota) (\Phi)})$.
\end{proposition}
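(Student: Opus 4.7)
The inclusion $\supseteq$ is free: since every choice function trivially yields a convex combination (of support one) of itself, we have $\choice{\Phi'} \subseteq \delta_X(\Phi')$ where $\Phi' := \mon S(\iota)(\Phi)$, by Theorem~\ref{thm:delta for positive refinable semifields}, hence $\pow a(\choice{\Phi'}) \subseteq \pow a(\delta_X(\Phi'))$. The content is the reverse inclusion, for which the plan is to exploit the fact that each $A \in \supp \Phi'$ is convex in $(X,a)$, which so far has not been used.

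The main step is the following. Given $\phi \in \delta_X(\Phi')$, by Theorem~\ref{thm:delta for positive refinable semifields} we can write $\phi = \muS_X(\Psi)$ for some $\Psi \in \S^2 X$ with $\sum \Psi = 1$ and $\supp \Psi \subseteq \choice{\Phi'}$. Enumerate $\supp \Psi = \{\phi_1,\dots,\phi_n\}$ with weights $\lambda_k = \Psi(\phi_k)$, where each $\phi_k = \mon S(u_k)(\Phi')$ for some choice function $u_k \colon \supp \Phi' \to X$ with $u_k(A) \in A$. Using the $\mon S$-algebra law $a \circ \muS_X = a \circ \mon S(a)$ and the semimodule structure \eqref{eqn:semimodule associated to S-algebra} induced by $a$, one rewrites $a(\phi)$ as the convex combination $\sum_{k=1}^n \lambda_k \cdot^a a(\phi_k)$ in the semimodule $X$.

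To witness this value as $a(\phi^*)$ with $\phi^* \in \choice{\Phi'}$, for each $A \in \supp \Phi'$ I would define $\psi_A \in \mon S X$ by $\psi_A(x) = \sum_{k \,:\, u_k(A)=x} \lambda_k$. By construction $\supp \psi_A \subseteq A$ and $\sum_x \psi_A(x) = \sum_k \lambda_k = 1$, so convexity of $A$ in $(X,a)$ yields an element $u^*(A) := a(\psi_A) \in A$. This defines a choice function $u^* \colon \supp \Phi' \to X$, and setting $\phi^* = \mon S(u^*)(\Phi') \in \choice{\Phi'}$, a direct computation using \eqref{eqn:semimodule associated to S-algebra} gives
\[
a(\phi^*) = \sum_{A \in \supp \Phi'} \Phi'(A) \cdot^a a(\psi_A) = \sum_A \Phi'(A) \cdot^a \sum_k \lambda_k \cdot^a u_k(A),
\]
which equals $\sum_k \lambda_k \cdot^a a(\phi_k) = a(\phi)$ after swapping the two sums using commutativity of $+^a$ and distributivity of $\cdot^a$ over $+^a$.

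The step I expect to be the crux is the choice of $u^*$: the idea that one can collapse an arbitrary convex combination of choice functions into a single choice function \emph{up to $a$} by averaging inside each convex $A \in \supp \Phi$ and then evaluating $a$ on the average. Once this definition is in place, the rest is a straightforward double-sum rearrangement in the semimodule $X$. Note that the argument uses convexity of the members of $\supp \Phi$ in an essential way — which is exactly where the hypothesis $\Phi \in \S \ConvPow X a$ (rather than merely $\Phi \in \S \P X$) is consumed.
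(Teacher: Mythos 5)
Your proposal is correct and follows essentially the same route as the paper's proof: your $\psi_A$ is exactly the paper's $\chi^i$, your $u^*(A)=a(\psi_A)$ is exactly the paper's collapsed choice function $u(A_i)=a(\chi^i)$, and your final double-sum rearrangement in the semimodule $(X,+^a,\lambda\cdot^a)$ is just the paper's verification that $\muS_X(\Psi)=\muS_X(\Psi')$ combined with $a\circ\muS_X=a\circ\mon S(a)$, written out pointwise. The paper's separate preliminary cases ($\Phi=\zero$, $\Phi(\emptyset)\neq 0$) are handled vacuously in your formulation, so nothing is missing.
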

\begin{proof}
	In this proof we shall simply write $\Phi$ instead of the more verbose ${\mon S (\iota) (\Phi)}$. We want to prove that 
	\begin{multline}\label{eqn:linear combination of convex subsets of X}
	\pow a \bigl(\delta_X(\Phi)\bigr) = \\
	\Bigl\{
	a(\psi) \mid \psi \in \mon S X \ldotp \exists u \colon \!\! \supp \Phi \to X \ldotp u(A) \!\in \! A ,\, \forall x \in X\ldotp \psi(x) = \!\!\sum_{\substack{A \in \supp \Phi \\ u(A)=x}} \!\!\!\Phi(A)
	\Bigr\}
	\end{multline}
	where we have, by Theorem~\ref{thm:delta for positive refinable semifields}, that
	\[
	\pow a \bigl(\delta_X( \Phi)\bigr) = \\
	\{
	a(\muS_X(\Psi)) \mid \Psi \in \mon S ^2 X, \sum_{\phi \in \mon S X} \Psi(\phi)=1, \supp \Psi \subseteq \choice{\Phi}
	\}.
	\]
	First of all, $\emptyset$ is \emph{not} a $\mon S$-algebra, because there is no map $\mon S (\emptyset) \to \emptyset$ given that $\mon S (\emptyset) = \{ \emptyset \colon \emptyset \to S\}$, hence $X \ne \emptyset$. Next, if $\Phi = \zero \colon \P X \to S$, namely the function constant to $0$, then $\choice{\Phi} = \{\zero \colon X \to S\}$  therefore one can easily see that the left-hand side of~\eqref{eqn:linear combination of convex subsets of X} is equal to $\{a(\zero \colon X \to S)  \}$. For the same reason, the right-hand side is also equal to $\{a(\zero \colon X \to S)\}$. Moreover, if $\Phi(\emptyset) \ne 0$, then there is no $u \colon \supp \Phi \to X$ such that $u(\emptyset) \in \emptyset$, 
	so $\choice{\Phi} = \emptyset$ and so is the left-hand side of~(\ref{eqn:linear combination of convex subsets of X}); for the same reason, also the right-hand side is empty. 
	
	Suppose then, for the rest of the proof, that $\Phi \ne 0$ and that $\Phi(\emptyset)=0$.
	
	For the right-to-left inclusion in~(\ref{eqn:linear combination of convex subsets of X}): given $\psi \in \choice\Phi$, consider $\Psi = \etaS_{\mon S X} (\psi) = \Delta_\psi \in \mon S ^2 X$. Then $\Psi$ clearly satisfies all the required properties and $\muS_X(\Psi)=\psi$.
	
	The left-to-right inclusion is more laborious. Let $\Psi \in \mon S ^2 X$ be such that $\sum_{\chi \in \S X} \Psi(\chi)=1$ and such that $\supp \Psi \subseteq \choice\Phi$, that is, for all $\phi \in \supp \Psi$ there is $u^\phi \colon \supp \Phi \to X$ such that $u^\phi(A) \in A$ for all $A \in \supp \Phi$ and 
		$\phi = \S(u^\phi)(\Phi)$. 
	 We have to show that $a(\mu(\Psi))= a(\psi)$ for some $\psi \in \mon S X$ of the form $\sum_{A \in \supp \Phi} \Phi(A) \cdot u(A)$ for some choice function $u \colon \supp \Phi \to X$. Notice that the given $\Psi$ is a convex linear combination of functions $\phi$'s in $\mon S X$ like the one we have to produce: the trick will be to exploit the fact that each $A \in \supp \Phi$ is convex. Here we shall only give a sketch of the proof; the detailed version can be found in Appendix~\ref{Appendix Section 5}. 
	Suppose $\supp \Phi = \{A_1,\dots,A_n\}$ and $\supp \Psi = \{\phi^1,\dots,\phi^m\}$. Call $u^j$ the choice function that generates $\phi^j$. 
	Then $\Psi$ is of this form:
	\[
	\Psi= \Bigg(   
	\underbrace{\left(
	\parbox{2.75cm}{$u^1(A_1) \mapsto \Phi(A_1)$ \\ \vdots \\ $u^1(A_n) \mapsto \Phi(A_n)$}
	\right)}_{\phi^1} \mapsto \Psi(\phi^1), \,  \dots, \,  
	\underbrace{\left(
	\parbox{2.75cm}{$u^m(A_1) \mapsto \Phi(A_1)$ \\ \vdots \\ $u^m(A_n) \mapsto \Phi(A_n)$}
	\right)}_{\phi^m}
	\mapsto \Psi(\phi^m)
	\Bigg)
	\]
	Define the following element of $\S^2 X$:
	\[
	\Psi'= \Bigg(   
	\underbrace{\left(
	\parbox{2.75cm}{$u^1(A_1) \mapsto \Psi(\phi^1)$ \\ \vdots \\ $u^m(A_1) \mapsto \Psi(\phi^m)$}
	\right)}_{\chi^1}
	\mapsto \Phi(A_1), \, \dots, \,
	\underbrace{\left(
	\parbox{2.8cm}{$u^1(A_n) \mapsto \Psi(\phi^1)$ \\ \vdots \\ $u^m(A_n) \mapsto \Psi(\phi^m)$}
	\right)}_{\chi^n}
	\mapsto \Phi(A_n)
	\Bigg)
	\]
	Observe that $u^1(A_i), \dots, u^m(A_i) \in A_i$ by definition, and $A_i$ is convex by assumption: since $\sum_{j=1}^m \Psi(\phi^j)=1$, we have that $a(\chi^i) \in A_i$. Set then $u({A_i}) = a(\chi^i)$ and define $\psi= \S (a) (\Psi')$: we have $\psi \in \choice\Phi$ with $u$ as the generating choice function. It is not difficult to see that $\muS_X(\Psi)=\muS_X(\Psi')$, therefore we have
	\[
	a(\psi)=a\bigl(\mon S (a)(\Psi')  \bigr) = a\bigl(\muS_X (\Psi')\bigr) = a\bigl(\muS_X(\Psi)\bigr)
	\]
	as desired. \qed
\end{proof}

The rest of the proof of Theorem~\ref{thm:weaklift}, concerning the action of $\tilde \P$ on morphisms and the unit and multiplication of the monad $\tilde\P$, consists in following the recipe provided by Garner~\cite{garner_vietoris_2020}; the details can be found in Appendix~\ref{Appendix Section 5}. 
\label{end of section 5}

\section{The Composite Monad: an Algebraic Presentation}\label{sec:the composite monad}

We can now compose the two monads $\P$ and $\mon S$ by considering the monad arising from the composition of the following two adjunctions:
\[
\begin{tikzcd}[column sep=3em]
{\Set} \ar[r,bend left=30,"{\F{\mon S}}"{name=F1},pos=0.55] & {\EM{\mon S}} \ar[l,bend left=30,"\U{\mon S}"{name=U1},pos=0.45] \ar[r,bend left=30,"\F{\tilde \P}"{name=F2},pos=0.55] & \EM{\tilde \P} \ar[l,bend left=30,"\U{\tilde \P}"{name=U2},pos=0.5]
\ar[from=F1,to=U1,"\bot"{description},draw=none]
\ar[from=F2,to=U2,"\bot"{description},draw=none]
\end{tikzcd}
\]
Direct calculations show that the resulting endofunctor on $\Set$, which we call $\convpowS{}$,
maps a set $X$ and a function $f\colon X \to Y$ into, respectively, 
\begin{equation}\label{composite endo}
\convpowS{X} = \ConvPow{(\mon S X)} {\muS_X} \qquad \text{and} \qquad\convpowS(f)(\mathcal A) = \{\mon{S}(f)(\Phi) \mid \Phi \in \mathcal{A}\}\end{equation}
for all $\mathcal{A}\in \convpowS{X}$. For all sets $X$, $\eta^\convpowS_X \colon X \to \convpowS{X}$ and $\mu^{\convpowS{}}_X \colon \convpowS{}\convpowS{X} \to \convpowS{X}$ are defined as
\begin{equation}\label{composite unit mult}
    \eta^\convpowS_X(x)= \{ \Delta_x \} \qquad \text{and} \qquad \mu^{\convpowS}_X(\mathscr A )= \bigcup\limits_{\Omega \in \mathscr A} \alpha_{\muS_X} (\Omega)
\end{equation}
for all $x\in X$ and $\mathscr A \in \convpowS{}\convpowS{X}$.

\begin{theorem}\label{thm:composite monad}
	Let $\S$ be a positive semifield. Then the canonical weak distributive law $\delta \colon \mon S \P \to \P \mon S$ given in Theorem~\ref{thm:delta for positive refinable semifields} induces a monad $\convpowS$ on $\Set$ with endofunctor, unit and multiplication defined as in \eqref{composite endo} and \eqref{composite unit mult}.
\end{theorem}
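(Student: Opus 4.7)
The strategy is to avoid re-deriving the composite monad from scratch and instead to rely entirely on Theorem~\ref{thm:weaklift}. By Theorem~\ref{thm:bijective correspondence (weak) distributive laws and (weak) rest}, the weak distributive law $\delta$ of Theorem~\ref{thm:delta for positive refinable semifields} corresponds to the weak lifting $\tilde\P$ of $\P$ to $\EM\S$ described in Theorem~\ref{thm:weaklift}. Being a monad on $\EM\S$, $\tilde\P$ yields an Eilenberg--Moore adjunction $\F{\tilde\P} \dashv \U{\tilde\P} \colon \EM{\tilde\P} \to \EM\S$, which composes with $\F\S \dashv \U\S \colon \EM\S \to \Set$ to give an adjunction $\Set \rightleftarrows \EM{\tilde\P}$. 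The monad induced by this composite adjunction is by definition $(\convpowS,\eta^{\convpowS},\mu^{\convpowS})$ with $\convpowS = \U\S \U{\tilde\P} \F{\tilde\P} \F\S$, and all that remains is to verify that it is described by the formulas in~\eqref{composite endo} and~\eqref{composite unit mult}.

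For the endofunctor, I would compute, for a set $X$,
\[
\convpowS X = \U\S\U{\tilde\P}\F{\tilde\P}\F\S X = \U\S\U{\tilde\P}\F{\tilde\P}(\S X,\muS_X) = \U\S\bigl(\ConvPow{\S X}{\muS_X}, \alpha_{\muS_X}\bigr) = \ConvPow{\S X}{\muS_X},
\]
using Theorem~\ref{thm:weaklift}. For a function $f\colon X \to Y$, $\F\S(f) = \S(f)$ and the action of $\tilde\P$ on morphisms~\eqref{eq:f} gives $\convpowS(f)(\mathcal A) = \{\S(f)(\Phi) \mid \Phi \in \mathcal A\}$. For the unit, the standard recipe gives $\eta^{\convpowS} = \U\S\, \eta^{\tilde\P}\F\S \cdot \eta^\S$, so $\eta^{\convpowS}_X(x) = \eta^{\tilde\P}_{(\S X,\muS_X)}(\Delta_x) = \{\Delta_x\}$ by~\eqref{eq:etamu}.

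The only part with a minor subtlety is the multiplication, and this is where I would be most careful. The general formula for composed adjunctions gives
\[
\mu^{\convpowS}_X = \U\S\U{\tilde\P}\,\epsilon^{\tilde\P}_{\F{\tilde\P}\F\S X} \;\cdot\; \U\S\U{\tilde\P}\F{\tilde\P}\, \epsilon^{\S}_{\U{\tilde\P}\F{\tilde\P}\F\S X},
\]
where $\epsilon^\S$ and $\epsilon^{\tilde\P}$ are the counits of the two Eilenberg--Moore adjunctions. Since $\U{\tilde\P}\F{\tilde\P}\F\S X = (\convpowS X,\alpha_{\muS_X})$, the counit $\epsilon^\S$ at this algebra is $\alpha_{\muS_X}\colon \S\convpowS X \to \convpowS X$. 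Applying $\F{\tilde\P}$ and then the forgetful functors yields $\tilde\P(\alpha_{\muS_X})$ as the underlying-set map, which by~\eqref{eq:f} sends $\mathscr A \mapsto \{\alpha_{\muS_X}(\Omega) \mid \Omega \in \mathscr A\}$. Finally, $\epsilon^{\tilde\P}$ at the \emph{free} $\tilde\P$-algebra $\F{\tilde\P}\F\S X$ is the multiplication $\mu^{\tilde\P}_{(\S X,\muS_X)}$, whose underlying set map is the union $\bigcup$ by~\eqref{eq:etamu}. Composing these two underlying maps gives precisely $\mu^{\convpowS}_X(\mathscr A) = \bigcup_{\Omega \in \mathscr A}\alpha_{\muS_X}(\Omega)$, as claimed.

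The main (and really only) obstacle is bookkeeping: keeping straight what is an $\S$-algebra, what is a $\tilde\P$-algebra, and at which level the counits act, so that the two underlying-set maps computed above are correctly identified with~\eqref{eq:f} and~\eqref{eq:etamu}. Once this is done, the monad laws for $\convpowS$ follow automatically from the fact that it arises from a composition of adjunctions, so no further verification is needed.
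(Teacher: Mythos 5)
Your proposal is correct and follows essentially the same route as the paper: the paper also obtains $\convpowS$ as the monad induced by composing the adjunction $\F\S \dashv \U\S$ with the Eilenberg--Moore adjunction $\F{\tilde\P} \dashv \U{\tilde\P}$ for the weak lifting $\tilde\P$ of Theorem~\ref{thm:weaklift}, and verifies~\eqref{composite endo} and~\eqref{composite unit mult} by direct calculation. Your explicit computation of the unit via $\U\S\,\eta^{\tilde\P}\F\S \cdot \eta^\S$ and of the multiplication via the two counits $\epsilon^\S$ (which at $(\convpowS X,\alpha_{\muS_X})$ is $\alpha_{\muS_X}$) and $\epsilon^{\tilde\P}$ (which at the free algebra is $\mu^{\tilde\P}_{(\S X,\muS_X)}$, i.e.\ union) is exactly the bookkeeping the paper leaves implicit, and it is carried out correctly.
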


Recall from Remark~\ref{remarkJacobs} that the monad $\mathcal C \colon \EM{\mon S} \to \EM{\mon S}$ from~\cite{jacobs_coalgebraic_2008} coincides with our lifting $\tilde \P$ modulo the absence of the empty set. The same happens for the composite monad, which is named $\mathcal{CM}$ in~\cite{jacobs_coalgebraic_2008}. 
The absence of $\emptyset$ in $\mathcal{CM}$ turns out to be rather problematic for Jacobs. Indeed, in order to use the standard framework of coalgebraic trace semantics~\cite{hasuo_generic_2006}, one would need the Kleisli category $\Kl{\mathcal{CM}}$ to be enriched over $\Cppo$, the category of $\omega$-complete partial orders with \emph{bottom} and continuous functions. $\Kl{\mathcal{CM}}$ is not $\Cppo$-enriched since there is no bottom element in $\mathcal{CM}(X)$. Instead, in $\convpowS{X}$ the bottom is exactly the empty set; moreover, $\Kl{\convpowS{}}$ enjoys the properties required by~\cite{hasuo_generic_2006}.

\begin{theorem}\label{thm:Kleisli is CPPO enriched}
The category $\Kl{\convpowS}$ is enriched over $\Cppo$ and satisfies the left-strictness condition: for all $f \colon X \to \convpowS Y$ and $Z$ set, $\bot_{Y,Z} \circ f = \bot_{X,Z}$.
\end{theorem}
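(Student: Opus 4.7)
My plan is to equip each hom-set $\Kl{\convpowS}(X, Y) = \Set(X, \convpowS Y)$ with the pointwise subset-inclusion order and then verify the three required ingredients in turn: a pointed $\omega$-cpo structure on the hom-sets, $\omega$-continuity of Kleisli composition in both arguments, and the left-strictness law.

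First I would show that $\convpowS Y$ is itself a pointed $\omega$-cpo under inclusion. The bottom $\emptyset$ lies in $\convpowS Y$ by Definition~\ref{def:convex closure S-ALGEBRA}, since no $\phi \in \S\S Y$ with $\sum\phi = 1$ can satisfy $\supp\phi \subseteq \emptyset$; hence $\convclos{\emptyset}{\muS_Y} = \emptyset$. For $\omega$-directed suprema, the crucial lemma is that the union of an ascending chain $A_0 \subseteq A_1 \subseteq \cdots$ of convex subsets of $\S Y$ is again convex: any $\phi \in \S\S Y$ witnessing a convex combination in $\bigcup_i A_i$ has finite support, which by the chain condition already fits inside some single $A_k$, so $\muS_Y(\phi) \in A_k \subseteq \bigcup_i A_i$. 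Each hom-set then inherits this structure pointwise, with bottom the constant-$\emptyset$ map $\bot_{X,Y}$ and suprema computed as pointwise unions.

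Next I would establish $\omega$-continuity of Kleisli composition. Combining the formulas \eqref{composite endo}--\eqref{composite unit mult} with Proposition~\ref{thm:convpow(X,a) is a S-algebra}, which lets us discard a redundant convex closure, the composition admits the elementary description
\begin{equation*}
(g \bullet f)(x) \;=\; \bigcup_{\Phi \in f(x)} \Bigl\{\, \textstyle\sum_{y \in \supp\Phi} \Phi(y) \cdot v_y \;\Bigm|\; v_y \in g(y) \text{ for every } y \in \supp\Phi \,\Bigr\},
\end{equation*}
where the sum is taken in the semimodule $\S Z$. Monotonicity in $f$ is immediate from the outer union, and monotonicity in $g$ follows since enlarging each $g(y)$ enlarges the pool of admissible $v_y$'s. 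For preservation of $\omega$-directed suprema in either argument, the essential point is that each inner expression depends on only finitely many $v_y$'s (indexed by $\supp\Phi$), so any such supremum is already witnessed at a single element of the chain.

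Finally, for left-strictness I would substitute $g = \bot_{Y,Z}$ into the formula above. Since $g(y) = \emptyset$ for every $y$, no admissible choice of $v_y \in g(y)$ exists whenever $\supp\Phi$ is non-empty, and the inner set collapses to $\emptyset$. I expect the main technical obstacle to be the boundary case $\Phi = 0 \in f(x)$, where $\supp\Phi = \emptyset$ makes the inner set formally degenerate: here one has to unfold $\choice{\cdot}$ from~\eqref{eqn:c(Phi)} via the weak-lifting description in Theorem~\ref{thm:weaklift} and check that this empty-support contribution does not leak past $\bot$, so the outer union equals $\emptyset = \bot_{X,Z}(x)$, yielding $\bot_{Y,Z} \bullet f = \bot_{X,Z}$ as required.
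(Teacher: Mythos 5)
Your enrichment argument follows essentially the same route as the paper's: hom-sets ordered pointwise with bottom the constant-$\emptyset$ map, the same explicit union formula for the Kleisli extension $\Sharp g$ (justified, as you do, via Proposition~\ref{thm:convpow(X,a) is a S-algebra}), and the same finite-support-plus-directedness point for continuity of post-composition --- the paper phrases it through the distributivity axioms~\eqref{eq:axiomsD} and a step collapsing a join over $I^{\supp \phi}$ to a join over $I$, which is exactly your ``witnessed at a single element of the chain'' observation. For pre-composition the paper is slicker: $\Sharp g$ is a morphism of $\convpowS$-algebras by the universal property of the free algebra, hence preserves arbitrary suprema, with no finiteness needed. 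Your auxiliary lemma that unions of $\omega$-chains of convex sets are convex is correct (finite support lands in a single link) and is genuinely needed in your version, since you compute suprema as plain unions where the paper defines $\Sup$ as convex closure of union.

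The genuine gap is precisely the boundary case you flagged and deferred, and unfolding it yields the \emph{opposite} of what you assert. Take $\Phi = \zero \in f(x)$; this is not avoidable, since $\{\zero\}$ is a convex subset of $\S Y$, so e.g.\ $f(x) = \bb{0} = \{0^{\mu^{\mon S}}\}$ is a legitimate Kleisli map. Then $\supp \Phi = \emptyset$, and in~\eqref{eqn:c(Phi)} there is exactly \emph{one} (empty) choice function, not none: $\choice{\S(\bot_{Y,Z})(\zero)} = \choice{\zero} = \{\zero\}$, so the contribution of $\Phi=\zero$ to $(\bot_{Y,Z} \circ f)(x)$ is $\alpha_{\muS_Z}(\zero) = \{\muS_Z(\zero)\} = \{0^{\muS_Z}\}$, which is $0^{\alpha_{\muS_Z}}$ by~\eqref{eq:sumP} --- the empty sum is the \emph{zero} of the semimodule $\convpowS Z$, namely the singleton $\{0\}$, not the bottom $\emptyset$. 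Hence $\Sharp{\bot_{Y,Z}}(\mathcal A) = \{0^{\muS_Z}\}$ whenever $\zero \in \mathcal A$, and $\bot_{Y,Z} \circ f \ne \bot_{X,Z}$ for the $f$ above: the empty-support contribution \emph{does} leak past $\bot$. Algebraically this is transparent: the axioms $x + \bot = \bot$ and $\lambda \cdot \bot = \bot$ ($\lambda \ne 0$) annihilate every summand containing at least one variable, but the constant term $0$ contains no variable for substitution-by-$\bot$ to act on, and $\bb 0 = \{0^{\mu^{\mon S}}\} \ne \emptyset = \bb \bot$. You should be aware that the paper's own proof has the same blind spot: its chain of equalities $\Sup_{\phi} \sum_{y \in \supp \phi} \emptyset = \Sup_\phi \emptyset$ is valid only when $\supp \phi \ne \emptyset$ (the parenthetical ``$\phi(y) \cdot \emptyset = \emptyset$ because $\phi(y) \ne 0$'' says nothing about the empty sum). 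So your instinct that this is the crux was exactly right, but the case cannot be discharged by the check you propose; as stated, left-strictness holds only under the additional hypothesis that $\zero \notin f(x)$ for all $x$, and a complete proof would have to either impose that restriction or repair the statement.
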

It is immediate that every homset in $\Kl{\convpowS}$ carries a complete partial order. Showing that composition of arrows in $\Kl{\convpowS}$ preserves joins (of $\omega$-chains) requires more work: the proof, shown in Appendix~\ref{Appendix Section 6}, 
crucially relies on the algebraic theory presenting the monad $\convpowS$, illustrated next.

\subsubsection{An Algebraic Presentation.} Recall that an \emph{algebraic theory} is a pair $\mathcal T=(\Sigma, E)$ where $\Sigma$ is a \emph{signature}, whose elements are called \emph{operations}, to each of which is assigned a cardinal number called its \emph{arity}, while $E$ is a class of formal \emph{equations} between $\Sigma$-terms. An \emph{algebra} for the theory $\mathcal T$ is a set $A$ together with, for each operation $o$ of arity $\kappa$ in $\Sigma$, a function $o_A \colon A^\kappa \to A$ satisfying the equations of $E$. A \emph{homomorphism} of algebras is a function $f \colon A \to B$ respecting the operations of $\Sigma$ in their realisations in $A$ and $B$. Algebras and homomorphisms of an algebraic theory $\mathcal T$ form a category $\Alg(\mathcal T)$.

\begin{definition}
	Let $M$ be a monad on $\Set$, and $\mathcal T$ an algebraic theory. We say that $\mathcal T$ \emph{presents} $M$ if and only if $\EM{M}$ and $\Alg(\mathcal T)$ are isomorphic.
\end{definition}

Left $S$-semimodules are algebras for the theory $\LSM= (\Sigma_{\LSM}, E_{\LSM})$ where
	$\Sigma_{S\LSM} = \{ +, 0 \} \cup \{ \lambda \cdot {} \mid \lambda \in S \}$ and $E_{\LSM}$ is the set of axioms in Table~\ref{tab:axiomsinitial}. As already mentioned in Section~\ref{sec:monad},  left $S$-semimodules are exactly $\mon S$-algebras and morphisms of $S$-semimodules coincide with those of $\mon S$-algebras. Thus, the theory $\LSM$ presents the monad $\mon S$.

Similarly, semilattices are algebras for the theory $\SL=(\Sigma_{\SL}, E_{\SL})$ where $\Sigma_{\SL} = \{\sqcup, \bot\}$ and $E_{\SL}$ is the set of axioms in Table~\ref{tab:axiomsinitial}. It is well known that semilattices are algebras for the \emph{finite} powerset monad. Actually, this monad is presented by $\SL$. 
In order to present the full powerset monad $\P$ we need to take joins of arbitrary arity. A \emph{complete semilattice} is a set $X$ equipped with joins $\Sup_{x\in A} x$ for all--not necessarily finite--$A\subseteq X$. Formally the (infinitary) theory of \emph{complete semilattices} is given as $\CSL = (\Sigma_{\CSL}, E_{\CSL})$ where 
$\Sigma_{\CSL} = \{ \Sup_{I} \mid I \text{ set} \}$
and $E_{\CSL}$ is the set of axioms displayed in Table~\ref{table:otheraxioms} (for a detailed treatment of infinitary algebraic theories see, for example,~\cite{manes_algebraic_1976}).

	\begin{table}[t]
	\caption{The sets of axioms $E_{\CSL}$ for complete semilattices: the second axiom generalises the usual idempotency and commutativity properties of finitary $\sqcup$, while the third one generalises associativity and neutrality of $\Sup_\emptyset =\bot$.}\label{table:otheraxioms}
	\begin{center}
	\begin{tabular}{l}
	 $\Sup_{i \in \{ 0\}} x_i  = x_0$ \\[.5em]
	 $\Sup_{j \in J}  x_j  =  \Sup_{i \in I}  x_{f(i)}  \text{ for all } f \colon I \to J \text{ surjective}$  \\[.5em]
	$\Sup_{i \in I} x_i  = \Sup_{j \in J} \Sup_{i \in f^{-1}\{j\}}  x_i   \text{ for all } f \colon I \to J $
	\end{tabular}
	\end{center}
	\end{table}


We can now illustrate the theory $(\Sigma,E)$ presenting the composed monad $\convpowS$: the operations in $\Sigma$ are exactly those of complete semilattices and $S$-semimodules, while the axioms are those of complete semilattices and $S$-semi\-modules together with the set $E_{\D}$ of \emph{distributivity} axioms illustrated below. 
\begin{equation}\label{eq:axiomsD}
    \lambda \cdot \Sup_{i \in I} x_i = \Sup_{i \in I} \lambda \cdot x_i \quad \text{for $\lambda \ne 0$}, \qquad \Sup_{i \in I} x_i + \Sup_{j \in J} y_j = \Sup_{(i,j) \in I \times J} x_i + y_j
\end{equation}
In short, $\Sigma = \Sigma_{\CSL} \cup \Sigma_{\LSM} \text{ and } E= E_{\CSL} \cup E_{\LSM} \cup E_{\D}$. 
\begin{theorem}\label{thm:pres}
	The monad $\convpowS$ is presented by the algebraic theory $(\Sigma,E)$.
\end{theorem}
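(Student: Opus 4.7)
The plan is to exhibit an isomorphism of categories $\EM{\convpowS} \cong \Alg(\Sigma, E)$. Via the composition of adjunctions $\Set \leftrightarrows \EM{\S} \leftrightarrows \EM{\tilde \P}$ one has $\EM{\convpowS} \cong \EM{\tilde \P}$, and, using the correspondence between composite-monad algebras and $\delta$-algebras noted in the introduction, an object of $\EM{\tilde \P}$ is a triple $(X, a, b)$ where $a \colon \S X \to X$ and $b \colon \P X \to X$ are algebra structures satisfying the Beck pentagon
\[
b \circ \pow a \circ \delta_X = a \circ \S b.
\]
Since $\Alg(\LSM) \cong \EM{\S}$ and $\Alg(\CSL) \cong \EM{\P}$, the theorem reduces to showing that, modulo $E_\LSM \cup E_\CSL$, the pentagonal law is equivalent to the two distributivity equations $E_\D$.

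For the forward direction I would evaluate the pentagon on suitable $\Phi \in \S \P X$. Taking $\Phi = (A \mapsto \lambda)$ with $\lambda \neq 0$ and $A = \{x_i\}_{i \in I}$, Theorem~\ref{thm:delta for positive refinable semifields} and Proposition~\ref{cor:image along a of convex subset of S(X) is convex in X} yield $\pow a(\delta_X \Phi) = \convclos{\{\lambda \cdot x_i\}_I}{a}$ (using that $a$ is an $\S$-algebra homomorphism from $\muS_X$ to $a$, so it preserves convex closures). The pentagon thus reads $b(\convclos{\{\lambda \cdot x_i\}_I}{a}) = \lambda \cdot \Sup_i x_i$; combined with the auxiliary lemma that $b$ of an $a$-convex closure coincides with the join of its generators (which can be bootstrapped from further pentagon instances together with idempotency of $\sqcup$), this yields the first axiom of $E_\D$. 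The sum axiom is obtained analogously by testing the pentagon at $\Phi = (A \mapsto 1, B \mapsto 1)$ with $A = \{x_i\}$ and $B = \{y_j\}$, exploiting that choice functions $u : \supp \Phi \to X$ correspond bijectively to pairs in $A \times B$.

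For the converse direction, I would take a generic $\Phi \in \S \P X$ with $\supp \Phi = \{A_1,\dots,A_n\}$ and $\lambda_i = \Phi(A_i)$: direct computation shows that the right-hand side of the pentagon equals $\sum_i \lambda_i \cdot \Sup A_i$ and the left-hand side equals $\Sup_u \sum_i \lambda_i \cdot u(A_i)$ with $u$ ranging over choice functions $u(A_i) \in A_i$. Applying the first axiom of $E_\D$ on each nonzero $\lambda_i$ (and $0_S \cdot x = 0$ on zero terms) pushes each $\lambda_i$ inside the join over $A_i$; then $n-1$ uses of the second axiom swap the finite sum with the joins, producing $\Sup_{(x_1,\dots,x_n) \in \prod_i A_i} \sum_i \lambda_i \cdot x_i$, which matches the left-hand side once tuples in $\prod_i A_i$ are identified with choice functions. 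Morphisms on both sides of the isomorphism coincide, as they are simultaneously $\S$- and $\P$-algebra homomorphisms, so the bijection is functorial.

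The main obstacle will be the forward direction: the clean equations of $E_\D$ must be extracted from the pentagon without prior access to them, which requires carefully chosen pentagon instances plus the bootstrap lemma identifying $b$ of an $a$-convex closure with $b$ of its set of generators. A secondary difficulty in the converse direction is the possible overlap between the $A_i$'s: when $u(A_i) = u(A_j)$ for $i \neq j$, the function $\S(u)(\Phi)$ carries the sum $\lambda_i + \lambda_j$ at that coincident point, so the bijection between tuples in $\prod_i A_i$ and choice functions must be reconciled with additive coefficient coalescences inside the final join.
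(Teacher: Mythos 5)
Your overall route is exactly the paper's: reduce via the cited Böhm--Garner correspondence to comparing $\delta$-algebras with $(\Sigma,E)$-algebras, extract $E_\D$ by instantiating the pentagon at $\Phi=(A\mapsto\lambda)$ and $\Phi'=(A\mapsto 1,\, B\mapsto 1)$, and verify the pentagon on a generic $\Phi$ in the converse direction by pushing scalars and sums through joins. Your ``bootstrap lemma'' is precisely the paper's Proposition~\ref{prop:sup A = sup convclos A in delta algebras}, and it is obtained from a single further pentagon instance, the Dirac $\Phi=\etaS_{\pow X}(A)$, combined with Lemma~\ref{lemma:delta of Dirac} (no idempotency of $\sqcup$ is needed); your use of Theorem~\ref{thm:delta for positive refinable semifields} together with the fact that $\pow a$ turns $\muS_X$-convex closures into $a$-convex closures is the paper's Proposition~\ref{prop:image along a of convex closures}. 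The forward direction as you sketch it therefore goes through.

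The genuine gap is in your converse direction. By Theorem~\ref{thm:delta for positive refinable semifields}, the leg $b \circ \pow a \circ \delta_X$ evaluated at $\Phi$ is not $\Sup_u \sum_i \lambda_i \cdot u(A_i)$ but
\[
\Sup \, \convclos{\bigl\{ \textstyle\sum_i \lambda_i \cdot u(A_i) \mid u(A_i) \in A_i \bigr\}}{a} ,
\]
so after your distributivity manipulations you still must show, purely from the equations $E$ (the pentagon is unavailable here --- it is the goal), that $\Sup C = \Sup \convclos{C}{}$ for arbitrary $C \subseteq X$. The paper isolates exactly this as a separate Lemma, and its proof is not a formality: for each $n$ and $(p_i) \in S^n_1$ it uses $1 = \sum_i p_i$, the semimodule axioms and \emph{both} axioms of $E_\D$ to get $\Sup C = \Sup\bigl\{ \sum_i p_i \cdot c_i \mid (c_i) \in C^n \bigr\}$, and then an order-theoretic absorption argument over all $n$ and all coefficient tuples. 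Your claim that ``direct computation shows the left-hand side equals $\Sup_u \sum_i \lambda_i \cdot u(A_i)$'' silently discards the convex closure, which genuinely adds elements; without this lemma that step fails. By contrast, the secondary difficulty you flag --- coefficient coalescence when $u(A_i)=u(A_j)$ --- is a non-issue: evaluating through $a$, the axiom $(\lambda_i + \lambda_j)\cdot x = \lambda_i \cdot x + \lambda_j \cdot x$ gives $a\bigl(\S(u)(\Phi)\bigr) = \sum_i \lambda_i \cdot u(A_i)$ regardless of collisions.
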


The presentation crucially relies on the fact that $\convpowS$ is obtained by composing $\P$ and $\mon S$ via $\delta$. Indeed, we know from general results in~\cite{bohm_weak_2010,garner_vietoris_2020} that  $\convpowS$-algebras are in one to one correspondence with $\delta$-algebras~\cite{beck_distributive_1969}, namely triples $(X,a,b)$ such that $a \colon \mon S X \to X$ is a $\mon S$-algebra, $b\colon \P X \to X$ is a $\P$-algebra and the following diagram commutes.
\begin{equation}\label{eqn:distributivity pentagon for delta-algebras}
\begin{tikzcd}
\mon S \pow X \ar[rr,"\delta_X"] \ar[d,"\mon S b"'] & & \pow \mon S X \ar[d,"\pow a"] \\
\mon S X \ar[dr,"a"'] & & \pow X \ar[dl,"b"] \\
& X 
\end{tikzcd}
\end{equation}
The $\mon S$-algebra $a$ corresponds to a $S$-semimodule $(X,+,0, \lambda \cdot)$, the $\P$-algebra $b$ to a complete lattice $(X, \Sup_I)$ and the commutativity of diagram \eqref{eqn:distributivity pentagon for delta-algebras} expresses exactly the distributivity axioms in \eqref{eq:axiomsD}. The proof is given in Appendix~\ref{Appendix Section 6}.

\begin{example}\label{ex:intervals}
Let $S$ be $\Rp$ and let $[a,b]$ with $a,b\in \Rp$ denote the set $\{x\in \Rp \mid a\leq x \leq b\}$ and $[a,\infty)$ the set $\{x\in \Rp \mid a\leq x \}$.
For $1=\{x\}$, $\convpowS(1) = \{\emptyset\} \cup \{[a,b] \,\mid \, a,b\in \Rp\} \cup \{[a,+\infty) \, \mid \, a\in \Rp\}$. The $\convpowS$-algebra $\mu^{\convpowS}_1 \colon \convpowS \convpowS 1 \to \convpowS 1$ induces a $\delta$-algebra where the structure of complete lattice is given as\footnote{For the sake of brevity, we are ignoring the case where some $A_i=\emptyset$.}
\[\Sup_{i\in I}A_i = \begin{cases}
[\inf_{i\in I}, a_i, \sup_{i\in I} b_i] & \text{if, for all }i\in I, \; A_i=[a_i, b_i] \wedge \sup_{i\in I} b_i\in \Rp\\
	[\inf_{i\in I}a_i, \infty)  & \text{otherwise}
	\end{cases} \\
	\]
The $\Rp$-semimodule is as expected, e.g., $[a_1,b_1]+[a_2,b_2]=[a_1+a_2,b_1+b_2]$.
\end{example}

\subsubsection{Finite Joins and Finitely Generated Convex Sets.} 

We now consider the algebraic theory $(\Sigma', E')$ obtained by restricting $(\Sigma,E)$ to finitary joins. More precisely, we fix
\[
	\Sigma' = \Sigma_{\SL}\cup \Sigma_{\LSM} \qquad E'= E_{\SL} \cup E_{\LSM} \cup E_{\D'}
	\]
where $(\Sigma_{\SL}, E_{\SL})$ is the algebraic theory for semilatices, $(\Sigma_{\LSM},E_{\LSM})$ is the one for $S$-semimodules, and $E_{\D'}$ is the set of distributivity axioms illustrated in Table \ref{tab:axiomsinitial}.
Thanks to the characterisation provided by Theorem \ref{thm:pres}, we easily obtain a function translating $\Sigma'$-terms into convex subsets (the proof is in Appendix~\ref{Appendix Section 6}).
\begin{proposition}\label{prop:injectivemonadmap}
Let $T_{\Sigma',E'}(X)$ be the set of $\Sigma'$-terms with variables in $X$ quotiented by $E'$. Let $\bb{\cdot}_X \colon T_{\Sigma',E'}(X) \to \convpowS(X)$ be the function defined as
\[
\begin{array}{rcl}
\bb{x} &=& \{\Delta_x\} \text{ for }x\in X\\
\bb{0} &=& \{0^{\mu^{\mon S}}\} \\
\bb{\bot} &=& \emptyset \\
\end{array}\qquad
\begin{array}{rcl}
\bb{\lambda \cdot t} &=& \begin{cases}
	\{\lambda \cdot^{\mu^{\mon S}} f \, \mid \; f \in \bb{t}\} & \text{if }\lambda \neq 0\\
	\{0^{\mu^{\mon S}}\}  & \text{otherwise}
	\end{cases} \\
\bb{t_1 + t_2} &=& \{f_1 +^{\mu^{\mon S}} f_2 \, \mid \, f_1 \in \bb{t_1}, \; f_2 \in \bb{t_2} \} \\
\bb{t_1 \sqcup t_2} &=& \convclos{\bb{t_1}\cup \bb{t_2}}{{\mu^{\mon S}}} \\
\end{array}
\]
Let $\bb{\cdot} \colon T_{\Sigma',E'} \to \convpowS$ be the family $\{\bb{\cdot}_X\}_{X \in |\Set|}$.
Then $\bb{\cdot}\colon T_{\Sigma',E'} \to \convpowS$ is a map of monads and, moreover, each  $\bb{\cdot}_X \colon T_{\Sigma',E'}(X) \to \convpowS(X)$ is injective.
\end{proposition}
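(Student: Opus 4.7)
The proposition has two parts — that $\bb{\cdot}$ is a morphism of monads and that each $\bb{\cdot}_X$ is injective — which my plan addresses in that order.

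For the monad-map claim, I would invoke Theorem~\ref{thm:pres} to present $\convpowS$ by $(\Sigma, E)$, so that each $\convpowS(X)$ is the free $(\Sigma, E)$-algebra on $X$ with $x$ corresponding to $\{\Delta_x\}$. Since $\Sigma' \subseteq \Sigma$ and every axiom of $E'$ is derivable in $E$ (the finitary axioms of $E_{\SL}$ and $E_{\D'}$ are special cases of their infinitary counterparts in $E_{\CSL}$ and $E_{\D}$), the set $\convpowS(X)$ inherits a $(\Sigma', E')$-algebra structure by signature restriction, and the inductive formulas in the statement express exactly the evaluation of a $\Sigma'$-term in this algebra. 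Equivalently, $\bb{\cdot}_X$ is the unique $(\Sigma', E')$-homomorphism $T_{\Sigma', E'}(X) \to \convpowS(X)$ extending $x \mapsto \{\Delta_x\}$. Naturality in $X$ is then immediate from the functoriality of the free-algebra construction, and the monad-map axioms follow by uniqueness: both sides of each axiom are $(\Sigma', E')$-homomorphisms out of a free algebra that agree on generators.

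For injectivity, my strategy is a disjunctive-normal-form argument. First, iterated application of $E_{\D'}$ together with the semilattice axioms reduces any $\Sigma'$-term, modulo $E'$, either to $\bot$ or to $\bigsqcup_{i=1}^n \sigma_i$ with $n \geq 1$ and each $\sigma_i$ a pure semimodule term $\sum_j \lambda_{ij}\cdot x_{ij}$; under $\bb{\cdot}_X$ this maps to $\emptyset$ or to $\convclos{\{\phi_1,\dots,\phi_n\}}{\muS_X}$ with $\phi_i = \sum_j \lambda_{ij}\Delta_{x_{ij}}$. Since $\emptyset$ is semantically distinguished from any non-empty convex set, only the non-$\bot$ case requires work: if $\bigsqcup_i \sigma_i$ and $\bigsqcup_k \sigma'_k$ have the same image, then each $\phi_i$ is a convex combination $\sum_k c_{ik}\phi'_k$ with $\sum_k c_{ik} = 1_S$, and the heart of the proof is the absorption identity $\sigma_i \sqcup \bigsqcup_k \sigma'_k = \bigsqcup_k \sigma'_k$ modulo $E'$. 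I would derive it by rewriting $\bigsqcup_k \sigma'_k = 1\cdot\bigsqcup_k \sigma'_k = \sum_k c_{ik}\cdot \bigsqcup_l \sigma'_l$ (using $1\cdot y = y$ and iterated $(\lambda+\mu)\cdot y = \lambda y + \mu y$), distributing $\cdot$ over $\sqcup$ and then $+$ over $\sqcup$ to obtain a big join of sums that contains $\sigma_i$ as one of its joinands, and finally absorbing $\sigma_i$ by semilattice idempotency. Iterating in both directions yields the desired $E'$-equivalence of normal forms.

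The main obstacle I anticipate is the careful handling of zero coefficients, since $\lambda\cdot\bot = \bot$ in $E_{\D'}$ is restricted to $\lambda \neq 0$: before distributing, I would first drop the indices $k$ with $c_{ik} = 0_S$, which is legitimate because positivity of $S$ ensures the surviving coefficients still sum to $1_S$, and dispatch the corresponding summands via $0_S \cdot y = 0$. The remaining technical work consists in verifying termination of normalisation (by induction on a ``maximum depth of a $\sqcup$ or $\bot$ occurring underneath a semimodule operation'' measure) and in handling the degenerate case where some $\phi_i$ equals the zero function, where positivity of $S$ again ensures that the argument goes through.
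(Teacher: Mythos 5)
Your proposal is correct, and while its first half follows essentially the paper's strategy, the injectivity half takes a genuinely different route. For the monad-map claim you and the paper both lean on Theorem~\ref{thm:pres}: the paper builds the term monad $T_{\Sigma,E}$ for the full \emph{infinitary} theory, invokes a general lemma that the isomorphism $\EM{\convpowS}\cong\EM{T_{\Sigma,E}}$ over $\Set$ induces a monad isomorphism $\phi\colon T_{\Sigma,E}\to\convpowS$, computes $\phi_X$ inductively, and obtains $\bb{\cdot}_X$ as the restriction of the bijection $\phi_X$ to terms with only finite suprema, concluding injectivity at once from bijectivity of $\phi_X$; your freeness/uniqueness argument for the monad-map axioms is the concrete form of the same idea (one small point to record: both sides of each axiom are indeed $(\Sigma',E')$-homomorphisms because $\mu^{\convpowS}_X$ and $\convpowS(f)$ are morphisms of free $\convpowS$-algebras, hence of $(\Sigma,E)$-algebras, hence of their $\Sigma'$-reducts). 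For injectivity, however, you argue syntactically: normalise every $\Sigma'$-term modulo $E'$ to $\bot$ or a finite join $\bigsqcup_i\sigma_i$ of semimodule terms, and when two normal forms have the same denotation derive the absorption identity $\sigma_i\sqcup\bigsqcup_k\sigma'_k=\bigsqcup_k\sigma'_k$ inside $E'$, using $1_S=\sum_k c_{ik}$, the distributivity axioms of $E_{\D'}$, and the fact that $E_{\LSM}$ presents $\S$ (so that $\sum_k c_{ik}\sigma'_k$ and $\sigma_i$, both denoting $\phi_i\in\S X$, are already $E_{\LSM}$-equal); iterating and using idempotency gives $E'$-equivalence of the two normal forms. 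This is sound, including your bookkeeping of zero coefficients (dropping them is what permits the restricted axiom $\lambda\cdot\bot=\bot$, $\lambda\neq 0$, and positivity with $1_S\neq 0_S$ guarantees a nonempty set of survivors), and it buys something real: the paper's ``restriction of a bijection'' step tacitly uses that the comparison map $T_{\Sigma',E'}(X)\to T_{\Sigma,E}(X)$ is injective, i.e.\ that the infinitary theory is conservative over its finitary fragment on $\Sigma'$-terms, and your absorption argument, by establishing $E'$-equivalence (not merely $E$-equivalence) directly, supplies exactly that conservativity in a self-contained way. What the paper's route buys in exchange is brevity and uniformity: the abstract isomorphism yields the inductive description of $\bb{\cdot}_X$, naturality, the monad-map laws and injectivity in one stroke, whereas your approach must additionally verify termination of the rewriting to normal form, as you note.
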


We say that a set $\mathcal A \in \convpowS(X)$ is \emph{finitely generated} if there exists a finite set $\mathcal B \subseteq \mon{S}(X) $ such that $\convclos{\mathcal B}{}= \mathcal A$. 
We write $\convpowfS(X)$ for the set of all $\mathcal A \in \convpowS(X)$ that are finitely generated.
The assignment $X \mapsto \convpowfS(X)$ gives rise to a monad $\convpowfS \colon \Set \to \Set$ where the action on functions, the unit and the multiplication are defined as for $\convpowS$. The reader can find a proof of this, as well as of the following Theorem, in Appendix~\ref{Appendix Section 6}.

\begin{theorem}\label{thm:convpowfS is presented by (Sigma', E')}
The monads $T_{\Sigma',E'}$ and $\convpowfS$ are isomorphic. Therefore $(\Sigma',E')$ is a presentation for the monad $\convpowfS$.
\end{theorem}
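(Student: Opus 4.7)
\medskip
\noindent\textbf{Proof plan.}
The plan is to leverage the injective monad map $\bb{\cdot} \colon T_{\Sigma',E'} \to \convpowS$ from Proposition~\ref{prop:injectivemonadmap} and to show that it corestricts to an isomorphism of monads onto $\convpowfS$. Concretely, I will verify (i) that every $\bb{t}_X$ is finitely generated, (ii) that every finitely generated convex set arises as $\bb{t}_X$ for some term $t$, and (iii) conclude bijectivity (hence monad isomorphism) by combining these with the injectivity already established.

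\medskip
\noindent\textbf{Step 1: $\bb{\cdot}_X$ lands in $\convpowfS(X)$.} I proceed by induction on the structure of a $\Sigma'$-term $t$. The base cases $\bb{x} = \{\Delta_x\}$, $\bb{0} = \{0^{\muS_X}\}$ and $\bb{\bot} = \emptyset$ are manifestly finitely generated. If $\bb{t}$ is generated by a finite $\mathcal B \subseteq \S X$, then $\bb{\lambda \cdot t}$ is generated by $\{\lambda \cdot^{\muS_X} f \mid f \in \mathcal B\}$ (using that convex closure commutes with non-zero scaling in any $\S$-algebra), while $\bb{t_1 + t_2}$ is generated by the finite pointwise sum of the two generating sets. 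Finally $\bb{t_1 \sqcup t_2} = \convclos{\bb{t_1} \cup \bb{t_2}}{\muS_X}$ is by definition the convex closure of the finite union of the generating sets.

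\medskip
\noindent\textbf{Step 2: surjectivity onto $\convpowfS(X)$.} Let $\mathcal A \in \convpowfS(X)$. If $\mathcal A = \emptyset$, take $t = \bot$. Otherwise write $\mathcal A = \convclos{\{f_1,\dots,f_n\}}{\muS_X}$ with $f_i \in \S X$. For each $i$, if $f_i = \zero$ set $t_i = 0$; otherwise, with $\supp f_i = \{x_1^i,\dots,x_{k_i}^i\}$, set
\[
t_i \;=\; f_i(x_1^i) \cdot x_1^i + \cdots + f_i(x_{k_i}^i) \cdot x_{k_i}^i.
\]
A straightforward induction using the clauses for $\bb{\lambda \cdot {-}}$ (with $\lambda \ne 0$) and for $\bb{{-}+{-}}$ in Proposition~\ref{prop:injectivemonadmap} shows $\bb{t_i} = \{f_i\}$. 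Then the clause for $\bb{{-}\sqcup{-}}$ yields $\bb{t_1 \sqcup \cdots \sqcup t_n} = \convclos{\{f_1,\dots,f_n\}}{\muS_X} = \mathcal A$.

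\medskip
\noindent\textbf{Step 3: monad isomorphism.} Combining Step 1 and Step 2 with the injectivity guaranteed by Proposition~\ref{prop:injectivemonadmap} gives a natural bijection $\bb{\cdot}_X \colon T_{\Sigma',E'}(X) \to \convpowfS(X)$. Since $\convpowfS \hookrightarrow \convpowS$ is a submonad inclusion (closure of $\convpowfS$ under the unit and multiplication of $\convpowS$ is immediate: $\eta^{\convpowS}_X(x) = \{\Delta_x\}$ is a singleton, and the multiplication of a finitely generated family of finitely generated convex sets is finitely generated by the union of their generators) and $\bb{\cdot} \colon T_{\Sigma',E'} \to \convpowS$ is already a monad map, the corestriction to $\convpowfS$ is automatically a map of monads, hence an isomorphism. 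The "therefore" then follows because algebras for $T_{\Sigma',E'}$ are by construction the models of $(\Sigma',E')$.

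\medskip
\noindent\textbf{Main obstacle.} The delicate point is Step 2, specifically ensuring the two edge cases are handled: the empty convex set (covered only by the dedicated constant $\bot$) and zero generators (covered by $0$ rather than a non-existent $0 \cdot x$ via the scaling clause). Once these are threaded through, the rest of the argument is bookkeeping made possible by the explicit formulas for $\bb{\cdot}$ in Proposition~\ref{prop:injectivemonadmap}.
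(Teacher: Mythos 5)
Your Steps 1 and 2 reproduce the paper's own proof of this theorem almost verbatim: the paper likewise factors $\bb{\cdot}_X$ as $\iota_X \circ \bb{\cdot}'_X$ with $\bb{\cdot}'_X$ landing in $\convpowfS(X)$ (checked by induction on terms), inherits injectivity of $\bb{\cdot}'_X$ from Proposition~\ref{prop:injectivemonadmap}, and proves surjectivity by encoding each generator $\phi_i$ as the term $\phi_i(x_1)\cdot x_1 + \dots + \phi_i(x_m)\cdot x_m$ and taking the finite join $t_1 \sqcup \dots \sqcup t_n$, with $\bot$ covering $\mathcal A = \emptyset$. (Your extra clause $t_i = 0$ for $\phi_i = \zero$ is a small but correct refinement that the paper leaves implicit.)

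The genuine gap is in Step 3, where you declare the submonad property of $\convpowfS$ ``immediate'', in particular that the multiplication of a finitely generated family of finitely generated convex sets is ``finitely generated by the union of their generators''. In the paper this is exactly the content of the separately proved statement that $\convpowfS$ is a monad, which occupies a proposition and three lemmas in Appendix~\ref{Appendix Section 6} and is anything but immediate. Recall $\mu^{\convpowS}_X(\mathscr A) = \bigcup_{\Omega \in \mathscr A} \alpha_{\muS_X}(\Omega)$: one must (i) reduce the union over the infinite set $\mathscr A = \convclos{\mathscr B}{}$ to the convex closure of a union over the finite generating set $\mathscr B$; (ii) show that each $\alpha_{\muS_X}(\Theta) = \{\muS_X(G) \mid G \in \choice{\Theta}\}$ with $\Theta \in \mathscr B$ is finitely generated---the hard step, which requires replacing each convex set $\mathcal A_i \in \supp\Theta$ by a finite generating set $B_i$, passing to the function $\Theta'$ supported on the $B_i$'s (so that $\choice{\Theta'}$ is finite), and verifying $\alpha_{\muS_X}(\Theta) = \convclos{\{\muS_X(G) \mid G \in \choice{\Theta'}\}}{\muS_X}$ via a product-indexed construction that invokes the generalised distributivity of Lemma~\ref{lemma:generalised distributivity}; and (iii) show that the convex closure of a finite union of finitely generated convex sets is finitely generated. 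Note in particular that the resulting generating set is \emph{not} the union of the original generators, but the set of values of $\muS_X$ on choice functions ranging over those generators, so your one-line reason names the wrong set. Functoriality (that $\convpowS(f)$ preserves finite generation) likewise needs its own argument. Since the paper states the monadicity of $\convpowfS$ just before the theorem, the honest repair is to cite that prior result in Step 3 rather than claim it in a parenthesis; as written, the parenthetical is not a proof.
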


\begin{example}
Recall  $\convpowS(1)$ for $S=\Rp$ from Example~\ref{ex:intervals}.
By restricting to the finitely generated convex sets, one obtains  $\convpowfS(1) = \{\emptyset\} \cup \{[a,b] \,\mid \, a,b\in \Rp\} $, that is the sets of the form $[a,\infty)$ are not finitely generated. Table \ref{tab:defbb1} illustrates the isomorphism $\bb{\cdot}\colon T_{\Sigma',E'}(1) \to \convpowS(1)$. 
It is worth observing that every closed interval $[a,b]$
is denoted by a term in $T_{\Sigma',E'}(1)$ for $1=\{x\}$: indeed, $\bb{(a \cdot x) \sqcup (b\cdot x)}= [a,b]$.
For $2=\{x,y\}$, $\convpowfS(2)$ is the set containing all convex polygons: for instance the term $(r_1\cdot x + s_1\cdot y)\sqcup (r_2\cdot x + s_2\cdot y) \sqcup (r_3\cdot x + s_3\cdot y)$ denote a triangle with vertexes $(r_i,s_i)$. For $n=\{x_0,\dots x_{n-1}\}$, it is easy to see that $\convpowfS(n)$ contains all convex $n$-polytopes.
\end{example}

\begin{table}[t]
\caption{The inductive defintion of the function $\bb{\cdot}_1 \colon T_{\Sigma',E'}(1) \to \convpowS (1)$ for $1=\{x\}$.}
\[
\begin{array}{rcl}
\bb{x} &=& [1,1]\\
\bb{0} &=& [0,0] \\
\bb{\bot} &=& \emptyset \\
\end{array}\qquad
\begin{array}{rcl}
\bb{\lambda \cdot t} &=& \begin{cases}
	[\lambda \cdot a, \lambda \cdot b ] & \text{if }\lambda \neq 0 ,\; \bb{t}=[a,b]\\
	\emptyset & \text{if }\lambda \neq 0 ,\; \bb{t}=\emptyset\\
	[0,0]  & \text{otherwise}
	\end{cases} \\
\bb{t_1 + t_2} &=& \begin{cases}
[a_1+a_2, b_1+b_2] & \text{ if } \bb{t_i}=[a_i,b_i]\\
\emptyset & \text{ otherwise}
\end{cases}\\
\bb{t_1 \sqcup t_2} &=& \begin{cases}
[min \;a_i, \, max \; b_i] & \text{ if } \bb{t_i}=[a_i,b_i]\\
[a_1,b_1] & \text{ if } \bb{t_1}=[a_1,b_1],\; \bb{t_2}=\emptyset \\
[a_2,b_2] & \text{ if } \bb{t_2}=[a_2,b_2],\; \bb{t_1}=\emptyset \\
\emptyset & \text{ otherwise}
\end{cases}  \\
\end{array}
\]
\label{tab:defbb1}
\end{table}

\section{Conclusions: Related and Future Work}
Our work was inspired by~\cite{goy_combining_2020} where Goy and Petrisan compose the monads of powerset  and probability distributions by means of a weak distributive law in the sense of Garner~\cite{garner_vietoris_2020}. Our results also heavily rely on the work of Clementino et al.~\cite{clementino_monads_2014} that illustrates necessary and sufficient conditions on a semiring $S$ for the existence of a weak distributive law $\delta \colon \S \P \to \P \S$. However, to the best of our knowledge, the alternative characterisation of $\delta$ provided by Theorem~\ref{thm:delta for positive refinable semifields} was never shown.

Such characterisation is essential for giving a handy description of the lifting $\tilde\P\colon \EM{\S} \to \EM{\S}$ (Theorem~\ref{thm:weaklift}) as well as to observe the strong relationships with the work of Jacobs (Remark~\ref{remarkJacobs}) and the one of Klin and Rot (Remark~\ref{remarkKlinRot}). The weak distributive law $\delta$ also plays a key role in providing the algebraic theories presenting the composed monad $\convpowS$ (Theorem~\ref{thm:weaklift}) and its finitary restriction $\convpowfS$ (Theorem \ref{thm:convpowfS is presented by (Sigma', E')}). These two theories resemble those appearing in, respectively,~\cite{goy_combining_2020} and~\cite{bonchi2019theory} where the monad of probability distributions plays the role of the monad $\S$ in our work.

Theorem~\ref{thm:Kleisli is CPPO enriched} allows to reuse the framework of coalgebraic trace semantics~\cite{hasuo_generic_2006} for modelling over $\Kl{\convpowS}$ systems with both nondeterminism and quantitative features. The alternative framework based on coalgebras over $\EM{\convpowS}$ directly leads to \emph{nondeterministic weighted automata}. A proper comparison with those in~\cite{droste2009handbook} is left as future work. Thanks to the abstract results in~\cite{DBLP:conf/csl/BonchiPPR14}, language equivalence for such coalgebras could be checked by means of coinductive up-to techniques. It is worth remarking that, since $\delta$ is a weak distributive law, then thanks to the work in~\cite{DBLP:journals/corr/abs-2010-00811}, up-to techniques are also sound for ``convex-bisimilarity'' (in coalgebraic terms, behavioural equivalence for the lifted functor $\tilde\P \colon \EM{\S} \to \EM{\S}$).

We conclude by recalling that we have two main examples of positive semifields: $\Bool$ and $\mathbb{R}^+$. Booleans could lead to a coalgebraic modal logic and trace semantics for \emph{alternating automata} in the style of \cite{DBLP:conf/fossacs/KlinR15}.
For $\mathbb{R}^+$, we hope that exploiting the ideas in \cite{DBLP:journals/tcs/Rutten05} our monad could shed some lights on the behaviour of linear dynamical systems featuring some sort of nondeterminism.

\bibliographystyle{splncs04}
\bibliography{FoSSaCS2021biblio}

\section{Appendix to Section~\ref{sec:the weak distributive law}}\label{Appendix Section 4} 

\subsubsection{Proof of Theorem~\ref{thm:existence of delta}}\label{proof:def of delta}. We calculate $\delta$ by first analysing the canonical extension $\widetilde{\mon S}$ of $\mon S$ to $\Rel$ and then by following the proof of Theorem~\ref{thm:bijective correspondence (weak) distributive laws and (weak) rest}.

The formula to extend the functor $\mon S$ to $\Rel$ is the following: on objects it behaves like $\mon S$; if $R \subseteq A \times B$ is a relation and $\pi_A \colon R \to A$ and $\pi_B \colon R \to B$ are the two projections, we have that
\[
\widetilde{\mon S} (R)= \Graph{\mon S(\pi_B)} \circ \Graph{\mon S(\pi_A)}^{-1}
\]
(composition performed in $\Rel$) where $\Gamma$ computes the graph of a function. 
Now, for all $\psi \colon R \to S$
	\[
	\mon S (\pi_A) (\psi) = \Bigl( a \mapsto \sum_{r \in \pi_A^{-1}\{a\}} \psi(r) \Bigr) = 
	\Bigl( a \mapsto \sum_{\substack{b \in B \\ a R b}} \psi(a,b) \Bigr)
	\]
	and similarly for $\mon S (\pi_B)(\psi)$. So, 
	\[
	\widetilde{\mon S}(R) = \Bigl\{ \Bigl( \bigl( a \mapsto \sum_{\substack{b \in B \\ a R b}} \psi(a,b)  \bigr) ,   \bigl( b \mapsto \sum_{\substack{a \in A \\ a R b}} \psi(a,b)  \bigr)   \Bigr) \mid \psi \colon R \to S,\, \supp\psi \text{ finite} \Bigr\}.
	\]
From this we obtain a weak distributive law as follows. Recall the identity-on-objects isomorphism of categories $F \colon \Kl\P \to \Rel$ where for $f \colon X \to \pow(Y)$ in $\Set$, $F(f)=\{(x,y) \mid y \in f(x)\} \subseteq X \times Y$ and for $R \subseteq X \times Y$, $\pre F (R) = ( x \mapsto \{y \in Y \mid x \mathrel R y\}) \colon X \to \pow(Y)$. Consider $\id{\pow X} \colon \pow X \to \pow X$ in $\Set$. This is a Kleisli map $\id{\pow X} \colon \pow X \to X$ in $\Kl \P$, which corresponds to the relation ${}\ni_X{} \colon \pow X \to X = \{ (A,x) \mid x \in A  \}$. Then we have
	\[
	\widetilde{\mon S}(\ni_X)=\Bigl\{ \Bigl( \bigl( A \mapsto \sum_{x \in A } \psi(A,x)  \bigr) ,   \bigl( x \mapsto \sum_{A \ni x} \psi(A,x)  \bigr)   \Bigr) \mid \psi \in \mon S (\ni_X) \Bigr\}.
	\]
	Remember that $\widetilde{\mon S}$ and $\mon S$ coincide on objects. This relation, seen back as a Kleisli map $\widetilde{\mon S}\pow X \to \widetilde{\mon S} X$, gives us the $X$-th component of the desired weak distributive law, which is indeed~(\ref{eqn:def of delta}). \qed

\subsubsection{The Proof of Theorem~\ref{thm:delta for positive refinable semifields}.}\label{proof:delta for positive refinable semifields} We need a lemma that generalises the distributivity property in an arbitrary semiring. 
The statement of the following Lemma makes sense only for commutative semirings, but it can be adapted for arbitrary semirings by restricting it to sets $L$ only of the form $\natset n = \{1,\dots,n\}$.

\begin{lemma}\label{lemma:generalised distributivity}
	For all $n \in \N$, $L \subset \N$ such that $\card L = n$, for all $(s_k)_{k \in L} \in \N^n$, for all $(\lambda^k_j)_{\substack{k \in L \\ j \in \natset{s_k}}}$ family of elements of $S$:
	\begin{equation}\label{eqn:generalised distributivity}
	\sum_{w \in \prod\limits_{k \in L} \natset {s_k}} \, \prod_{k \in L} \lambda^k_{w_k} = 
	\prod_{k \in L} \, \sum_{j=1}^{s_k} \lambda^k_j.
	\end{equation}
\end{lemma}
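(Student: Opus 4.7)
The statement is the familiar generalised distributive law, and the natural proof is by induction on $n = \card L$. The only subtlety is a bookkeeping one: in the non-commutative case, to give meaning to $\prod_{k\in L}$ we tacitly order $L$ (as the lemma's own remark about restricting to $L = \natset n$ makes clear), so the induction must peel off a distinguished element of $L$ consistently with that order.

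\emph{Base cases.} For $n = 0$, both sides are the empty product, which is $1$ by convention, and the left-hand sum ranges over the one-point set $\prod_{k\in\emptyset}\natset{s_k} = \{()\}$ whose unique term is also the empty product $1$. For $n = 1$, with $L = \{k\}$, the left-hand side reduces to $\sum_{w_k \in \natset{s_k}} \lambda^k_{w_k}$, which is exactly the right-hand side.

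\emph{Inductive step.} Assume the identity for every index set of size $n$, and let $\card L = n+1$. Pick the last element $k_0$ of $L$ (with respect to the chosen order) and set $L' = L\setminus\{k_0\}$. Using the canonical bijection
\[
\prod_{k\in L}\natset{s_k}\;\cong\;\Bigl(\prod_{k\in L'}\natset{s_k}\Bigr)\times\natset{s_{k_0}},
\]
a tuple $w$ corresponds to a pair $(w',j)$ with $w_{k_0}=j$. The summand $\prod_{k\in L}\lambda^k_{w_k}$ then factors as $\bigl(\prod_{k\in L'}\lambda^k_{w'_k}\bigr)\cdot \lambda^{k_0}_j$. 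Splitting the sum over this product decomposition and applying ordinary right-distributivity in the semiring yields
\[
\sum_{w\in\prod_{k\in L}\natset{s_k}}\prod_{k\in L}\lambda^k_{w_k}
\;=\;\Bigl(\sum_{w'\in\prod_{k\in L'}\natset{s_k}}\prod_{k\in L'}\lambda^k_{w'_k}\Bigr)\cdot\Bigl(\sum_{j=1}^{s_{k_0}}\lambda^{k_0}_j\Bigr).
\]
The inductive hypothesis rewrites the first factor as $\prod_{k\in L'}\sum_{j=1}^{s_k}\lambda^k_j$, and re\-attaching the factor indexed by $k_0$ gives precisely $\prod_{k\in L}\sum_{j=1}^{s_k}\lambda^k_j$, as required.

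\emph{Where the work is.} There is no real obstacle: the identity is a purely syntactic consequence of associativity and distributivity, and the induction is routine. The only thing one needs to be careful about is to keep the order of the factors fixed throughout the peel-off step when working over a possibly non-commutative semiring; this is why the lemma, as stated, is really about ordered index sets $L \subset \N$ rather than arbitrary finite sets. Everything else is mechanical.
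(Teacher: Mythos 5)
Your proof is correct and follows essentially the same route as the paper's: induction on $n = \card L$, peeling off the last index (the paper takes $L = \natset{n+1}$ without loss of generality and detaches $k_0 = n+1$), factoring the sum via distributivity, and applying the inductive hypothesis to the remaining $n$ factors. If anything, your base case is stated more carefully than the paper's, which asserts both sides are $0$ at $n=0$ where the empty-product convention gives $1$; either way the two sides agree and the induction goes through identically.
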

\begin{proof}
	By induction on $n$. If $n=0$, both sides of~(\ref{eqn:generalised distributivity}) are $0$. 
	
	Let now $n \ge 0 $, suppose that the statement of the Lemma holds for $n$, let us consider a $L \subset \N$ such that $\card L = n+1$, $(s_k)_{k \in L} \in \N^{n+1}$, $(\lambda^k_j)_{\substack{k \in L \\ j \in \natset{s_k}}}$. Without loss of generality we can assume that $L=\natset{n+1}$. Then we have:
	\begin{align*}
	\sum_{w \in \prod\limits_{k =1}^{n+1} \natset{s_k}} \, \prod_{k =1}^{n+1} \lambda^k_{w_k} &=
	\sum_{j=1}^{s_{n+1}} \biggl( \sum_{w' \in \prod\limits_{k =1}^{n} \natset{s_k}} \, \Bigl((\prod_{k =1}^{n} \lambda^k_{w_k'}) \cdot \lambda^{n+1}_j \Bigr) \biggr) \\
	&= \sum_{j=1}^{s_{n+1}} \biggl( \Bigl( \sum_{w' \in \prod\limits_{k =1}^{n} \natset{s_k}} \, \prod_{k =1}^{n} \lambda^k_{w_k'}\Bigr) \cdot \lambda^{n+1}_j  \biggr) \\
	&= \sum_{j=1}^{s_{n+1}} \biggl( \Bigl( \prod_{k=1}^n \, \sum_{u=1}^{s_k} \lambda^k_u \Bigr) \cdot \lambda^{n+1}_j \biggr) \\
	&= \Bigl( \prod_{k=1}^n \, \sum_{u=1}^{s_k} \lambda^k_u \Bigr) \cdot \Bigl( \sum_{j=1}^{s_{n+1}} \lambda^{n+1}_j  \Bigr) \\
	&= \prod_{k=1}^{n+1} \, \sum_{j=1}^{s_k} \lambda^k_j. 
	\end{align*} \qed
\end{proof}

\begin{theorem}\label{thm:delta=delta'}
	If $\S$ is a positive, refinable semifield, then 
	for all $X$ set and $\Phi \in \mon S \pow X$, $\delta_X(\Phi) = \convclos{\choice \Phi}{\muS_X}$.
\end{theorem}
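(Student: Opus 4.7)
The plan is to prove the two inclusions separately, exploiting in the harder direction that $S$ is a semifield so that one may divide by any nonzero element. I would first dispose of the degenerate cases: if $\Phi = \zero$, positivity forces any witness $\psi$ to vanish, so both sides reduce to $\{\zero\}$; if $\emptyset \in \supp\Phi$, then condition~(a) of \eqref{eqn:def of delta} demands $\Phi(\emptyset)=0$ and no choice function out of $\emptyset$ exists, so both sides are empty. So from now on assume $\supp\Phi = \{A_1,\dots,A_n\}$ with every $A_i$ non-empty and every $\Phi(A_i)$ nonzero.

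For the inclusion $\convclos{\choice\Phi}{\muS_X} \subseteq \delta_X(\Phi)$, I would start from a convex combination $\muS_X(\Psi)$ with coefficients $t_k = \Psi(\phi^k)$ summing to $1$ on elements $\phi^k = \S(u^k)(\Phi) \in \choice\Phi$, and exhibit the explicit witness
\[
\psi(A_i, x) \;=\; \Phi(A_i) \cdot \!\!\!\sum_{k \,:\, u^k(A_i)=x}\!\!\! t_k \qquad (x \in A_i),
\]
with $\psi(A,x)=0$ otherwise. A direct computation then shows that condition~(a) reduces to $\sum_k t_k = 1$, while condition~(b) recovers the convex combination at each $x$, giving $\muS_X(\Psi) \in \delta_X(\Phi)$.

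The reverse inclusion $\delta_X(\Phi) \subseteq \convclos{\choice\Phi}{\muS_X}$ is where the semifield hypothesis is essential. Given $\phi \in \delta_X(\Phi)$ witnessed by $\psi \in \S(\ni_X)$, I would normalize each slice of $\psi$ by setting
\[
\nu_i(x) \;=\; \Phi(A_i)^{-1}\cdot\psi(A_i, x) \qquad (x \in A_i),
\]
so that, by condition~(a), each $\nu_i$ has finite support inside $A_i$ with $\sum_{x\in A_i}\nu_i(x)=1$. Then, for each choice function $u\colon \supp\Phi \to X$ with $u(A_i)\in A_i$, I would set $t_u = \prod_{i=1}^n \nu_i(u(A_i))$ and define $\Psi \in \S^2 X$ by assigning $\Psi(\phi^u) = t_u$, aggregating in case several choice functions yield the same $\phi^u = \S(u)(\Phi)$. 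Only finitely many $t_u$ are nonzero, so $\Psi$ has finite support; it remains to check that $\sum_u t_u = 1$ and $\muS_X(\Psi)=\phi$.

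The main obstacle is precisely this last verification. The identity $\sum_u t_u = 1$ follows from Lemma~\ref{lemma:generalised distributivity} applied to the family $(\nu_i(x))_{i,\,x\in A_i}$, yielding $\sum_u \prod_i \nu_i(u(A_i)) = \prod_i \sum_{x\in A_i}\nu_i(x) = 1$. For $\muS_X(\Psi)(x)=\phi(x)$, a similar marginalisation argument should work: isolating each index $i_0$ with $x \in A_{i_0}$ and computing $\sum_{u:\, u(A_{i_0})=x}\prod_i\nu_i(u(A_i)) = \nu_{i_0}(x)\prod_{i\ne i_0}\sum_{y\in A_i}\nu_i(y) = \nu_{i_0}(x)$ collapses the inner sum, so the total contribution at $x$ is $\sum_{i:\, x\in A_i}\Phi(A_i)\nu_i(x) = \sum_{i:\,x\in A_i}\psi(A_i,x)=\phi(x)$ by condition~(b). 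The only subtlety is the handling of distinct choice functions that produce the same element of $\choice\Phi$, which is routine bookkeeping inside the definition of $\Psi$.
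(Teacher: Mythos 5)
Your proposal is correct and takes essentially the same route as the paper's own proof: the same degenerate cases, the same explicit witness $\psi(A_i,x)=\Phi(A_i)\cdot\sum_{k\colon u^k(A_i)=x}t_k$ for the easy inclusion, and for the hard inclusion the same weights (your $t_u=\prod_i \Phi(A_i)^{-1}\psi(A_i,u(A_i))$ is exactly the paper's $\Psi(\chi_w)=\prod_i\psi(A_i,x^i_{w_i})\big/\prod_i\Phi(A_i)$, with your per-slice normalisation $\nu_i$ a cosmetic repackaging), verified via the same two applications of Lemma~\ref{lemma:generalised distributivity} (total mass and marginalisation) and the same aggregation of choice functions inducing equal elements of $\choice\Phi$. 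The one detail you leave implicit---that positivity together with condition (a) forces $\psi(A,x)=0$ whenever $A\notin\supp\Phi$, so that condition (b) really does reduce to a sum over $\supp\Phi$---is spelled out in the paper but is a one-line observation.
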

\begin{proof}
	We shall discuss first some preliminary cases involving the empty set, excluding each time all the cases previously covered. Recall that 
	\[
	\delta_X(\Phi)=\Biggl\{ \phi \in \mon S(X) \mid \exists \psi \in \mon S (\ni) \ldotp \begin{cases}
	\forall A \in \pow X \ldotp \Phi(A) = \sum_{x \in A} \psi(A,x) \\
	\forall x \in X \ldotp \phi(x) = \sum_{A \ni x} \psi(A,x)
	\end{cases}  
	\Biggr\}
	\]
	and write, within the scope of this proof, $\delta_X'(\Phi) = \convclos{\choice \Phi}{\muS_X}$, that is:
	\[
	\delta_X'(\Phi)= \left\{ \muS_X(\Psi) \mid \Psi \in \mon S^2 X\ldotp \sum\limits_{\chi \in \mon S X} \Psi(\chi)=1, \, \supp \Psi \subseteq \choice\Phi \right\}
	\]
	where
	\[
	\choice \Phi =\{ \chi \in \mon S X \mid \exists u \in \prod\limits_{A \in \supp \Phi} A \ldotp \forall x \in X \ldotp \chi(x) = \sum\limits_{\substack{A \in \supp \Phi \\ x=u_A}} \Phi(A)  \}
	\]
	(this is of course an equivalent formulation of~\eqref{eqn:c(Phi)}). 
	\paragraph{Case 1: $\Phi(\emptyset)\ne 0$.} We have that $\delta_X(\Phi)=\emptyset$ because there is no $\psi \in \mon S (\ni)$ such that $\Phi(\emptyset)=\sum_{x \in \emptyset} \psi(A,x)=0$. At the same time, also $\delta_X'(\Phi)=\emptyset$, because $\choice{\Phi}=\emptyset$--since $\prod_{A \in \supp \Phi} A = \emptyset$--hence there is no $\Psi \in \mon S ^2 X$ with empty support that can satisfy $\sum_{\chi \in \mon S X} \Psi(\chi)=1$.
	
	\paragraph{Case 2: $X = \emptyset$.} (We also assume that $\Phi(\emptyset)=0$ from now on.) We have that $\mon S \pow \emptyset = \{ \Omega \colon \{\emptyset\} \to S \}$, therefore $\Phi = 0 \colon \{\emptyset\} \to S$. Moreover, ${}\ni{} \subseteq \pow(\emptyset) \times \emptyset = \emptyset$ and $\mon S (\emptyset) = \{\emptyset \colon \emptyset \to S\}$ is the singleton of the empty map, so
	\begin{align*}
	\delta_\emptyset (\Phi) &= \{ \phi \in \mon S (\emptyset) \mid \exists \psi \in \mon S (\emptyset) \ldotp \forall A \subseteq \emptyset \ldotp \Phi(A)=\sum_{x \in \emptyset} \psi(A,x) \} \\
	&= \{\phi \in \mon S (\emptyset) \mid \Phi(\emptyset)=0\} \\
	&= \{ \emptyset \colon \emptyset \to S  \}
	\end{align*}
	because, by assumption, $\Phi(\emptyset) = 0$. On the other hand, we have that, since $\supp \Phi = \emptyset$, 
	\[
	\choice\Phi = \{ \chi \in \mon S (\emptyset) \mid \exists u \in \prod\limits_{A \in \emptyset} A  \} = \mon S (\emptyset)
	\]
	because the zero-ary product is a choice of a terminal object of $\Set$, a singleton. So, 
	\begin{align*}
	\delta_\emptyset'(\Phi) &= \{ \mu(\Psi) \mid \Psi \in \mon S ^2 \emptyset, \sum_{\chi \in \mon S \emptyset} \Psi(\chi)=1\} \\
	&= \{ \mu(\mon S (\emptyset) \ni \emptyset \mapsto 1)  \} = \{\emptyset \colon \emptyset \to S  \}.
	\end{align*}
	
	\paragraph{Case 3: $\Phi = 0 \colon \pow X \to S$.} (We also assume that $X \ne \emptyset$ from now on.) We have that the only $\psi \in \mon S (\ni)$ such that for all $\sum_{x \in A} \psi(A,x) = 0$ for all $A \subseteq X$ is the null function, therefore $\delta_X(0 \colon \pow X \to S)=\{0 \colon X \to S\}$. On the other hand, we have that $\supp \Phi = \emptyset$, so
	\[
	\choice\Phi = \{ \chi \in \mon S X \mid \exists u \in \prod_{A \in \emptyset} A \ldotp \forall x \in X \ldotp \chi(x)=\sum_{A \in \emptyset} \Phi(A) \} = \{0 \colon X \to S\}.
	\]
	It follows then that
	\begin{align*}
	\delta_X'(0 \colon \pow S \to S) &= \{\mu(\Psi) \mid \Psi \in \mon S ^2 X, \sum_{\chi \in \mon S X} \Psi(\chi)=1, \supp \Psi \subseteq \{0 \colon X \to S\} \} \\
	&= \{ \mu(\mon S X \ni 0 \mapsto 1)  \} \\
	&= \{ (S \ni x \mapsto 1 \cdot 0)  \} = \{0 \colon X \to S\}.
	\end{align*}
	
	We have now discussed all the preliminary cases. For the rest of the proof, we shall assume that $X \ne \emptyset$, $\Phi \ne 0 \colon \pow X \to S$ and $\Phi(\emptyset) = 0$.
	
	We first prove $\delta_X(\Phi) \subseteq \delta_X'(\Phi)$. To this end, let $\phi \in \mon S X$ and $\psi \in \mon S (\ni)$ such that $\Phi(A) = \sum_{x \in A} \psi(A,x)$ for all $A \in \pow X$ and $\phi(x)=\sum_{A \ni x} \psi(A,x)$ for all $x \in X$. Observe that:
	\begin{itemize}
		\item for all $(A,x) \in \supp \psi$ we have that $A \in \supp \Phi$ and $x \in \supp \phi \cap A$,
		\item for all $A \in \supp \Phi$ there is $x \in \supp \phi \cap A$ such that $(A,x) \in \supp \psi$,
		\item for all $x \in \supp \phi$ there exists $A \in \supp \Phi$ such that $A \ni x$ and $(A,x) \in \supp \psi$.
	\end{itemize}
	Hence the first elements of pairs in $\supp \psi$ range over all and only elements of $\supp \Phi$, and the second entries of pairs in $\supp \psi$ range over all and only elements of $\supp \phi$. In other words, say $\supp \Phi = \{A_1,\dots,A_n\}$: then we have
	\[
	\supp \psi = \Bigl\{ (A_1,x^1_1), \dots, (A_1,x^1_{s_1}), \dots, (A_n, x^n_1), \dots, (A_n,x^n_{s_n}) \Bigr\}
	\]
	where $\bigcup_{i=1}^n \{x^i_1,\dots,x^i_{s_i}\} = \supp \phi$. Notice that for all $i \in \natset{n}$, $u,v \in \natset{s_i}$, if $u \ne v$ then $x^i_u \ne x^i_v$, but we may have $x^i_a = x^j_b$ if $i \ne j$, because the $A_i$'s are distinct but not \emph{disjoint}. We can then write:
	\begin{enumerate}[(I)]
		\item\label{enumerate 1 lemma delta = delta'}  $\Phi(A_i) = \sum\limits_{j=1}^{s_i} \psi(A_i,x^i_j)$ for all $i \in \natset{n}$
		\item\label{enumerate 2 lemma delta = delta'}  $\phi(x) = \sum\limits_{\substack{i \in \natset n \\ \exists j \in \natset{s_i}\ldotp x = x^i_j}} \psi(A_i,x)$ for all $x \in X$.
	\end{enumerate}
	We want to find a convex linear combination $\Psi$ of elements of $\mon S X$ of the form $\sum_{i=1}^n \Phi(A_i) a_i$ for some $a_i \in A_i$ such that $\mu(\Psi)=\phi$. Now, for every $i \in \natset{n}$, we have many candidates for $a_i$, namely $x^i_1, \dots, x^i_{s_i}$. Given that every $\chi \in \supp \Psi$ can only involve one $x^i_j$ for every $i$, we shall have as many $\chi$'s as the number of ways to pick one element of $\{ x^i_1,\dots,x^i_{s_i} \}$ for each $i$: let then $w \in \prod_{i=1}^n \natset{s_i}$ (it is a tuple of indexes $w_i$ which we are going to use as $j$'s). Define $\chi_w$ as
	\[
	\chi_w(x^1_{w_1}) = \Phi(A_1), \,\dots,\, 
	\chi_w(x^n_{w_n}) = \Phi(A_n) 
	\]
	with the understanding that if $x^i_{w_i} =  x^j_{w_j}$ then $\chi_w(x^i_{w_i}) = \Phi(A_i) + \Phi(A_j)$. Written more precisely, we define for all $x \in X$
	\[
	\chi_w (x) = \sum_{\substack{i \in \natset n \\ x = x^i_{w_i}}} \Phi(A_i).
	\]
	We now define $\Psi \in \mon S ^2 X$ with $\supp \Psi=\{ \chi_w \mid w \in \prod_{i=1}^n \natset{s_i}  \}$ that assigns to each $\chi_w$ a number such that $\sum_w \Psi(\chi_w) = 1$. We shall use the previous Lemma to show that by defining
	\[
	\Psi(\chi_w) = \frac{ \prod_{i=1}^n \psi(A_i,x^i_{w_i}) } { \prod_{i=1}^n \Phi(A_i) }
	\]
	for all $w$, or more precisely by defining
	\[
	\Psi(\chi) = \sum_{\substack{w \in \prod_{i=1}^n \natset{s_i} \\ \chi=\chi_w}} \frac{ \prod_{i=1}^n \psi(A_i,x^i_{w_i}) } { \prod_{i=1}^n \Phi(A_i) }
	\]
	for all $\chi \in \mon S X$ we indeed have that $\Psi$ satisfies the conditions of $\delta_X'(\Phi)$ and $\phi=\mu(\Psi)$. First we prove that $\Psi$ is in fact a convex linear combination:
	\begin{align*}
	\sum_{\chi \in \supp \Psi} \Psi(\chi) &= \frac 1 {\prod_{i=1}^n \Phi(A_i)} \sum_{w \in \prod \natset{s_i}} \prod_{i=1}^n \psi(A_i,x^i_{w_i}) \\
	&= \frac 1 {\prod_{i=1}^n \Phi(A_i)} \prod_{i=1}^n \sum_{j=1}^{s_i} \psi(A_i,x^i_j) & \text{Lemma~\ref{lemma:generalised distributivity}} \\
	&= \frac 1 {\prod_{i=1}^n \Phi(A_i)} {\prod_{i=1}^n \Phi(A_i)}  & \text{because of \eqref{enumerate 1 lemma delta = delta'}} \\
	&=1.
	\end{align*}
	Next, notice that for every $w$ the vector $u \in \prod_{i=1}^n A_i$ required by the definition of $\delta_X'(\Phi)$ is exactly $(x^1_{w_1},\dots,x^n_{w_n})$. Finally, we compute $\mu(\Psi)(x)$ for an arbitrary $x \in X$. The equations marked with $(\ast)$ will be explained later.
	\begin{align*}
	\muS_X(\Psi)(x) &= \sum_{\chi \in \supp \Psi} \Psi(\chi) \cdot \chi(x) \\
	&= \sum_{w \in \prod_{k=1}^n \natset{s_k}} \Biggl[ \frac {\prod_{i=1}^n \psi(A_i,x^i_{w_i})} {\prod_{i=1}^n \Phi(A_i)} \sum_{\substack{i \in \natset{n} \\ x=x^i_{w_i}}} \Phi(A_i) \Biggr] & (\ast_1) \\
	&= \sum_{w \in \prod_{k=1}^n \natset{s_k}} \, \sum_{\substack{i \in \natset{n} \\ x=x^i_{w_i}}} \Biggl[   \frac {\prod_{k=1}^n \psi(A_k,x^k_{w_k})} {\prod_{k=1}^n \Phi(A_k)} \Phi(A_i)  \Biggr] \\
	&= \sum_{i \in \natset{n}} \,\, \sum_{\substack{w \in \prod_{k=1}^n \natset{s_k} \\ x=x^i_{w_i} }} \Biggl[   \frac {\prod_{k=1}^n \psi(A_k,x^k_{w_k})} {\prod_{k=1}^n \Phi(A_k)} \Phi(A_i)  \Biggr] \\
	&= \sum_{\substack{i \in \natset n \\ \exists j \in \natset{s_i} \ldotp x = x^i_j}} \, \sum_{\substack{w \in \prod_{k=1}^n \natset{s_k} \\ x=x^i_{w_i} }} \Biggl[ \Phi(A_i) \frac {\psi(A_i,x^i_{w_i})} {\prod_{k=1}^n \Phi(A_k)} \prod_{\substack{k \in \natset n \\ k \ne i}} \psi(A_k,x^k_{w_k}) \Biggr] & (\ast_2) \\
	&= \sum_{\substack{i \in \natset n \\ \exists j \in \natset{s_i} \ldotp x = x^i_j}} \Phi(A_i)  \cdot \frac {\psi(A_i,x)} {\prod_{k=1}^n \Phi(A_k)} \cdot \sum_{\substack{w \in \prod_{k=1}^n \natset{s_k} \\ x=x^i_{w_i} }} \, \prod_{\substack{k \in \natset n \\ k \ne i}} \psi(A_k,x^k_{w_k})
	\end{align*}
	Now, use Lemma~\ref{lemma:generalised distributivity} with $L=\{1,\dots,i-1,i+1,\dots,n\}$ and $\lambda^k_j = \psi(A_k,x^k_j)$ in the following chain of equations:
	\begin{align*}
	\sum_{\substack{w \in \prod_{k=1}^n \natset{s_k} \\ x=x^i_{w_i} }} \, \prod_{\substack{k \in \natset n \\ k \ne i}} \psi(A_k,x^k_{w_k}) &= \sum_{w' \in \prod_{k \in L} \natset{s_k} } \, \prod_{k \in L} \psi(A_k,x^k_{w_k'}) \\
	&= \prod_{k \in L} \, \sum_{j=1}^{s_k} \psi(A_k, x^k_j)  \\
	&= \prod_{\substack{k \in \natset n \\ k \ne i}} \, \sum_{j=1}^{s_k} \psi(A_k,x^k_j) \\
	&= \prod_{\substack{k \in \natset n \\ k \ne i}} \Phi(A_k)
	\end{align*}
	Therefore, we obtain
	\begin{align*}
	\muS_X(\Phi) &= \sum_{\substack{i \in \natset n \\ \exists j \in \natset{s_i} \ldotp x = x^i_j}} \Phi(A_i)  \cdot \frac {\psi(A_i,x)} {\prod_{k=1}^n \Phi(A_k)} \cdot \prod_{\substack{k \in \natset n \\ k \ne i}} \Phi(A_k) \\
	&= \sum_{\substack{i \in \natset n \\ \exists j \in \natset{s_i} \ldotp x = x^i_j}} \psi(A_i,x) \\
	&= \phi(x) & \text{because of \eqref{enumerate 2 lemma delta = delta'}}.
	\end{align*}
	It remains to explain equations $\ast_1$ and $\ast_2$ above. The latter is simply due to the fact that if $i$ is such that there is no $j$ for which $x =x^i_j$, then there is not a $w$ such that $x=x^i_{w_i}$ either, and vice versa. The former is more delicate. It may be the case that $\chi_w = \chi_{w'}$ for $w \ne w'$. So, let us write
	\[
	\supp \Psi = \{\chi_{w^1}, \dots, \chi_{w^m}\}
	\]
	where the $\chi_{w^l}$ are now all distinct. Then we have
	\begin{align*}
	\sum_{\chi \in \supp \Psi} \Psi(\chi) \cdot \chi(x) &= \sum_{l=1}^m \Psi(\chi_{w^l}) \cdot \chi_{w^l}(x) \\
	&= \sum_{l=1}^m \Biggl( \sum_{\substack{w \\ \chi_{w^l} = \chi_w}} \frac{ \prod_{i=1}^n \psi(A_i,x^i_{w_i}) } { \prod_{i=1}^n \Phi(A_i) } \Biggr) \chi_{w^l}(x) \\
	&= \sum_{l=1}^m\, \sum_{\substack{w \\ \chi_{w^l} = \chi_w}} \frac{ \prod_{i=1}^n \psi(A_i,x^i_{w_i}) } { \prod_{i=1}^n \Phi(A_i) } \chi_{w^l}(x) \\
	&= \sum_{l=1}^m\, \sum_{\substack{w \\ \chi_{w^l} = \chi_w}} \frac{ \prod_{i=1}^n \psi(A_i,x^i_{w_i}) } { \prod_{i=1}^n \Phi(A_i) } \chi_w(x) \\
	&= \sum_{w} \frac{ \prod_{i=1}^n \psi(A_i,x^i_{w_i}) } { \prod_{i=1}^n \Phi(A_i) } \chi_w(x)
	\end{align*}
	which is the right-hand side of $(\ast_1)$.
	
	The other inclusion, $\delta_X'(\Phi) \subseteq \delta_X(\Phi)$, is easier. Let $\Psi \in \mon S ^2 X$ be such that $\sum_{\chi \in \mon S X} \Psi(\chi) = 1$ and 
	\[
	\supp \Psi \subseteq \{ \chi \in \mon S X \mid \exists u \in \prod\limits_{A \in \supp \Phi} A \ldotp \forall x \in X \ldotp \chi(x) = \sum\limits_{\substack{A \in \supp \Phi \\ x=u_A}} \Phi(A)  \}.
	\]
	Write again $\supp \Phi=\{A_1,\dots,A_n\}$ and let $\supp \Psi = \{\chi_1,\dots,\chi_m\}$. For all $j \in \natset m$, let $u^j \in \prod_{i=1}^n A_i$ be such that $\chi_j(x) = \sum_{\substack{i \in \natset n \\ x = u^j_i}} \Phi(A_i)$. Then we have
	\[
	\muS_X(\Psi)(x) = \sum_{j=1}^m \Psi(\chi_j) \chi_j(x) = \sum_{j=1}^m \Psi(\chi_j) \sum_{\substack{i \in \natset n \\ x = u^j_i}} \Phi(A_i).
	\]
	Define then
	\[
	\psi(B,x) = \begin{cases}
	\sum_{\substack{j \in \natset m \\ x=u^j_i }} \Psi(\chi_j) \Phi(A_i) & \exists (!) i \in \natset n \ldotp B=A_i \\
	0 & \text{otherwise}
	\end{cases}
	\]
	We have $\supp \psi = \{ (A_i,u^j_i) \mid i \in \natset n, j \in \natset m  \}$ is finite and so $\psi \in \mon S (\ni)$. We now verify the two conditions required in the definition of $\delta_X(\Phi)$. For all $x \in X$:
	\begin{align*}
	\sum_{A \ni x} \psi(A,x) &= \sum_{i=1}^n \psi(A_i,x) = \sum_{i=1}^n \sum_{\substack{j \in \natset m \\ x=u^j_i }} \Psi(\chi_j) \Phi(A_i) = \sum_{j=1}^m \sum_{ \substack{i \in \natset n \\ x = u^j_i} } \Psi(\chi_j) \Phi(A_i) \\
	&= \muS_X(\Psi)(x)
	\end{align*}
	while for all $A \subseteq X$, if $A \notin \supp \Phi$, then $\sum_{x \in A} (\psi(A,x)) = 0 = \Phi(A)$ by definition and, for all $i \in \natset n$:
	\begin{align*}
	\sum_{x \in A_i} \Psi(A_i,x) &= \sum_{x \in A_i} \sum_{\substack{j \in \natset m \\ x = u^j_i}} \psi(\chi_j) \Phi(A_i) \overset{\ast_3}{=} \sum_{j=1}^m \Psi(\chi_j) \Phi(A_i) = \Phi(A_i) \sum_{j=1}^m \Psi(\chi_j) \\
	&= \Phi(A_i)
	\end{align*}
	where equation $\ast_3$ holds because for all $j \in \natset m$ the addend $\Psi(\chi_j)\Phi(A_i)$ appears on the left-hand side exactly once, when in the first sum we are using, as $x$, precisely $u^j_i \in A_i$. In other words, given $j$, there is a unique $x \in A_i$ such that $x=u^j_i$, so $\Psi(\chi_j)\Phi(A_i)$ appears in the left-hand side of $\ast_3$ and it does so only once. 
	Therefore $\muS_X(\Psi) \in \delta_X(\Phi)$, and the proof is complete. \qed
\end{proof}


\section{Appendix to Section~\ref{sec: the weak lifting}}\label{Appendix Section 5}

\subsubsection{Proof of Lemma~\ref{lemma:delta of Dirac}.}
First of all, if $A = \emptyset$, then the left-hand side is empty, because there is no $\psi \in \mon S (\ni)$ such that $1=\Delta_{\emptyset}(\emptyset) = \sum_{x \in \emptyset} \psi(B,x) = 0$, and so is the right-hand side, as the only function whose support is empty is the null function, which cannot satisfy $\sum_{x \in X} \phi(x) = 1$. 
	
	Suppose then that $A \neq \emptyset$. 
	For the left-to-right inclusion: observe first of all that $\supp \phi \ne \emptyset$, because otherwise we would have that $\psi(B,x)=0$ for all $x \in X$ and for all $B \ni x$ due to the fact that $\S$ is a positive semiring. This would then lead to the contradiction $1=\Delta_A(A) = \sum_{x \in A} \psi(A,x) = 0$. Let then $x \in \supp \phi$: then $\phi(x) \ne 0$, hence there exists $B \ni x$ such that $\psi(B,x) \ne 0$. It is necessarily the case, however, that $B=A$, because if $B \ne A$, then $0 = \Delta_A(B) = \sum_{y \in B} \psi(B,y) \ge \psi(B,x) \ne 0$, which is a contradiction. Thus $\supp \phi \subseteq A$. Moreover,
	\[
	\sum_{x \in X} \phi(x) = \sum_{x \in A} \phi(x) = \sum_{x \in A} \sum_{B \ni x} \psi(B,x) \overset{\ast}{=} \sum_{x \in A} \psi(A,x) = \Delta_A(A)=1
	\]
	where equation $\ast$ holds because for all $B \ne A$ and for all $y \in B$ we have $\psi(B,y)=0$, hence the only addend of $\sum_{B \ni x} \psi(B,x)$ which is possibly not-null is the ``$A$-th one'': $\psi(A,x)$.
	
	Vice versa, let $\phi \colon X \to S$ such that $\supp \phi$ is finite and contained in $A$ and satisfying $\sum_{x \in X} \phi(x)=1$. Then the map
	\[
	\begin{tikzcd}[ampersand replacement=\&,row sep=0em]
	\ni \ar[r,"\psi"] \& \Rp \\
	(B,x) \ar[r,|->] \& \begin{cases}
	\phi(x) & B=A \\
	0 & \text{otherwise}
	\end{cases}
	\end{tikzcd}
	\] 
	has finite support, as it is in bijective correspondence with $\supp \phi$; for all $B \in \pow X$, if $B \ne A$ then $\sum_{x \in B} \psi(B,x)=0$, otherwise 
	\[
	\sum_{x \in B} \psi(B,x) = \sum_{x \in A} \phi(x) = \sum_{x \in X} \phi(x)=1;
	\]
	and finally, for all $x \in X$, 
	$
	\sum\limits_{B \ni x} \psi(B,x) = \psi(A,x)=\phi(x). 
	$ \qed

\subsubsection{The proof of Proposition~\ref{cor:image along a of convex subset of S(X) is convex in X}.}\label{proof:image along a convex subset of S(X) is convex in X} We actually state and prove a more precise result, that implies Proposition~\ref{cor:image along a of convex subset of S(X) is convex in X}.

\begin{proposition}\label{prop:image along a of convex closures}
	Let $(X,a)$ be a $\mon S$-algebra and let $\mathcal A \subseteq \mon S X$. Then
	\[
	\pow a \Bigl( \convclos{\mathcal A}{\muS_X} \Bigr) = \convclos{\pow a (\mathcal A)}{a}.
	\]
\end{proposition}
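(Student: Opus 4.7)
The proof will turn on the $\S$-algebra associativity law $a \circ \muS_X = a \circ \S(a)$, which lets us trade a $\muS_X$-convex combination of elements of $\S X$ for an $a$-convex combination of their images under $a$, and vice versa.

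For the inclusion $\pow a(\convclos{\mathcal A}{\muS_X}) \subseteq \convclos{\pow a(\mathcal A)}{a}$, I take an arbitrary $y = a(\muS_X(\Psi))$ with $\Psi \in \S^2 X$, $\sum_{\phi} \Psi(\phi) = 1$ and $\supp \Psi \subseteq \mathcal A$. Setting $\chi := \S(a)(\Psi) \in \S X$, one checks directly from the definition of $\S(a)$ that $\supp \chi \subseteq a(\supp \Psi) \subseteq \pow a(\mathcal A)$ and that $\sum_{x \in X} \chi(x) = \sum_{\phi \in \S X} \Psi(\phi) = 1$. Hence $\chi$ witnesses $a(\chi) \in \convclos{\pow a(\mathcal A)}{a}$, and the algebra law gives $a(\chi) = a(\S(a)(\Psi)) = a(\muS_X(\Psi)) = y$.

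For the reverse inclusion, I start from $y = a(\chi) \in \convclos{\pow a(\mathcal A)}{a}$, so $\chi \in \S X$ with $\supp \chi \subseteq \pow a(\mathcal A)$ and $\sum_x \chi(x) = 1$. For each $x \in \supp \chi$, use the axiom of choice to pick a preimage $\phi_x \in \mathcal A$ with $a(\phi_x) = x$; these $\phi_x$'s are distinct because they have distinct images under $a$. Define $\Psi \in \S^2 X$ by $\Psi(\phi_x) = \chi(x)$ for $x \in \supp \chi$ and $\Psi(\phi) = 0$ otherwise. Then $\supp \Psi \subseteq \mathcal A$ and $\sum_\phi \Psi(\phi) = \sum_x \chi(x) = 1$, so $\muS_X(\Psi) \in \convclos{\mathcal A}{\muS_X}$. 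A short calculation shows $\S(a)(\Psi) = \chi$ (the only $\phi \in \supp \Psi$ with $a(\phi) = x'$ is $\phi_{x'}$ when $x' \in \supp \chi$, and none exists otherwise), so once more the algebra law yields $a(\muS_X(\Psi)) = a(\S(a)(\Psi)) = a(\chi) = y$.

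The main subtlety, and the only point needing care, is the selection of the $\phi_x$'s and the verification that $\S(a)(\Psi) = \chi$ on the nose; everything else is a direct unfolding of $\convclos{\cdot}{\cdot}$ via Definition~\ref{def:convex closure S-ALGEBRA} together with the associativity square for the $\S$-algebra $(X,a)$. Note also that no hypothesis on $S$ is required here, and that Proposition~\ref{cor:image along a of convex subset of S(X) is convex in X} follows immediately by specialising $\mathcal A$ to a convex set $\mathcal A = \convclos{\mathcal A}{\muS_X}$, which makes the left-hand side $\pow a(\mathcal A)$ equal to its own $a$-convex closure.
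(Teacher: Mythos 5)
Your proof is correct and takes essentially the same route as the paper's: both directions hinge on the algebra law $a \circ \muS_X = a \circ \S(a)$ with the same witnesses (setting $\chi = \S(a)(\Psi)$ for one inclusion, and choosing preimages $\phi_x \in \mathcal A$ of the elements of $\supp\chi$ for the other), and the paper likewise obtains Proposition~\ref{cor:image along a of convex subset of S(X) is convex in X} as the special case where $\mathcal A$ is convex. Your explicit verification that the $\phi_x$'s are distinct, so that $\Psi$ is well defined and $\S(a)(\Psi) = \chi$ exactly, is a minor tightening of a point the paper leaves implicit.
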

\begin{proof}
	We have
	\[
	\pow a \Bigl( \convclos{\mathcal A}{\muS_X} \Bigr) = \{ a(\muS_X(\Psi)) \mid \Psi \in \mon S^2 X, \sum_{\psi \in \mon S X} \Psi(\psi) = 1, \supp \Psi \subseteq \mathcal A \}
	\]
	while
	\[
	\convclos{\pow a (\mathcal A)}{a} = \{ a(\phi) \mid \phi \in \mon S X, \sum_{x \in X} \phi(x)=1, \supp \phi \subseteq \{a(\psi) \mid \psi \in \mathcal A\}   \}.
	\]
	For the left-to-right inclusion: let $\Psi \in \mon S^2 X$ be such that $\sum_{\psi \in \mon S X} \Psi(\psi) = 1$ and with $\supp \Psi \subseteq \mathcal A$. Define $\phi=\mon S (a) (\Psi)$. Then $x \in \supp \phi$ if and only if there exists $\psi \in \supp \Psi$ such that $x = a(\psi)$. Hence, if $x \in \supp \phi$, then $x=a(\psi)$ for some $\psi \in \mathcal A$. Moreover,
	\[
	\sum_{x \in X} \phi(x) = \sum_{x \in X} \sum_{\substack{\psi \in \supp \Psi \\ a(\psi)=x}} \Psi(\psi) = \sum_{\psi \in \mon S X} \Psi(\psi) = 1.
	\]
	Since $a(\muS_X(\Psi)) = a(\mon S (a)(\Psi))$ by the properties of $(X,a)$ as $\mon S$-algebra, and $\mon S (a) (\Psi) = \phi$ by definition, we conclude.
	
	Vice versa, given $\phi \in \mon S X$ such that $\sum_{x \in X} \phi(x)=1$ and with $\supp \phi = \{a(\psi_1),\dots,a(\psi_n)\}$ where $\psi_1,\dots,\psi_n \in \mathcal A$, define
	\[
	\Psi = \left( 
	\begin{tikzcd}[row sep=0em]
	\psi_1 \ar[r,|->] & \phi(a(g_1)) \\
	\vdots & \vdots \\
	\psi_n \ar[r,|->] & \psi(a(f_n))
	\end{tikzcd}
	\right)
	\]
	Then $\sum_{\psi \in \mon S X} \Psi(\psi) = 1$ and
	\[
	a(\muS_X(\Psi)) = a(\mon S (a) (\Psi)) = a(\phi)
	\]
	because $\mon S (a) (\Psi) = \phi$. \qed
\end{proof}

\subsubsection{Details of the Proof of Proposition~\ref{thm:convpow(X,a) is a S-algebra}.}\label{proof:details proof convpow(X,a) is a S-algebra} In the same notations set up in the proof of Theorem~\ref{thm:convpow(X,a) is a S-algebra}, let $A \in \supp \Phi$. Define for all $x \in X$:
	\[
	\chi_A(x) = \sum_{\substack{\phi \in \supp \Psi \\ x=u^\phi(A)}} \Psi(\phi).
	\]
	Then $\supp{\chi_A} = \{u^\phi(A) \mid \phi\in \supp \Psi\}$ is finite, hence $\chi_A \in \mon S X$. Let now 
	\[
	u(A) = a(\chi_A).
	\]
	We have that
	\[
	\sum_{x \in X} \chi_A(x) = \sum_{x \in X} \sum_{\substack{\phi \in \supp \Psi \\ x=u^\phi(A)}} \Psi(\phi) = \sum_{\phi \in \supp \Psi} \Psi(\phi)=1,
	\]
	hence $u(A) \in A$ because $A$ is convex. Define, for all $x \in X$:
	\[
	\psi(x) = \sum_{\substack{A \in \supp \phi \\ x=a(\chi_A)}} \Phi(A).
	\]
	Then $\supp \psi = \{ u(A) \mid A \in \supp \Phi\}$ so $\psi \in \mon S X$.
	To conclude, we have to prove that $a(\muS_X(\Psi)) = a(\psi)$. To that end, let for all $\chi \in \mon S X$
	\[
	\Psi'(\chi) = \sum_{\substack{ A \in \supp \phi \\ \chi_A = \chi}} \Phi(A).
	\]
	Then $\supp \Psi' = \{ \chi_A \mid A \in \supp \Phi\}$ is finite. Then we have for all $x \in X$ that
	\begin{align*}
	\muS_X(\Psi')(x) &= \sum_{\chi \in \supp \Psi'} \Psi'(\chi) \cdot \chi(x) \\
	&\overset{\ast}{=} \sum_{A \in \supp \Phi} \Phi(A) \cdot \sum_{\substack{\phi \in \supp \Psi \\ x=u^\phi(A)}} \Psi(\phi) \\
	&= \sum_{\phi \in \supp \Psi} \Psi(\phi) \cdot \sum_{\substack{A \in \supp \Phi \\ x=u^\phi(A)}} \Phi(A) \\
	&= \sum_{\phi \in \mon S X} \Psi(\phi) \cdot \phi(x) \\
	&= \muS_X(\Psi)(x)
	\end{align*}
	where equation $\ast$ is explained in a similar way than $(\ast_1)$ in the proof of Theorem~\ref{thm:delta=delta'}. We also have that 
	\[
	\mon S (a)(\Psi')(x) = \sum_{\chi \in a^{-1}\{x\}} \Psi'(\chi) = \sum_{\substack{\chi \in \mon S X \\ a(\chi) = x}} \sum_{\substack{A \in \supp \Phi \\ \chi_A = \chi}} \Phi(A) = \sum_{\substack{A \in \supp \Phi \\ a(\chi_A)=x}} \Phi(A) = \psi(x).
	\]
	Therefore,
	\[
	a(\psi)=a\bigl(\mon S (a)(\Psi')  \bigr) = a\bigl(\muS_X (\Psi')\bigr) = a\bigl(\muS_X(\Psi)\bigr).
	\]
	


\subsubsection{Continuation of the Proof of Theorem~\ref{thm:weaklift}.}\label{proof:weak lifting of P to EM(S)} We continue the proof of Theorem~\ref{thm:weaklift} from p.~\pageref{end of section 5}, by analising the action of $\tilde \P$ on morphisms and the unit and the multiplication of $\tilde \P$.

First of all, technically for $f \colon (X,a) \to (X',a')$ in $\EM \S$, $\tilde\P(f)$ is defined as 
\[
\tilde\P(f)(\mathcal A) = \convclos{\P(f)(\mathcal A)}{a'} \quad \text{for all $A \in \ConvPow X a$,}
\]
however it is not difficult to see that $\P(f)(\mathcal A)$ is convex in $(X',a')$ using the fact that $f$ is a morphism of $\S$-algebras. 
	
	The unit of the monad $\tilde \P$ is given, for every $(X,a)$ object of $\EM{\mon S}$, as the unique morphism $\eta^{\tilde \P}_{(X,a)} \colon (X,a) \to \tilde \P (X,a)$ that makes the two triangles of~(\ref{eqn:weak lifting diagrams iota}) and~(\ref{eqn:weak lifting diagrams pi}) commute, which are in our case:
	\[
	\begin{tikzcd}
	X \ar[r,"\eta^{\P}_X"] \ar[d,"\U{\mon S} (\eta^{\tilde \P}_{(X,a)})"'] & \pow X \\
	\ConvPow X a \ar[ur,"\iota_{(X,a)}"',hookrightarrow]
	\end{tikzcd}
	\qquad
	\begin{tikzcd}[column sep=4em]
	X \ar[r,"\U{\mon S}(\eta^{\tilde \P}_{(X,a)})"] \ar[d,"\eta^{\P}_X"'] & \ConvPow X a \\
	\pow X \ar[ur,"\convclos{(-)}{a}"']
	\end{tikzcd}
	\]
	By definition of $\iota$ and $\eta^{\P}$, the only arrow that makes the left triangle above commutative is necessarily   
	\[
	\begin{tikzcd}[row sep=0em]
	(X,a) \ar[r,"\eta^{\tilde \P}_{(X,a)}"] & (\ConvPow X a,\alpha_a) \\
	x \ar[r,|->] & \{x\}
	\end{tikzcd}
	\]
	and this makes also the right triangle commutative, since $\convclos{\{x\}} a = \{ x \}$.
	Similarly, the multiplication $\mu^{\tilde \P}$ is defined, for every $\mon S$-algebra $(X,a)$, as the unique morphism making the two rectangles of~(\ref{eqn:weak lifting diagrams iota}) and~(\ref{eqn:weak lifting diagrams pi}) commute. These are in our case:
	\[
	\begin{tikzcd}[column sep=1.2em]
	\ConvPow {\bigl(\ConvPow X a\bigr)} {\alpha_a} \ar[r,hookrightarrow,"\iota"] \ar[d,"\mu^{\tilde \P}_{(X,a)}"'] & \pow(\ConvPow X a) \ar[r,"\pow \iota",hookrightarrow] & \pow \pow X \ar[d,"\mu^\P_X"] \\
	\ConvPow X a \ar[rr,hookrightarrow,"\iota"] & & \pow X
	\end{tikzcd}
	\quad 
	\begin{tikzcd}[column sep=1.5em]
	\pow \pow X \ar[r,"\pow \convclos{(-)} a"] \ar[d,"\mu^{\P}_X"'] 
	& \pow(\ConvPow X a) \ar[r,"\convclos{(-)}{\alpha_a}"] & \ConvPow {(\ConvPow X a)} {\alpha_a}  \ar[d,"{\mu^{\tilde \P}_{(X,a)}}"] \\
	\pow X \ar[rr,"\convclos{(-)} a"] & & \ConvPow X a
	\end{tikzcd}
	\]
	
	By definition of $\iota$ and $\mu^{\P}$, the only arrow that makes the left rectangle above commutative is necessarily  
	\[
	\begin{tikzcd}[row sep=0em]
	\ConvPow {(\ConvPow X a)} {\alpha_a} \ar[r,"\mu^{\tilde \P}_{(X,a)}"] & \ConvPow X a \\
	\mathcal A \ar[r,|->] & \bigcup_{A \in \mathcal A} A
	\end{tikzcd}
	\]
	for all $(X,a) \in \EM{\mon S}$.
	The reader may wonder why such morphism is well defined, namely why $\mu^{\tilde \P}_{(X,a)}(\mathcal A)$ is in $\ConvPow X a$. This is the case because the abstract results guarantee existence and uniqueness of a such morphism. The interested reader may find next a more concrete proof of this and also the fact that this $\mu^{\tilde \P}$ makes the right rectangle commutative.
	
	\begin{proposition}\label{prop:union of convex family of convex subsets is convex}
	Let $(X,a)$ be a $\mon S$-algebra and $\mathcal A \in \ConvPow {\bigl(\ConvPow X a\bigr)} {\alpha_a}$. Then $\bigcup_{A \in \mathcal A} A$ is convex in $(X,a)$.
\end{proposition}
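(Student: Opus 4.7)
Write $B = \bigcup_{A \in \mathcal A} A$. The inclusion $B \subseteq \convclos{B}{a}$ is trivial since $x = a(\Delta_x)$. For the converse, take $x \in \convclos{B}{a}$, so there is $\phi \in \mon S X$ with $\sum_{y \in X} \phi(y) = 1$, $\supp \phi \subseteq B$ and $x = a(\phi)$. The strategy is to write $\phi$ as a ``two-level'' convex combination: an outer combination over distinct members of $\mathcal A$ whose inner combinations live inside those members, and then apply convexity of each inner member together with convexity of $\mathcal A$ itself.

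First I would list $\supp \phi = \{x_1,\dots,x_n\}$, set $\lambda_i = \phi(x_i)$, and for each $i$ pick some $A_i \in \mathcal A$ with $x_i \in A_i$. Letting $\{A_1',\dots,A_k'\}$ be the distinct $A_i$'s, and $\mu_j = \sum_{i\, : \,A_i = A_j'} \lambda_i$, positivity of $S$ ensures $\mu_j \neq 0$ (each $\lambda_i \neq 0$ and their sum cannot collapse to $0$). Since $S$ is a semifield I may divide by $\mu_j$ and define $\phi_j \in \mon S X$ by $\phi_j(x_i) = \lambda_i / \mu_j$ for those $i$ with $A_i = A_j'$, and $\phi_j(y) = 0$ otherwise. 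Then $\sum_y \phi_j(y) = 1$ and $\supp \phi_j \subseteq A_j'$, so convexity of $A_j'$ in $(X,a)$ gives $b_j := a(\phi_j) \in A_j'$.

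The key computation is that $x = \sum_{j=1}^k \mu_j \cdot^a b_j$. Using the left-semimodule structure induced by $a$ (equivalently, the fact that $a$ is a homomorphism from $(\mon S X,\muS_X)$ to $(X,a)$), $\mu_j \cdot^a b_j = \mu_j \cdot^a a(\phi_j) = a(\mu_j \cdot^{\muS_X} \phi_j) = a\bigl(\sum_{i\,:\,A_i = A_j'} \lambda_i \cdot x_i\bigr)$, and summing over $j$ reassembles $a(\phi) = x$. Moreover $\sum_j \mu_j = \sum_i \lambda_i = 1$.

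Finally, consider $\Phi \in \mon S \ConvPow X a$ given by $\Phi(A_j') = \mu_j$ and $0$ elsewhere. Then $\sum_{A} \Phi(A) = 1$ and $\supp \Phi \subseteq \mathcal A$, so convexity of $\mathcal A$ with respect to $\alpha_a$ yields $\alpha_a(\Phi) \in \mathcal A$. By~\eqref{eq:alphaa}, $\alpha_a(\Phi) = \{a(\mon S(u)(\Phi)) \mid u \colon \{A_1',\dots,A_k'\} \to X,\; u(A_j') \in A_j'\}$. The choice $u(A_j') = b_j$ (legal because $b_j \in A_j'$) gives $a(\mon S(u)(\Phi)) = \sum_j \mu_j \cdot^a b_j = x$, whence $x \in \alpha_a(\Phi) \in \mathcal A$ and therefore $x \in B$. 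The main technical obstacle is the algebraic reshuffling that turns the one-level combination $\phi$ into a nested one, which crucially uses that $S$ is a positive semifield so that the $\mu_j$'s are invertible and nonzero.
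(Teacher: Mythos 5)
Your proof is correct, and its outer skeleton coincides with the paper's: both arguments pick $A_i \ni x_i$ for each point of $\supp\phi$, group the weights into a finitely supported $\Phi \in \mon S \ConvPow X a$ with $\Phi(A_j') = \sum_{i\,:\,A_i = A_j'}\phi(x_i)$, and use convexity of $\mathcal A$ to conclude $\alpha_a(\Phi) \in \mathcal A$. Where you genuinely diverge is in the verification that $x = a(\phi) \in \alpha_a(\Phi)$. The paper keeps the \emph{unnormalised} $\phi$ and shows directly that $\phi \in \delta_X(\mon S(\iota)(\Phi))$ by exhibiting a witness in the raw description~\eqref{eqn:def of delta}, namely $\psi \in \mon S(\ni_X)$ with $\psi(B,y) = \sum_{i\,:\,(B,y)=(A_i,x_i)}\phi(x_i)$; this involves no division and never invokes convexity of the individual members of $\mathcal A$, so that step would survive under the weaker hypotheses of Theorem~\ref{thm:existence of delta}. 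You instead normalise: positivity gives $\mu_j \ne 0$, the semifield structure lets you divide to form $\phi_j$, convexity of each $A_j'$ yields representatives $b_j = a(\phi_j) \in A_j'$, and the choice function $u(A_j') = b_j$ places $x$ in $\choice\Phi$ via the simplified formula~\eqref{eq:alphaa} — the classical two-level convex-combination argument, with the reassembly $x = \sum_j \mu_j \cdot^a b_j$ justified correctly by the fact that $a \colon (\mon S X, \muS_X) \to (X,a)$ is a semimodule homomorphism (and the disjointness of the supports of the $\phi_j$, which holds since each $x_i$ is assigned a single $A_i$). Your route is legitimate at this point in the paper, because~\eqref{eq:alphaa} has already been established (Proposition~\ref{thm:convpow(X,a) is a S-algebra}) under the standing positive-semifield hypothesis; indeed your normalisation is a small-scale instance of the reshuffling done there. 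The trade-off: your argument is more geometric and leans on already-proved machinery, but is tied to the semifield twice over (the division by $\mu_j$, and the validity of~\eqref{eq:alphaa} itself), whereas the paper's witness-based verification is assumption-lean and bypasses~\eqref{eq:alphaa} entirely.
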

\begin{proof}
	We want to prove the following equality:
	\[
	\bigcup_{A \in \mathcal A} A = \{a(\phi) \mid \phi \in \mon S X, \sum_{x \in X} \phi(x)=1, \supp \phi \subseteq \bigcup_{A \in \mathcal A} A\}.
	\]
	The left-to-right inclusion is trivial: for all $A \in \mathcal A$ and $x \in A$, $x=a(\Delta_x)$. Vice versa, let $\phi \in \mon S X$ be such that $\sum_{x \in X} \phi(x)=1$ and $\supp \phi \subseteq \bigcup_{A \in \mathcal A} A$. We want to find an element $A \in \mathcal A$ such that $a(\phi) \in A$. Suppose
	\[
	\supp \phi = \{x_1,\dots,x_n\}.
	\]
	Then for all $i \in \natset n$ there exists $A_i \in \mathcal A$ such that $x_i \in A_i$. Notice that if $i \ne j$ then it is not necessarily the case that $A_i \ne A_j$. Define $\Phi \in \mon S (\ConvPow X a)$ as
	\[
	\Phi(B) = \sum_{ \substack{i \in \natset n \\ B=A_i} } \phi(x_i)
	\]
	for all $B \in \ConvPow X a$.
	Then $\supp \Phi = \{A_i \mid i \in \natset n\}$ is finite and
	\[
	\sum_{B \in \ConvPow X a} \Phi(B) = \sum_{B \in \ConvPow X a} \sum_{ \substack{i \in \natset n \\ B=A_i} } \phi(x_i) = \sum_{i \in \natset n} \phi(x_i) = 1.
	\]
	Since $\mathcal A$ is convex in $(\ConvPow X a,\alpha_a)$, we have that $\alpha_a(\Phi) \in \mathcal A$. This is going to be our desired $A$: we shall prove that $a(\phi) \in \alpha_a(\Phi)$.
	
	We have $\alpha_a(\Phi)=\pow a (\delta_X(\mon S (i)(\Phi)))$, hence, if we prove that $\phi \in \delta_X(\mon S (i)(\Phi))$, we have finished. To this end, recall the characterisation of $\delta_X$ using elements of $\mon S (\ni)$, for ${}\ni {} \subseteq \pow X \times X$:
	\begin{multline*}
	\delta_X(\mon S (i) (\Phi)) = \\
	\left\{a(\chi) \mid \chi \in \mon S X \ldotp \exists \psi \in \mon S (\ni) \ldotp 
	\begin{cases}
	\forall B \subseteq X \ldotp \mon S (i) (\Phi)(B) = \sum_{x \in B} \psi(B,x) \\
	\forall x \in X \ldotp \chi(x) = \sum_{B \ni x} \psi(B,x)
	\end{cases}
	\right\}.
	\end{multline*}
	Define, for all $(B,x) \in \pow X \times X$ such that $B \ni x$:
	\[
	\psi(B,x) = \sum_{ \substack{i \in \natset n \\ (B,x)=(A_i,x_i)}} \phi(x_i).
	\]
	Then $\supp \psi = \{(A_i,x_i) \mid i \in \natset n\}$ is finite, for all $B \subseteq X$
	\[
	\sum_{x \in B} \psi(B,x) = \sum_{x \in B} \sum_{ \substack{i \in \natset n \\ (B,x)=(A_i,x_i)}} \phi(x_i) = \sum_{\substack{i \in \natset n \\ B=A_i}} \phi(x_i) = \mon S (i)(\Phi)(B)
	\]
	and, for all $x \in X$,
	\[
	\sum_{B \ni x} \psi(B,x) = \sum_{B \ni x} \sum_{ \substack{i \in \natset n \\ (B,x)=(A_i,x_i)}} \phi(x_i) = 
	\begin{cases}
	0 & \forall i \in \natset n \ldotp x \ne x_i \\
	\phi(x_i) & \exists (!) i \in \natset n \ldotp x=x_i
	\end{cases}
	= \phi(x)
	\]
	where if there is $i$ such that $x=x_i$, then such $i$ is unique due to the fact that the $x_j$'s are distinct. This proves that $\phi \in \delta_X(\mon S (i)(\Phi))$. \qed
\end{proof}
%
%
%
%

\begin{proposition}
	The following diagram commutes.
	\[
	\begin{tikzcd}
	\pow \pow X \ar[r,"\pow \convclos{(-)} a"] \ar[d,"\mu^{\P}_X"'] 
	& \pow(\ConvPow X a) \ar[r,"\convclos{(-)}{\alpha_a}"] & \ConvPow {(\ConvPow X a)} {\alpha_a}  \ar[d,"{\mu^{\tilde \P}_{(X,a)}}"] \\
	\pow X \ar[rr,"\convclos{(-)} a"] & & \ConvPow X a
	\end{tikzcd}
	\]	
\end{proposition}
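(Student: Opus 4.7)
The plan is to test commutativity on an arbitrary $\mathcal U \in \pow\pow X$. Chasing $\mathcal U$ along both paths of the rectangle and using that the previous (left) rectangle has already forced $\mu^{\tilde\P}_{(X,a)}$ to be set-theoretic union, commutativity reduces to the set-theoretic identity
\[
\bigcup_{A \in \convclos{\{\convclos{U}{a} \mid U\in\mathcal U\}}{\alpha_a}} A \;=\; \convclos{\bigcup_{U\in\mathcal U} U}{a}\text{.}
\]
Both sides will be unfolded via Definition~\ref{def:convex closure S-ALGEBRA} together with the $\choice{(-)}$-formula~\eqref{eq:alphaa} for $\alpha_a$ supplied by Proposition~\ref{thm:convpow(X,a) is a S-algebra}, so that the task is to produce, given a point in one side, an explicit combinatorial witness on the other.

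For the inclusion $\subseteq$, take $A=\alpha_a(\Phi)$ with $\Phi\in\mon S\ConvPow X a$, $\sum\Phi=1$ and $\supp\Phi\subseteq\{\convclos U a\mid U\in\mathcal U\}$, and let $x=a(\mon S(u)(\Phi))\in A$ for a choice $u$ with $u(\convclos U a)\in\convclos U a$. By definition of convex closure, each $u(\convclos U a)$ can be written as $a(\chi_U)$ with $\supp\chi_U\subseteq U$ and $\sum\chi_U=1$. Assemble $\Theta\in\mon S^2 X$ by $\Theta(\chi_U)=\Phi(\convclos U a)$ (summing when two $\chi_U$'s coincide) and apply the $\mon S$-algebra axiom $a\circ\muS_X=a\circ\mon S(a)$ to obtain $x=a(\muS_X(\Theta))$. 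Since $\phi:=\muS_X(\Theta)$ inherits $\sum\phi=1$ and $\supp\phi\subseteq\bigcup_U U$, this places $x$ in $\convclos{\bigcup_U U}{a}$.

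The reverse inclusion $\supseteq$ is the main obstacle. Given $x=a(\phi)$ with $\supp\phi=\{x_1,\dots,x_n\}\subseteq\bigcup_U U$ and $\sum_i\phi(x_i)=1$, pick $U_i\in\mathcal U$ with $x_i\in U_i$ and form $\Phi\in\mon S\ConvPow X a$ by $\Phi(B)=\sum_{i\,:\,\convclos{U_i}{a}=B}\phi(x_i)$. Then $\supp\Phi\subseteq\{\convclos U a\mid U\in\mathcal U\}$ and $\sum\Phi=1$, so it suffices to show $x\in\alpha_a(\Phi)$. The subtlety is that several $x_i$'s may collapse into the same convex set $B$, whereas a choice function for $\alpha_a(\Phi)$ can select only one element of $B$. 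The positive semifield hypothesis is essential here: each $\Phi(B)\ne 0$ is invertible, so one may take the weighted mean
\[
v(B)\;=\;a\Bigl(\sum_{i\,:\,\convclos{U_i}{a}=B}\tfrac{\phi(x_i)}{\Phi(B)}\cdot\Delta_{x_i}\Bigr)\text{,}
\]
whose support lies in $B$ (since $x_i\in U_i\subseteq B$) and whose coefficients sum to $1$, so that $v(B)\in B$ by convexity. A short computation, again via the algebra axiom applied to a suitable $\Theta'\in\mon S^2 X$ with $\Theta'(\chi)=\Phi(B)$ for $\chi$ the above inner element, gives $a(\mon S(v)(\Phi))=a(\phi)=x$, so $x\in\alpha_a(\Phi)$ and $\alpha_a(\Phi)$ is manifestly in the $\alpha_a$-convex closure of $\{\convclos U a\mid U\in\mathcal U\}$. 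Without the semifield assumption the averaging step breaks down, consistent with the failure of $\delta_X(\Phi)=\convclos{\choice\Phi}{\muS_X}$ over $S=\N$ discussed after Theorem~\ref{thm:delta for positive refinable semifields}.
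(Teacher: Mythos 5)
Your proposal is correct, and one half of it coincides with the paper's: your $\subseteq$ inclusion (from the union of the $\alpha_a(\Phi)$'s into $\convclos{\bigcup\mathcal U}{a}$) is, up to notation, exactly the paper's ``vice versa'' direction, with your $\Theta$ playing the role of the paper's $\Psi$ and the same use of the algebra axiom $a\circ\muS_X = a\circ\mon S(a)$. The genuine divergence is in the hard inclusion. After building the same grouped $\Phi(B)=\sum_{i\,:\,B=\convclos{A_i}{a}}\phi(x_i)$, the paper does \emph{not} average: it falls back on $\alpha_a(\Phi)=\pow a\bigl(\delta_X(\mon S(\iota)(\Phi))\bigr)$ and exhibits $\phi\in\delta_X(\mon S(\iota)(\Phi))$ directly via the original formulation~\eqref{eqn:def of delta} of $\delta$, taking the witness $\psi(B,x)=\sum_{i\,:\,(B,x)=(\convclos{A_i}{a},\,x_i)}\phi(x_i)\in\mon S(\ni_X)$ --- the point being that a $\delta$-witness may spread the weight $\Phi(B)$ over \emph{several} elements of $B$, so no single representative, and hence no division, is ever needed (this is the argument of Proposition~\ref{prop:union of convex family of convex subsets is convex}, which the paper's proof simply cites). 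You instead stay inside the $\choice{\cdot}$-formulation~\eqref{eq:alphaa}, which forces one representative per $B\in\supp\Phi$, and you manufacture it as the $\Phi(B)^{-1}$-weighted mean $v(B)\in B$, using invertibility of $\Phi(B)\ne 0$ and convexity (i.e.\ idempotency of the closure) of $B=\convclos{U}{a}$; your verification via $\Theta'$ and the algebra axiom is sound, and the collision bookkeeping (indexing the $\chi$'s by $B\in\supp\Phi$ rather than by $U\in\mathcal U$, summing coefficients when functions coincide) is handled the same way the paper handles $(\ast_1)$ elsewhere. What each route buys: yours makes the geometric role of convexity of the closures vivid, but spends the semifield hypothesis locally at the averaging step; the paper's is division-free at this spot and isolates the semifield dependence entirely in the earlier identification $\pow a(\delta_X(\mon S(\iota)\Phi))=\pow a(\choice{\mon S(\iota)\Phi})$ (Proposition~\ref{thm:convpow(X,a) is a S-algebra}, resting on Theorem~\ref{thm:delta for positive refinable semifields}). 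Accordingly, your closing remark should be slightly tempered: over $S=\N$ what fails is that identification of $\alpha_a$ with its $\choice{\cdot}$-description, not the availability of a division-free membership witness in the $\delta$-form, which is why the paper's version of this step needs no inverses at all.
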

\begin{proof}
	In this proof we shall simply write $\overline A$ for $\convclos A a$ for any $A \subseteq X$, because there is no risk of confusion.
	We have to show that
	\[
	\{ a(\phi) \mid \phi \in \mon S X, \sum_{x \in X} \phi(x)=1, \supp \phi \subseteq \bigcup_{U \in \mathcal U} U \}
	=
	\bigcup_{\substack{\Phi \in \mon S \ConvPow X a \\
			\sum \Phi(C)=1 \\
			\supp \Phi \subseteq \{ \convclos U {} \mid U \in \mathcal U \}
	}}
	\alpha_a(\Phi)   .
	\]
	For the left-to-right inclusion: let $\phi \in \mon S X$ such that $\sum_{x \in X} \phi(x)=1$, and suppose $\supp \phi = \{x_1,\dots,x_n\}$. We have that for all $i \in \natset n$ there exists $A_i \in \mathcal U$ such that $x_i \in A_i$. While the $x_i$'s are distinct, the $A_i$'s need not be. Define, for all $B \in \ConvPow X a$,
	\[
	\Phi(B)=\sum_{ \substack{i \in \natset n \\ B=\overline{A_i} }} \phi(x_i).
	\]
	Then $\supp \Phi = \{ \overline {A_1},\dots,\overline{A_n} \}$ hence $\Phi \in \mon S \ConvPow X a$. One can then prove that $a(\phi) \in \alpha_a(\Phi)$ in the same way as showed in the proof of Proposition~\ref{prop:union of convex family of convex subsets is convex}.
	
	Vice versa, an element of the right-hand side is of the form $a(\psi)$ for some $\Phi \in \mon S \ConvPow X a$ such that $\sum_{B \in \ConvPow X a} \Phi(B) = 1$, $\supp \Phi = \{\overline {A_1},\dots,\overline{A_n}\}$ with $A_i \in \mathcal U$ for all $i \in \natset n$ and for some $\psi \in \mon S X$ and $u \in \prod_{i =1}^n \overline{A_i}$ such that 
	\[
	\psi(x)=\sum_{ \substack{i \in \natset n \\[.25em] x=u_i} } \Phi(\overline{A_i}).
	\]
	We want to prove that $a(\psi)= a(\phi)$ for an appropriate $\phi \in \mon S X$ such that $\sum_{x \in X} \phi(x) =1$ and $\supp \phi \subseteq \bigcup_{U \in \mathcal U} U$.
	
	Since $u_i \in \overline{A_i}$, we have that $u_i = a(\phi_i)$ for some $\phi_i \in \mon S X$ such that $\sum_{x \in X} \phi_i(x)=1$ and $\supp \phi_i \subseteq A_i$. These $\phi_i$'s are not necessarily distinct. Let then, for all $\phi \in \mon S X$,
	\[
	\Psi(\phi) = \sum_{ \substack{i \in \natset n \\ \phi=\phi_i} } \Phi(\overline{A_i}).
	\]
	Then $\supp \Psi = \{f_1,\dots,f_n\}$ so $\Psi \in \mon S ^2 X$ and it is easy to prove, by means of direct calculations, that $\psi=\mon S (a)(\Phi)$, $\sum_{x \in X} \muS_X(\Psi)=1$ and that $\supp{\muS_X(\Psi)} \subseteq \bigcup_{U \in \mathcal U} U$. Then we have that $a(\psi)=a(\mon S (a)(\Psi)) = a(\muS_X(\Psi))$, and $\muS_X(\Psi)$ is the $\phi$ we were looking for. \qed
\end{proof}

\section{Appendix for Section~\ref{sec:the composite monad}}\label{Appendix Section 6}

\subsubsection{The Presentation of $\convpowS$.} For $\delta \colon \mon S \P \to \P \mon S$ given in~\eqref{eqn:delta for positive refinable semifields}, we have that a $\delta$-algebra is a set $X$ together with a $\mon S$-algebra structure $a \colon \mon S X \to X$ and a $\P$-algebra-structure $b \colon \pow X \to X$ such that pentagon~\eqref{eqn:distributivity pentagon for delta-algebras} commutes. A morphism of $\delta$-algebras is a morphism of $\P$- and $\S$-algebra simultaneously by definition. Hence, by defining $+^a$, $\lambda \cdot^a$ and $0^a$ as in~\eqref{eqn:semimodule associated to S-algebra}, we have that $X$ is a $S$-left-semimodule; moreover, if we define for each set $I$
\begin{equation}\label{eq:defSup}
\Sup_{i \in I}^b x_i = b(\{x_i \mid i \in I \})
\end{equation}
then $X$ is also a complete semilattice. This means that $X$ satisfies all the equations in $E_\CSL$ and $E_\LSM$; moreover, every morphism of $\delta$-algebras preserves all the operations in $\Sigma$. The commutativity of pentagon~\eqref{eqn:distributivity pentagon for delta-algebras} will instead ensure that the equations in $E_\D$ listed in~\eqref{eq:axiomsD} are satisfied, as we prove in the following. For $A \subseteq X$, we shall sometimes write $\Sup^b A$ for $\Sup_{x \in A}^b x$. 

\begin{proposition}\label{prop:sup A = sup convclos A in delta algebras}
    Let $(X,a \colon \mon S X \to X, b \colon \pow X \to X)$ be a $\delta$-algebra.
	For each $A \subseteq X$, 
	\[
	\Sup_{x \in A}^b x = \Sup_{x \in \convclos A a}^b x.
	\]	
\end{proposition}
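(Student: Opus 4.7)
\medskip\noindent\textbf{Proof plan.}
The plan is to evaluate the pentagon \eqref{eqn:distributivity pentagon for delta-algebras} at the single element $\Delta_A = \eta^{\S}_{\P X}(A) \in \S \P X$ and read off the two paths.

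Going along the top-right route: by Lemma~\ref{lemma:delta of Dirac}, $\delta_X(\Delta_A)$ is exactly $\{\phi \in \S X \mid \supp \phi \subseteq A,\ \sum_{x} \phi(x) = 1\}$, so applying $\P a$ to this set yields precisely $\convclos{A}{a}$ by Definition~\ref{def:convex closure S-ALGEBRA}; then applying $b$ and using \eqref{eq:defSup} gives $\Sup^b_{x \in \convclos A a} x$. Going along the left-bottom route: $\S b (\Delta_A) = \Delta_{b(A)} = \Delta_{\Sup^b A}$, and applying $a$ gives $\Sup^b A$, since $a \circ \eta^\S_X = \id_X$ (equivalently, by \eqref{eqn:semimodule associated to S-algebra}, $a(\Delta_x) = 1 \cdot^a x = x$). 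Commutativity of \eqref{eqn:distributivity pentagon for delta-algebras} then forces the two scalars to coincide, which is exactly the claim.

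The only thing to check is that the pentagon argument covers also the edge case $A = \emptyset$: there $\convclos{\emptyset}{a} = \emptyset$ so both sides reduce to $b(\emptyset) = \Sup^b \emptyset$, and one can verify that the pentagon computation still goes through (on the $\delta$-side one gets $\delta_X(\Delta_\emptyset) = \emptyset$, hence $b(\P a(\emptyset)) = b(\emptyset)$; on the $\S b$-side, $a(\Delta_{b(\emptyset)}) = b(\emptyset)$). I do not anticipate any real obstacle: the statement is essentially an unpacking of the pentagonal law, with Lemma~\ref{lemma:delta of Dirac} doing the crucial work of identifying $\P a(\delta_X(\Delta_A))$ with the convex closure of $A$ in $(X,a)$.
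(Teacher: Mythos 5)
Your proposal is correct and follows essentially the same route as the paper's own proof: evaluate the pentagon \eqref{eqn:distributivity pentagon for delta-algebras} at $\Delta_A = \etaS_{\pow X}(A)$, use Lemma~\ref{lemma:delta of Dirac} to identify $\pow a(\delta_X(\Delta_A))$ with $\convclos{A}{a}$ on one leg, and compute $a(\mon S b(\Delta_A)) = \Sup^b A$ on the other. Your explicit check of the case $A = \emptyset$ is harmless but unnecessary, since Lemma~\ref{lemma:delta of Dirac} already covers it and the pentagon argument is uniform in $A$.
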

\begin{proof}
    
	Let $\Phi = \etaS_{\pow X}(A) = \Delta_A \in \mon S \pow X$. Then
	\[
	a\bigl( \mon S b (\Phi) \bigr) = a\bigl( \mon S b (\Delta_A) \bigr) = a\bigl( \Delta_{\Sup A} \bigr) = \Sup A
	\]
	while
	\[
	\delta_X(\Delta_A) = \{ \phi \in \mon S X \mid \supp \phi \subseteq A, \sum_{x \in X} \phi(x)=1 \}
	\]
	because of Lemma~\ref{lemma:delta of Dirac}, hence
	\[
	\pow a \bigl( \delta_X(\Delta_A) \bigr) = \convclos A a .
	\] 
	By the commutativity of diagram (\ref{eqn:distributivity pentagon for delta-algebras}), we have that $\Sup A = \Sup \convclos A a $. \qed
\end{proof}

We can then prove that every $\delta$-algebra satisfies the equations in $E_\D$.

\begin{theorem}\label{thm:distributivity property of delta-algebras}
	Let $(X,a \colon \mon S X \to X, b \colon \pow X \to X)$ be a $\delta$-algebra. Then for all $A,B \subseteq X$ and for all $\lambda \in S \setminus \{ 0_S\}$:
	\[
	\lambda \cdot \Sup_{x \in A}^b x = \Sup_{x \in A}^b \lambda \cdot x, \qquad \Sup_{x \in A}^b x +^a \Sup_{y \in B}^b y  = \Sup_{(x,y) \in A \times B}^b  x +^a  y.
	\]
\end{theorem}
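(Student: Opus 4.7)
The plan is to apply the pentagon \eqref{eqn:distributivity pentagon for delta-algebras} to suitably chosen $\Phi \in \S \P X$. By Theorem~\ref{thm:delta for positive refinable semifields} and Proposition~\ref{prop:image along a of convex closures}, we have $\pow a(\delta_X(\Phi)) = \convclos{\pow a(\choice{\Phi})}{a}$, and by Proposition~\ref{prop:sup A = sup convclos A in delta algebras} this yields $b(\pow a(\delta_X(\Phi))) = b(\pow a(\choice{\Phi}))$, which is simpler to evaluate on concrete inputs.

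For the first equation, I will take $\Phi = (A \mapsto \lambda)$ with $\lambda \ne 0$. The left branch of the pentagon gives $a(\Sup A \mapsto \lambda) = \lambda \cdot^a \Sup A$, while on the right $\choice{\Phi} = \{(x \mapsto \lambda) \mid x \in A\}$, so $\pow a(\choice{\Phi}) = \{\lambda \cdot^a x \mid x \in A\}$ and its $b$-image is $\Sup_{x \in A} \lambda \cdot x$, as desired.

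For the second equation when $A \ne B$, I will take $\Phi = (A \mapsto 1_S, B \mapsto 1_S)$, whose support is $\{A, B\}$. The left branch gives $\Sup A +^a \Sup B$. On the right, each choice function corresponds to a pair $(x, y) \in A \times B$ and produces $\mon S(u)(\Phi) = \Delta_x + \Delta_y$ (the coincidence case $x = y$ being absorbed via $1 \cdot x + 1 \cdot x = 2 \cdot x$), so $a(\mon S(u)(\Phi)) = x +^a y$ and $\pow a(\choice{\Phi}) = \{x +^a y \mid (x,y) \in A \times B\}$. The pentagon yields the equation directly.

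The main obstacle is the case $A = B$, because $\S\P X$ cannot encode ``two copies'' of the same subset: taking $\Phi = (A \mapsto 2)$ still has $\supp \Phi = \{A\}$, so $\choice{\Phi} = \{(x \mapsto 2) \mid x \in A\}$ and $\pow a(\choice{\Phi}) = \{2 \cdot x \mid x \in A\}$ misses the cross sums $x + y$ with $x \ne y$. I will circumvent this by exploiting $\pow a(\delta_X(A \mapsto 2))$ without collapsing it via Proposition~\ref{prop:sup A = sup convclos A in delta algebras}. Observing that for every $x, y \in A$ the element $\Delta_x + \Delta_y$ equals $\mu^\S_X(\tfrac{1}{2}\Delta_{(x \mapsto 2)} + \tfrac{1}{2}\Delta_{(y \mapsto 2)})$, which is a convex combination in $(\S X, \mu^\S_X)$ of elements of $\choice{(A \mapsto 2)}$, we obtain $\Delta_x + \Delta_y \in \delta_X(A \mapsto 2)$ and hence $\{x + y \mid x, y \in A\} \subseteq \pow a(\delta_X(A \mapsto 2))$. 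Combining this with $\{2x \mid x \in A\} \subseteq \{x + y \mid x, y \in A\}$ (since $2x = x + x$) and the easy fact that $b$ is monotone along set inclusion (a direct consequence of $b \circ \mu^\P = b \circ \P b$), the pentagon gives
\[
2 \Sup A = b(\{2x \mid x \in A\}) \le b(\{x + y \mid x, y \in A\}) \le b(\pow a(\delta_X(A \mapsto 2))) = 2 \Sup A,
\]
where the first equality is the first equation with $\lambda = 2$ (available since $2 \ne 0$ in a nontrivial positive semifield) and the final one is the pentagon. All three quantities coincide, establishing $\Sup A +^a \Sup A = \Sup\{x +^a y \mid x, y \in A\}$ and completing the proof.
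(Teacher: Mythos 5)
Your proposal is correct, and for the first equation and the case $A\ne B$ of the second it follows exactly the paper's route: apply the pentagon~\eqref{eqn:distributivity pentagon for delta-algebras} to $\Phi=(A\mapsto\lambda)$ and $\Phi'=(A\mapsto 1,\,B\mapsto 1)$, compute the left leg directly, and collapse the right leg via Theorem~\ref{thm:delta for positive refinable semifields}, Proposition~\ref{prop:image along a of convex closures} and Proposition~\ref{prop:sup A = sup convclos A in delta algebras} to $\Sup^b \pow a(\choice{\Phi})$. The genuine difference is your treatment of $A=B$. The paper dismisses the second equation with the one-line remark that $\Phi'$ and ``a similar argument'' suffice, but under the paper's list-notation convention (distinct elements in the support) this literally covers only $A\ne B$; for $A=B$ the natural substitute $\Phi=(A\mapsto 2)$ has $\choice{\Phi}=\{(x\mapsto 2)\mid x\in A\}$, whose $a$-image misses the cross terms $x+^a y$ with $x\ne y$, exactly as you observe. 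Your patch is sound: $\Delta_x+\Delta_y=\muS_X\bigl(\tfrac12\Delta_{(x\mapsto2)}+\tfrac12\Delta_{(y\mapsto2)}\bigr)$ exhibits each cross term as a convex combination of elements of $\choice{(A\mapsto2)}$ (this uses invertibility of $2=1+_S 1$, legitimate since the standing hypothesis for $\delta$ and Theorem~\ref{thm:delta for positive refinable semifields} is that $S$ is a positive semifield, and $2\ne 0_S$ by positivity whenever $1\ne 0_S$), giving $\{x+^a y\mid x,y\in A\}\subseteq\pow a(\delta_X(A\mapsto2))$; your monotonicity of $b$ is indeed a direct consequence of the algebra law $b\circ\pow b=b\circ\mu^\P_X$; and the squeeze pins all three suprema at $2\cdot^a\Sup^b A=\Sup^b A+^a\Sup^b A$. (Equivalently, one could note $\{x+^a y\mid x,y\in A\}\subseteq\convclos{\{2\cdot^a x\mid x\in A\}}{a}$ and invoke Proposition~\ref{prop:sup A = sup convclos A in delta algebras} once more, but your route is just as short.) In sum: same method as the paper, plus an explicit and correct repair of a corner case that the paper's sketch glosses over.
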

\begin{proof} 
	Let
	$
	\Phi = ( 
	A \mapsto \lambda) \in \S\P X
	$.
	We prove that the images along the two legs of pentagon~\eqref{eqn:distributivity pentagon for delta-algebras} of $\Phi$ coincide with the two sides of the first equation in the statement. 
It holds that 
\begin{align*}
    a(\mon S(b)(\Phi)) &= a(x \mapsto \sum_{U \in b^{-1}\{x\}} \Phi(U) ) \\ 
    &= a(b(A) \mapsto \lambda) \\ 
    &= a(\Sup_{x\in A}^b x \mapsto \lambda) & \text{Definition of $\Sup^b$ \eqref{eq:defSup}} \\
    &= \lambda \cdot^a \Sup_{x\in A}^b x & \text{Definition of $\lambda \cdot^a$ \eqref{eqn:semimodule associated to S-algebra}}
\end{align*}

On the other hand we have that 
	\begin{align*}
		b\Bigl(\pow a \bigl(\delta_X (\Phi)\bigr)\Bigr) &= \Sup^b \pow a \bigl( \convclos {\choice \Phi}{\muS_X} \bigr) & \text{Theorem~\ref{thm:delta for positive refinable semifields}}\\
		&= \Sup^b \convclos {\pow a \bigl( \choice \Phi \bigr)} {a} & \text{Proposition~\ref{prop:image along a of convex closures}}\\
		&= \Sup^b \pow a \bigl( \choice \Phi \bigr) & \text{Proposition~\ref{prop:sup A = sup convclos A in delta algebras}}
	\end{align*}

Following the discussion after Theorem \ref{thm:weaklift}, $\choice{\Phi} = \{(x \mapsto \lambda) \mid x\in A \}$ if $\lambda \neq 0$. 
Therefore, if $\lambda \neq 0$, then $\pow a \bigl( \choice \Phi \bigr) = 	\{a(x \mapsto \lambda)  \, \mid \; x \in A\}$ which by \eqref{eqn:semimodule associated to S-algebra} is exactly $\{ \lambda \cdot^a x  \, \mid \; x \in A\}$.

Therefore 
\[	b\Bigl(\pow a \bigl(\delta_X (\Phi)\bigr)\Bigr) = 
	\Sup^b_{x\in A} \lambda \cdot^a x. 
\]
We then conclude by using the commutativity of diagram~(\ref{eqn:distributivity pentagon for delta-algebras}). Using the function $\Phi' = (A \mapsto 1,\, B \mapsto 1) \in \S\P X$ and a similar argument, one shows the second equation as well. \qed
\end{proof}

Vice versa, we want to prove that every algebra for the theory $(\Sigma,E)$ is also a $\delta$-algebra. To this end, let $(X,+,\lambda \cdot {},0_X,\Sup_I)$ be a $(\Sigma,E)$-algebra. Then $(X,+,\lambda \cdot {}, 0_X)$ is a $S$-left-semimodule, $(X,\Sup_I)$ is a complete sup-semilattice. 
By defining
\[
\begin{tikzcd}[row sep=0em]
\mon S X \ar[r,"a"] & X \\
f \ar[r,|->] & \sum\limits_{x \in \supp f} f(x) \cdot x
\end{tikzcd}
\qquad
\begin{tikzcd}[row sep=0em]
\pow X \ar[r,"b"] & X \\
A \ar[r,|->] & \Sup_{x \in A} x
\end{tikzcd}
\]
we have that $(X,a) \in \EM{\mon S}$ and $(X,b) \in \EM\P$. We now have to check that pentagon~(\ref{eqn:distributivity pentagon for delta-algebras}) commutes. Given $A \subseteq X$, we define its convex closure $\convclos A{}$ in the usual way:
\[
\convclos A{} = \{ \sum_{i=1}^n {p_i \cdot a_i} \mid n \in \N, (p_i)_{i \in \natset n} \in S^n_1, (a_i)_{i \in \natset n} \in A^n  \}
\]
where $S^n_1 = \{ (p_i)_{i \in \natset n} \in S^n \mid \sum_{i=1}^n p_i = 1 \}$. Also, we shall denote by $\Sup A$ the element $b(A) \in X$ for any $A \subseteq X$. Then pentagon~(\ref{eqn:distributivity pentagon for delta-algebras}) commutes if and only if for all $\Phi \in \mon S \pow X$:
\[
\sum_{A \in \supp \Phi} \!\! \Phi(A) \cdot \Sup A = \Sup \convclos{ \Bigl\{ \sum_{A \in \supp \Phi}\!\! \Phi(A) \cdot u(A) \mid u \colon \supp \Phi \to X \ldotp \forall A  \ldotp u(A) \in A \Bigr\} } {}
\]
where the left-hand side is $a(\S(b)(\Phi))$ and the right-hand side is $b(\P(a)(\delta_X(\Phi)))$.

In the following Lemma we prove that if $X$ is a $(\Sigma,E)$-algebra and $A \subseteq X$, then $\Sup A = \Sup \convclos A {}$. Using this fact and the distributivity property~\eqref{eq:axiomsD} we will have shown the commutativity of pentagon~(\ref{eqn:distributivity pentagon for delta-algebras}).

\begin{lemma}
	Let $(X,+,\lambda \cdot {},0_X,\Sup_I)$ be a $(\Sigma,E)$-algebra. Then for all $A \subseteq X$
	\[
	\Sup A = \Sup \convclos A {}.
	\]
\end{lemma}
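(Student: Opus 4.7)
The plan is to verify the two inequalities $\Sup A \sqsubseteq \Sup \convclos A{}$ and $\Sup \convclos A{} \sqsubseteq \Sup A$ with respect to the canonical order $x \sqsubseteq y \iff \Sup\{x,y\} = y$ of the complete semilattice $(X,\Sup_I)$. The first is immediate: every $a \in A$ equals the trivial convex combination $1_S \cdot a$, so $A \subseteq \convclos A{}$, and the axioms $E_\CSL$ (specifically, the surjective-reindexing axiom applied to the inclusion $A \hookrightarrow \convclos A{}$) yield $\Sup A \sqsubseteq \Sup \convclos A{}$.

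For the reverse inequality, the crucial step is to derive monotonicity of the semimodule operations from the distributivity axioms $E_\D$. If $x \sqsubseteq y$, i.e., $\Sup\{x,y\} = y$, then the second axiom of~\eqref{eq:axiomsD} with $I = \{0,1\}$ and $J = \{0\}$ gives
\[
y + z = \Sup\{x,y\} + z = \Sup\{x+z,\ y+z\},
\]
so $x + z \sqsubseteq y + z$. Analogously, the first axiom of~\eqref{eq:axiomsD} yields $\lambda \cdot x \sqsubseteq \lambda \cdot y$ for every $\lambda \neq 0_S$.

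With monotonicity in hand, it suffices to show $y \sqsubseteq \Sup A$ for every $y = \sum_{i=1}^{n} p_i \cdot a_i$ with $\sum_{i=1}^{n} p_i = 1_S$ and $a_i \in A$. Since $a_i \in A$ forces $a_i \sqsubseteq \Sup A$ (again by the $E_\CSL$ axioms), monotonicity of $\lambda \cdot {}$ applied coordinate-wise (with the degenerate case $p_i = 0_S$ handled separately via $0_S \cdot a_i = 0_X = 0_S \cdot \Sup A$) and monotonicity of $+$ combine to give
\[
y \sqsubseteq \sum_{i=1}^{n} p_i \cdot \Sup A = \Bigl(\sum_{i=1}^{n} p_i\Bigr) \cdot \Sup A = 1_S \cdot \Sup A = \Sup A,
\]
the last equalities being instances of the $E_\LSM$ axioms together with unitality $1_S \cdot x = x$ of left-semimodules. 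The only real subtlety lies in the monotonicity argument, which treats the finite join $\Sup\{\cdot,\cdot\}$ as the two-element instance of the infinitary $\Sup$ appearing in $E_\D$; once that is in place, the rest is a direct semimodule computation and I do not foresee further obstacles.
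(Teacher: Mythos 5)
Your proof is correct, and it handles the hard inequality $\Sup \convclos{A}{} \sqsubseteq \Sup A$ by a genuinely different decomposition than the paper. The paper uses no monotonicity lemma at all: fixing a coefficient vector $(p_i)_{i \in \natset n}$ with $\sum_{i=1}^{n} p_i = 1_S$, it computes in one equational stroke
\[
\Sup A = 1_S \cdot \Sup A = \Bigl(\sum_{i=1}^{n} p_i\Bigr) \cdot \Sup A = \sum_{i=1}^{n} p_i \cdot \Sup A = \Sup \Bigl\{ \sum_{i=1}^{n} p_i \cdot a_i \mid (a_i)_{i \in \natset n} \in A^n \Bigr\}
\]
via the infinitary axioms~\eqref{eq:axiomsD}, so that every convex combination with those coefficients is literally a joinand of a join equal to $\Sup A$, hence below it. You instead extract from the binary instances of~\eqref{eq:axiomsD} the monotonicity of $+$ and of $\lambda \cdot {}$ for $\lambda \ne 0_S$, and then bound each combination coordinatewise by $\sum_{i} p_i \cdot \Sup A$ --- whose collapse to $\Sup A$ is exactly the paper's opening computation. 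Your route needs only two-element joins in the distributivity axioms plus some order-theoretic bookkeeping (e.g.\ transitivity of $\sqsubseteq$, derivable from $E_{\CSL}$), and it treats the $p_i = 0_S$ coordinates explicitly --- a case the paper's expansion silently glosses over, since the first axiom of~\eqref{eq:axiomsD} excludes $\lambda = 0_S$; the paper's route buys a shorter, purely equational argument with no order theory, and in fact establishes the stronger identity displayed above, that $\Sup A$ equals the join of \emph{all} combinations with a fixed coefficient vector. Two small blemishes in your write-up, neither fatal: the easy direction follows from $A \subseteq \convclos{A}{}$ by combining the third $E_{\CSL}$ axiom (split $\Sup \convclos{A}{}$ along the partition into $A$ and its complement) with the second (collapse the duplicated $\Sup A$), so ``surjective reindexing applied to the inclusion'' is not quite the right citation --- the inclusion is injective, not surjective --- though the paper likewise dismisses this direction as trivial; and your appeal to unitality $1_S \cdot x = x$ is legitimate but worth flagging, since the paper's proof uses it tacitly as well even though it is not listed in Table~\ref{tab:axiomsinitial}.
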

\begin{proof}
	Let $n \in \N$. Define $S^n_1 = \{ (p_1,\dots,p_n) \in S^n \mid \sum_{i=1}^n p_i =1\}$. Let $(p_i)_{i \in \natset n} \in S^n_1$. Then
	\[
	\Sup A = 1 \cdot \Sup A = (\sum_{i=1}^n p_i) \cdot \Sup A = \sum_{i=1}^n \bigl( p_i \cdot \Sup A \bigr) = \Sup \Bigl\{ \sum_{i=1}^n p_i \cdot a_i \mid (a_i)_{i \in \natset n} \in A^n \Bigr\}
	\]
	because of the properties of $S$-semimodule and the distributivity~\eqref{eq:axiomsD}. 
	Let $\le$ be the partial order determined by the complete semilattice structure of $X$. We have that 
	\[
	\forall n \in \N, \forall (p_i) \in S^n_1, \forall (a_i) \in A^n \ldotp \sum_{i=1}^n p_i \cdot a_i \le \Sup \Bigl\{ \sum_{i=1}^n p_i \cdot b_i \mid (b_i)_{i \in \natset n} \in A^n \Bigr\} = \Sup A
	\]
	hence
	\[
	\Sup \convclos A {} = \Sup \Bigl\{ \sum_{i=1}^n p_i \cdot a_i \mid n \in \N,\, (p_i) \in S^n_1, \, (a_i)_{i \in \natset n} A^n \Bigr\} \le \Sup A
	\]
	while the other inequality is trivial because $A \subseteq \Bigl\{ \sum_{i=1}^n p_i \cdot a_i \mid n \in \N,\, (p_i) \in S^n_1, \, (a_i)_{i \in \natset n} \in A^n \Bigr\}$. \qed
\end{proof}

Finally, again a morphism of $(\Sigma,E)$-algebras is a morphism respecting all the operations of $\Sigma$, which means of $\Sigma_{\LSM}$ (thus is a morphism of $\EM{\mon S}$) and of $\Sigma_{\CSL}$ (thus is a morphism of $\EM{\P}$) at the same time. We have therefore proved the following theorem.

\begin{theorem}
	The category $\EM{\delta}$ of $\delta$-algebras is isomorphic to the category  $\Alg(\Sigma,E)$ of $(\Sigma,E)$-algebras.
\end{theorem}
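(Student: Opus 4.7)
The plan is to exhibit mutually inverse functors $F\colon \EM{\delta} \to \Alg(\Sigma,E)$ and $G\colon \Alg(\Sigma,E)\to \EM{\delta}$, and argue that the constructions already carried out in Section~\ref{Appendix Section 6} supply everything one needs. The functor $F$ sends a $\delta$-algebra $(X,a,b)$ to the $(\Sigma,E)$-algebra with operations $+^a$, $0^a$, $\lambda\cdot^a$ defined from $a$ as in~\eqref{eqn:semimodule associated to S-algebra} and $\Sup^b_I$ defined from $b$ as in~\eqref{eq:defSup}. The axioms $E_\LSM$ hold because $\LSM$ presents $\S$, the axioms $E_\CSL$ hold because $\CSL$ presents $\P$, and the axioms $E_\D$ are precisely the content of Theorem~\ref{thm:distributivity property of delta-algebras}. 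The functor $G$ sends a $(\Sigma,E)$-algebra $(X,+,0,\lambda\cdot,\Sup_I)$ to the triple $(X,a,b)$ where $a(\phi)=\sum_{x\in\supp\phi} \phi(x)\cdot x$ and $b(A)=\Sup_{x\in A} x$; by the same presentation results these are a $\S$-algebra and a $\P$-algebra, and the displayed Lemma together with the distributivity axioms in~\eqref{eq:axiomsD} is exactly what is needed for pentagon~\eqref{eqn:distributivity pentagon for delta-algebras} to commute, as spelled out immediately before that Lemma.

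On morphisms, both functors act as the identity on the underlying function, so the bulk of the work is to check that ``morphism of $\delta$-algebras'' and ``$(\Sigma,E)$-homomorphism'' cut out the same subset of $\Set(X,Y)$. For the forward direction, a morphism of $\delta$-algebras is by definition simultaneously a morphism of $\S$-algebras and of $\P$-algebras, hence respects every operation in $\Sigma_\LSM\cup \Sigma_\CSL=\Sigma$. For the reverse direction, a $(\Sigma,E)$-homomorphism respects $\Sigma_\LSM$ and $\Sigma_\CSL$ separately, so by the presentation of $\S$ by $\LSM$ and of $\P$ by $\CSL$ it is a morphism of $\S$- and of $\P$-algebras, i.e.\ a $\delta$-algebra morphism; commutativity of~\eqref{eqn:distributivity pentagon for delta-algebras} is a property of the objects, not an extra condition on morphisms.

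It remains to verify $GF=\id$ and $FG=\id$ on objects. This reduces to two routine identifications: first, that starting from $a\colon \S X\to X$, passing to $(+^a,0^a,\lambda\cdot^a)$, and then reassembling via $\phi\mapsto \sum_{x\in\supp\phi}\phi(x)\cdot^a x$ returns $a$ (and dually starting from a semimodule structure); second, the analogous statement for $b\colon \P X\to X$ and $\Sup^b$. Both are standard consequences of the fact that $\LSM$ presents $\S$ and $\CSL$ presents $\P$, and can be checked by direct computation on a generic $\phi\in\S X$ and $A\in\P X$.

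The only non-mechanical step is the distributivity pentagon in the direction $G$: the middle paragraph before the final Lemma reduces the commutativity of~\eqref{eqn:distributivity pentagon for delta-algebras} to the identity $\sum_{A\in\supp\Phi}\Phi(A)\cdot \Sup A = \Sup\overline{\{\sum_{A}\Phi(A)\cdot u(A)\mid u(A)\in A\}}$. Applying the Lemma $\Sup A=\Sup\overline{A}$ on the right, the equation becomes the pointwise content of the two distributivity axioms in~\eqref{eq:axiomsD}, applied iteratively over $\supp\Phi$. This is the one place where a small induction on $|\supp\Phi|$ is needed, and it will be the main (though still short) obstacle; everything else is bookkeeping.
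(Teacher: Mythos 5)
Your proposal is correct and follows essentially the same route as the paper: transporting structure along \eqref{eqn:semimodule associated to S-algebra} and \eqref{eq:defSup}, invoking the presentations of $\S$ and $\P$ for the semimodule/semilattice axioms and for morphisms, using Theorem~\ref{thm:distributivity property of delta-algebras} for the forward direction, and using the lemma $\Sup A = \Sup \convclos A {}$ together with the distributivity axioms~\eqref{eq:axiomsD} to obtain the pentagon~\eqref{eqn:distributivity pentagon for delta-algebras} in the reverse direction. The one step you flag as needing care---the iterated application over $\supp\Phi$, legitimate since $\Phi(A)\neq 0$ there---is likewise left implicit in the paper's own proof.
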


Since $\EM{\delta}$ is canonically isomorphic to $\EM{\convpowS}$ (\cite{bohm_weak_2010,garner_vietoris_2020}), we have proved Theorem~\ref{thm:pres}.

\subsubsection{The Kleisli Category of $\convpowS$.} In this section we aim to prove, using the algebraic presentation of the monad $\convpowS$, that its Kleisli category satisfies the conditions required to perform a coalgebraic trace semantics, as stated in~\cite{hasuo_generic_2006}.

$(\convpowS X,\mu^{\convpowS}_X)$ is a $\convpowS$-algebra, hence $\convpowS X$ has a structure of semimodule and complete lattice where, via the canonical isomorphism $\EM{\convpowS} \to \EM{\delta}$, we have
\begin{gather*}
    \mathcal A + \mathcal B = 
    \{ \phi + \psi \mid \phi \in \mathcal A,\, \psi \in \mathcal B \} \\
    \lambda \cdot \mathcal A = 
    \{ \lambda \cdot \phi \mid \phi \in \mathcal A \} \\
    \Sup_{i \in I} \mathcal A_i = 
    \convclos { \bigcup_{i \in I} \mathcal A_i } {}
\end{gather*}
for all $\mathcal A, \mathcal B \in \convpowS X$ and $\lambda \in S$, $\lambda \ne 0$.

Consider now the Kleisli category $\Kl{\convpowS}$. Each homset $\Kl{\convpowS}(X,Y)$ of functions $f \colon X \to \convpowS(Y)$ is partially ordered point-wise and inherits the structure of complete semilattice from $\convpowS Y$, in particular its bottom element $\bot_{X,Y}$ is the constant function mapping $x$ to $\emptyset$ for all $x \in X$. Given a function $g \colon Y \to \convpowS Z$, its Kleisli extension $\Sharp g \colon \convpowS Y \to \convpowS Z$ is given by $\mu^\convpowS_Z \circ \convpowS g$, which for all $\mathcal A \in \convpowS Y$ computes:
\begin{align*}
\Sharp g (\mathcal A) &= \Sup_{\phi \in \mathcal A} \, \sum_{y \in \supp \phi} \phi(y) \cdot g(y) \\
&= \bigcup_{\phi \in \mathcal A} \Bigl\{ \sum_{y \in \supp \phi} \phi(y) \cdot \psi_y \mid \forall y \in \supp \phi \ldotp \psi_y \in g(y) \Bigr\}.
\end{align*}
Composition of a function $f \colon X \to \convpowS Y$ and $g \colon Y \to \convpowS Z$ is therefore given as
\[
(g \circ f) (x) = \Sharp g (f(x))= \Sup_{\phi \in f(x)} \sum_{y \in \supp \phi } \phi(y) \cdot g(y) \in \convpowS Z.
\]

\begin{theorem}
The category $\Kl{\convpowS}$ is enriched over the category of directed-complete partial orders and satisfies the left-strictness condition:
\[
\bot_{Y,Z} \circ f = \bot_{X,Z}
\]
for all $f \colon X \to \convpowS Y$ and $Z$ set.
\end{theorem}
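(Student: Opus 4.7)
The plan is to check three things, all leveraging the algebraic presentation of $\convpowS$ given by Theorem~\ref{thm:pres}: each hom-set is a dcpo with bottom, Kleisli composition is continuous in both arguments, and it is strict in its left argument. For the enrichment part, Theorem~\ref{thm:pres} tells us that $\convpowS Y$ is a complete semilattice whose order coincides with set-inclusion of convex subsets (since $\mathcal A \sqcup \mathcal B = \convclos{\mathcal A \cup \mathcal B}{}$) and whose bottom element is $\emptyset$. Hence $\Kl{\convpowS}(X,Y) = \Set(X, \convpowS Y)$ inherits this structure pointwise: it is a complete lattice, a fortiori a dcpo, with bottom $\bot_{X,Y}(x) = \emptyset$.

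Continuity in the left variable is essentially formal: the Kleisli extension $\Sharp g = \mu^\convpowS_Z \circ \convpowS(g)$ is, by the universal property of the free $\convpowS$-algebra $\convpowS Y$, the unique $\convpowS$-algebra homomorphism extending $g$, and via Theorem~\ref{thm:pres} such homomorphisms preserve all joins. Hence $\Sharp g(\Sup_n \mathcal A_n) = \Sup_n \Sharp g (\mathcal A_n)$ for any directed family, which lifts pointwise to $g \circ (\Sup_n f_n) = \Sup_n (g \circ f_n)$. For continuity in the right variable I first derive the identity
\[
\Sharp g(\mathcal A) \;=\; \Sup_{\phi \in \mathcal A} \sum_{y \in \supp \phi} \phi(y) \cdot g(y)
\]
by decomposing each $\phi \in \mathcal A$ as the semimodule combination $\sum_{y} \phi(y) \cdot \eta^{\convpowS}(y)$ with $\eta^{\convpowS}(y)=\{\Delta_y\}$, writing $\mathcal A = \Sup_\phi \{\phi\}$, and using that $\Sharp g$ preserves semimodule operations, joins, and sends $\eta^{\convpowS}(y)$ to $g(y)$. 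Given a chain $(g_n)$ with pointwise supremum $g$, I then apply the distributivity axioms from~\eqref{eq:axiomsD}: first $\lambda \cdot \Sup_n g_n(y) = \Sup_n \lambda \cdot g_n(y)$, which applies because $y \in \supp \phi$ forces $\phi(y) \neq 0$; then the commutation of $+$ with $\Sup$ to pull $\Sup_n$ past the finite sum over $\supp \phi$, using monotonicity of the chain to collapse a tuple of indices into a single index. A final commutation of joins in the complete lattice $\convpowS Z$ moves $\Sup_n$ past $\Sup_\phi$, yielding $\Sharp{(\Sup_n g_n)}(\mathcal A) = \Sup_n \Sharp{g_n}(\mathcal A)$.

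Left-strictness follows by reusing the key identity with $g = \bot_{Y,Z}$: for each $\phi \in \mathcal A$ with nonempty support, the axiom $\lambda \cdot \bot = \bot$ from $E_{\D'}$ in Table~\ref{tab:axiomsinitial} gives $\phi(y) \cdot \emptyset = \emptyset$ for every $y \in \supp \phi$, the axiom $x + \bot = \bot$ collapses the finite sum to $\emptyset$, and the outer join over $\mathcal A$ is therefore $\emptyset$ as well, so $\bot_{Y,Z} \circ f = \bot_{X,Z}$. The main obstacle I anticipate is precisely the quantifier shuffle in the right-argument continuity: one must swap $\Sup_n$ past both the potentially infinite $\Sup_{\phi \in \mathcal A}$ and the finite sum on $\supp \phi$, being careful that the scalar-join distributivity in~\eqref{eq:axiomsD} only applies for $\lambda \neq 0$ and exploiting monotonicity of the chain to reduce an indexed tuple of natural numbers to a single one. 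Formally this is a standard CPO manipulation, but it is where the algebraic presentation genuinely does the heavy lifting.
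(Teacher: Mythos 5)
Your proposal is correct and follows essentially the same route as the paper's proof: the pointwise complete-semilattice order with bottom $\emptyset$ coming from the presentation, continuity in the pre-composed argument because $\Sharp{g}$ is a $\convpowS$-algebra morphism preserving arbitrary suprema, continuity in the post-composed argument via the explicit formula $\Sharp{g}(\mathcal A) = \Sup_{\phi \in \mathcal A} \sum_{y \in \supp \phi} \phi(y) \cdot g(y)$ together with the distributivity axioms and directedness/monotonicity to collapse the tuple of indices $(i_y)$ into a single index, and left-strictness from $\lambda \cdot \emptyset = \emptyset$ (for $\lambda \neq 0$, which holds on supports) and $x + \bot = \bot$. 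The only cosmetic differences are that you derive the formula for $\Sharp{g}$ algebraically from the free-algebra structure rather than by direct computation of $\mu^{\convpowS}_Z \circ \convpowS(g)$, and that your ``left/right variable'' labels are swapped relative to the written composition $g \circ f$ (harmless, since the mathematics, including the strictness claim, matches the paper's).
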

\begin{proof}
Let $f \colon X \to \convpowS Y$ and $\{ g_i \mid i \in I \}$ a directed subset of $\Kl{\convpowS}(Y,Z)$. Then:
\begin{align*}
    \Bigl( (\Sup_{i \in I} g_i ) \circ f \Bigr) (x) &= \Sup_{\phi \in f(x)} \sum_{y \in \supp \phi} \phi(y) \cdot (\Sup_{i \in I} g_i) (y) \\
    &= \Sup_{\phi \in f(x)} \sum_{y \in \supp \phi} \phi(y) \cdot \Sup_{i \in I} (g_i(y)) \\
    &= \Sup_{\phi \in f(x)} \sum_{y \in \supp \phi} \Sup_{i \in I} \phi(y) \cdot g_i(y) \\
    &\overset{\ast}{=} \Sup_{\phi \in f(x)} \Sup_{(i_y) \in I^{\supp \phi}} \{ \sum_{y \in \supp \phi} \phi(y) \cdot \psi_{i_y,y} \mid \forall y  \ldotp \psi_{i_y,y} \in g_i(y) \}  \\
    &\overset{\dagger}{=} \Sup_{\phi \in f(x)} \Sup_{i \in I} \{ \sum_{y \in \supp \phi} \phi(y) \cdot \psi_{i,y} \mid \forall y  \ldotp \psi_{i,y} \in g_i(y) \} \\
    &= \Sup_{\phi \in f(x)} \Sup_{i \in I} \sum_{y \in \supp \phi} \phi(y) \cdot g_i(y) \\
    &= \Sup_{i \in I} \Sup_{\phi \in f(x)} \sum_{y \in \supp \phi} \phi(y) \cdot g_i(y) \\ 
    &= \Sup_{i \in I} (g_i \circ f) (x).
\end{align*}
Equation ($\ast$) holds because of the distributivity of addition over joins~\eqref{eq:axiomsD}, while $(\dagger)$ holds because the family $\{g_i \mid i \in I\}$ is directed.

Given, instead, an \emph{arbitrary} subset $\{f_i \mid i \in I\}$ of $\Kl{\convpowS} (X,Y)$ and a $g \colon Y \to \convpowS Z$, we have
\begin{align*}
    \Bigl( g \circ (\Sup_{i \in I} f_i) \Bigr) (x) &= \Sharp g \Bigl(\Sup_{i \in I} f_i(x)\Bigr) \\
    &\overset{\ddag}{=} \Sup_{i \in I} \Sharp g (f_i(x)) \\
    &= \Sup_{i \in I} (g \circ f) (x)
\end{align*}
where equation $(\ddag)$ holds because $\Sharp g$ is a morphism of $\convpowS$-algebras (as it is given by the universal property of the free algebra $\convpowS Y$), hence it preserves arbitrary suprema. Finally, 
\[
(\bot_{Y,Z} \circ f)(x) = \Sup_{\phi \in f(x)} \sum_{y \in \supp \phi} \phi(y) \cdot \emptyset = \Sup_{\phi \in f(x)} \sum_{y \in \supp \phi} \emptyset = \Sup_{\phi \in f(x)} \emptyset = \emptyset
\]
(notice that $\phi(y) \cdot \emptyset = \emptyset$ because $\phi(y) \ne 0$ when $y \in \supp \phi$). \qed
\end{proof}

\subsubsection{The Term Monad for $(\Sigma,E)$.} The algebraic theory $(\Sigma,E)$ described in Theorem~\ref{thm:pres} determines a monad on $\Set$ $T_{\Sigma,E}$ where, for any set $X$, $T_{\Sigma,E}(X)$ is the set of all $\Sigma$-terms with variables in $X$ quotiented by the equations in $E$. Recall that a $\Sigma$-term with variables in $X$ is defined inductively as:
\begin{itemize}
    \item every variable $x$ is a $\Sigma$-term,
    \item if $o$ is an operation in $\Sigma$ with arity $\kappa$ and $t_1,\dots,t_\kappa$ are $\Sigma$-terms, then $o(t_1,\dots,t_\kappa)$ is a $\Sigma$-term.
\end{itemize}
If $f \colon X \to Y$ is a function, $T_{\Sigma,E}(f) \colon T_{\Sigma,E}(X) \to T_{\Sigma,E} (Y)$ sends a term $t$ in $t[f(x) / x]$, where every variable $x$ is substituted by its image $f(x)$. The unit $\eta^T$ is simply defined as $\eta^T_X (x) = x$, while the multiplication is defined by induction as:
\[
\begin{tikzcd}[row sep=0em]
T_{\Sigma,E} \bigl( T_{\Sigma,E} (X) \bigr) \ar[r,"\mu^T_X"] & T_{\Sigma,E} (X) \\
t \in T_{\Sigma,E} \ar[r,|->] & t  \\
o_\kappa(t_1,\dots,t_\kappa) \ar[r,|->] & o_\kappa (\mu^T_X(t_1),\dots,\mu^T_X(t_\kappa))
\end{tikzcd}
\]
This construction is standard for \emph{finitary} algebraic theories, where every operation in $\Sigma$ has finite arity. The fact that it makes sense also for our case, where we have an operation for every cardinal, is ensured by the fact that our $(\Sigma,E)$ is \emph{tractable} in the sense of~\cite[Definition 1.5.44]{manes_algebraic_1976}, because we proved in Theorem~\ref{thm:pres} that it presents a monad on $\Set$, namely $\convpowS$. Tractability ensures that the \emph{class} of $\Sigma$-terms, once quotiented by $E$, is forced to be a set. Notice, moreover, that if we represent a $\Sigma$-term, as usual, as a tree whose nodes are operation symbols in $\Sigma$, which have each as many branches as their arity, and whose leaves are variables, then we end up with a tree with infinite branches but finite height. This helps in giving an intuition of why we can define functions $\phi \colon T_{\Sigma,E}(X) \to Y$ by induction on the complexity of terms.

Now, the category of Eilenberg-Moore algebras for the monad $T_{\Sigma,E}$ is, in fact, isomorphic to $\Alg{(\Sigma,E)}$, hence also to $\EM{\convpowS}$, via a functor $F$ such that
\[
\begin{tikzcd}
\EM{\convpowS} \ar[d,"\U {\convpowS}"] \ar[r,"F"] &       \EM{T_{\Sigma,E}}  \ar[d,"\U{T}"] \\
\Set \ar[r,"\id{\Set}"] & \Set
\end{tikzcd}
\]
commutes. This generates an isomorphism of monads $\phi \colon T_{\Sigma,E} \to \convpowS$ where, for all $X$ set, $\phi_X = F(\mu^{\convpowS}_X) \circ T_{\Sigma,E} (\eta^{\convpowS}_X)$, thanks to the following general result:

\begin{theorem}
Let $(S,\eta^S,\mu^S)$ and $(T,\eta^T,\mu^T)$ be monads on a category $\C$. Suppose $\EM S$ and $\EM T$ are isomorphic via a functor $F \colon \EM S \to \EM T$ such that $\U T F = \U S$. Then $T$ and $S$ are isomorphic as monads, that is, there is a natural isomorphism $\phi \colon T \to S$ such that
\[
\begin{tikzcd}
\id\C \ar[r,"\eta^T"] \ar[dr,"\eta^S"'] & T \ar[d,"\phi"] & T^2 \ar[l,"\mu^T"'] \ar[d,"\phi \ast \phi"] \\
& S & S^2 \ar[l,"\mu^S"]
\end{tikzcd}
\]
commutes, where $\phi \ast \phi = \phi S \circ T \phi = S\phi \circ \phi T$. Specifically, $\phi_X$ is given as the unique morphism of $T$-algebras (hence, a morphism in $\C$) granted by the universal property of $(TX,\mu^T_X)$ as the free $T$-algebra on $X$, induced by $\eta^S_X$:
\[
\begin{tikzcd}
TTX \ar[d,"\mu^T_X"'] \ar[r,"T(\phi_X)"] & TSX \ar[d,"F(\mu^S_X)"] \\
TX \ar[r,dotted,"\exists ! \phi_X"] & SX \\
X \ar[u,"\eta^T_X"] \ar[ur,"\eta^S_X"']
\end{tikzcd}
\qquad
\phi_X = F(\mu^S_X) \circ T(\eta^S_X).
\]
\end{theorem}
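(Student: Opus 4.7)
The plan is to build $\phi \colon T \to S$ componentwise by invoking the free $T$-algebra property of $(TX, \mu^T_X)$ at each $X$, and then to derive naturality, the two monad-map laws, and invertibility from the uniqueness clause of that universal property together with the hypothesis $\U T F = \U S$.

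For every $X \in \C$, the object $F(SX, \mu^S_X)$ of $\EM T$ has underlying $\C$-object $SX$ because $\U T F = \U S$; call its structure map $\alpha_X \colon TSX \to SX$ (this is what the statement denotes by $F(\mu^S_X)$). Since $(TX, \mu^T_X)$ is the free $T$-algebra on $X$, the morphism $\eta^S_X \colon X \to SX$ extends uniquely to a $T$-algebra morphism $\phi_X \colon (TX, \mu^T_X) \to (SX, \alpha_X)$, and the standard formula for such extensions gives $\phi_X = \alpha_X \circ T(\eta^S_X)$; in particular $\phi_X \circ \eta^T_X = \eta^S_X$ by construction. A useful preliminary observation is that for any $g \colon A \to B$ in $\C$, $Sg$ is an $S$-algebra morphism $(SA, \mu^S_A) \to (SB, \mu^S_B)$ by naturality of $\mu^S$, so applying $F$ the same $\C$-morphism $Sg$ becomes a $T$-algebra morphism $(SA, \alpha_A) \to (SB, \alpha_B)$. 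Likewise $\mu^S_X$ is an $S$-algebra morphism $(S^2 X, \mu^S_{SX}) \to (SX, \mu^S_X)$, so it is also a $T$-algebra morphism $(S^2 X, \alpha_{SX}) \to (SX, \alpha_X)$.

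With these in hand, naturality and the multiplication axiom both follow from uniqueness. Given $f \colon X \to Y$, both $\phi_Y \circ Tf$ and $Sf \circ \phi_X$ are $T$-algebra morphisms $(TX, \mu^T_X) \to (SY, \alpha_Y)$ whose precomposition with $\eta^T_X$ equals $\eta^S_Y \circ f$, so they coincide. For multiplication, both $\phi_X \circ \mu^T_X$ and $\mu^S_X \circ S\phi_X \circ \phi_{TX}$ are $T$-algebra morphisms $(T^2 X, \mu^T_{TX}) \to (SX, \alpha_X)$: on the right, the three factors $\phi_{TX}$, $S\phi_X$, and $\mu^S_X$ are each $T$-algebra morphisms between the transferred algebras by the two observations above. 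Precomposing either side with $\eta^T_{TX}$ yields $\phi_X$ after a short diagram chase using the unit laws of $T$ and $S$ and naturality of $\eta^S$, so uniqueness concludes.

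For invertibility, the same recipe run symmetrically produces $\psi_X \colon SX \to TX$, obtained from the free $S$-algebra property of $(SX, \mu^S_X)$ applied to the map $\eta^T_X \colon X \to TX$ landing in the $S$-algebra $F^{-1}(TX, \mu^T_X)$; applying $F$ to $\psi_X$ shows that it is also a $T$-algebra morphism $(SX, \alpha_X) \to (TX, \mu^T_X)$. Then $\psi_X \circ \phi_X$ and $\id{TX}$ are two $T$-algebra endomorphisms of $(TX, \mu^T_X)$ agreeing after precomposition with $\eta^T_X$, so they are equal, and the dual argument in $\EM S$ gives $\phi_X \circ \psi_X = \id{SX}$. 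The main obstacle, more conceptual than computational, is the multiplication square: one has to recognize that $S$ applied to any $\C$-morphism automatically yields a $T$-algebra morphism between the $F$-transferred algebras, so that the three-fold composite on the right-hand side is built entirely from $T$-algebra morphisms and the uniqueness clause at the free $T$-algebra $(T^2 X, \mu^T_{TX})$ can be invoked.
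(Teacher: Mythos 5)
Your proof is correct and follows exactly the construction the paper indicates in the statement itself: the paper gives no separate written proof of this general result, and its displayed diagram already defines $\phi_X$ by the universal property of the free $T$-algebra $(TX,\mu^T_X)$ applied to $\eta^S_X$, precisely as you do. Your verifications — naturality and the multiplication law via uniqueness of extensions from the free algebras $(TX,\mu^T_X)$ and $(T^2X,\mu^T_{TX})$, using that $\U T F = \U S$ makes $F$ transfer every $S$-algebra morphism to a $T$-algebra morphism on the same underlying map, and invertibility via the symmetric extension through $F^{-1}$ — supply the standard details the paper leaves implicit, with no gaps.
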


Direct calculations show that our $\phi_X \colon T_{\Sigma,E}(X) \to \convpowS(X) = \ConvPow {\S X} {\muS_X}$ acts as follows:
\begin{align*}
    \phi_X(x) &= \{ \Delta_x \} \quad \text{for $x \in X$} \\
    \phi_X(0) &= \{ 0 \colon X \to S\} \\
    \phi_X(t_1 + t_2) &= \phi_X(t_1) + \phi_X(t_2) \\
    \phi_X (\lambda \cdot t) &= \lambda \cdot \phi_X (t) \\
    \phi_X(\Sup_I \{t_i \mid i \in I \}) &= \convclos {\bigcup_{i \in I} \phi(t_i)} {\muS_X}
\end{align*}
where $\phi_X(t_1) + \phi_X(t_2)$ is the result of adding up in the $\S$-algebra $(\ConvPow {\S X} {\muS_X}, \alpha_{\muS_X})$ (see Theorem~\ref{thm:convpow(X,a) is a S-algebra}) seen as a $S$-semimodule the convex subsets $\phi_X(t_1)$ and $\phi_X(t_2)$ of $\S X$. Similarly $\lambda \cdot \phi_X (t)$ is the scalar-product in $\ConvPow {\S X} {\muS_X}$. Hence:
\begin{align*}
    \phi_X(t_1 + t_2) &= \Bigl\{ \muS_X \Bigl(  
    \begin{tikzcd}[row sep=0em,ampersand replacement=\&]
    f_1 \ar[r,|->] \& 1 \\
    f_2 \ar[r,|->] \& 1
    \end{tikzcd}
    \Bigr) \mid f_1 \in \phi_X (t_1), \, f_2 \in \phi_X (t_2) \Bigr\} \\
    \lambda \cdot \phi_X (t) &= \{ \lambda \cdot f \mid f \in \phi_X (t) \}.
\end{align*}

It is clear, then, that if we restrict the action of $\phi_X$ to all those terms involving only finite suprema, we obtain exactly the function $\bb{\cdot}_X$ of Proposition~\ref{prop:injectivemonadmap}. Since $\phi_X$ is a bijection, its restriction $\bb{\cdot}_X$ is injective. This proves Proposition~\ref{prop:injectivemonadmap}.

\subsubsection{The Restriction of $\convpowS$ to $\convpowfS$.} The aim of this section is to prove that if we restrict the image of the functor $\convpowS$ on a set $X$ to those convex subsets $\mathcal A \subseteq \S X$ that are finitely generated, then all the remaining structure of the monad $\convpowS$ still works with no adaptation. This means that we have to prove that:
\begin{itemize}
    \item for $f \colon X \to Y$, $\convpowS(f) \colon \convpowS(X) \to \convpowS(Y)$ restricts and corestricts to $\convpowfS (X) \to \convpowfS (Y)$,
    \item $\eta^\convpowS_X$ can be corestricted to $\convpowfS (X)$ (trivial),
    \item $\mu^\convpowS_X$ restricts and corestricts to $\convpowfS \convpowfS X \to \convpowfS X$.
\end{itemize}

We do so in the next few results. From now on, if $\mathcal B \subseteq \S X$, we shall simply write $\convclos {\mathcal B} {}$ in lieu of $\convclos {\mathcal B} {\muS_X}$.

\begin{proposition}
Let $f \colon X \to Y$, $\mathcal A \subseteq \S X$ such that $\mathcal A =  \convclos {\mathcal B} {} $ for some finite $\mathcal B \subseteq \mathcal A$. Then
\[
\{ \S f (\phi) \mid \phi \in \mathcal A  \} = \convclos { \{\S f (\psi) \mid \psi \in \mathcal B \}} {}.
\]
\end{proposition}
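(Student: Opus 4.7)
The plan is to reduce the statement to the fact that $\S f$ commutes with taking convex closures in the semimodules $(\S X, \muS_X)$ and $(\S Y, \muS_Y)$; the finiteness of $\mathcal B$ plays no role in the identity itself, and will only be used afterwards to deduce that $\convpowS f$ preserves finite generation.

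The crucial observation is that $\S f \colon \S X \to \S Y$ is a morphism of $\S$-algebras from $(\S X, \muS_X)$ to $(\S Y, \muS_Y)$: this is precisely the naturality square $\S f \circ \muS_X = \muS_Y \circ \S \S f$. Consequently $\S f$ respects the $S$-left-semimodule structures on $\S X$ and $\S Y$ induced by $\muS_X$ and $\muS_Y$ through~\eqref{eqn:semimodule associated to S-algebra}, and in particular maps convex combinations to convex combinations.

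For the inclusion from left to right, any $\phi \in \mathcal A = \convclos{\mathcal B}{}$ can, by Definition~\ref{def:convex closure SEMIMODULE}, be written as $\phi = \sum_{i=1}^n \lambda_i \cdot \psi_i$ with $\psi_i \in \mathcal B$ and $\sum_{i=1}^n \lambda_i = 1$. Applying $\S f$ yields $\S f(\phi) = \sum_{i=1}^n \lambda_i \cdot \S f(\psi_i)$, which is a convex combination in $\S Y$ of elements of $\{\S f(\psi) \mid \psi \in \mathcal B\}$, hence an element of $\convclos{\{\S f(\psi) \mid \psi \in \mathcal B\}}{}$. For the converse inclusion, any element of $\convclos{\{\S f(\psi) \mid \psi \in \mathcal B\}}{}$ has the form $\sum_{i=1}^n \lambda_i \cdot \S f(\psi_i)$ with $\psi_i \in \mathcal B$ and $\sum_i \lambda_i = 1$, which by the same preservation property equals $\S f\bigl( \sum_{i=1}^n \lambda_i \cdot \psi_i \bigr)$, that is, $\S f$ applied to an element of $\convclos{\mathcal B}{} = \mathcal A$.

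There is no serious obstacle: both inclusions are immediate once $\S f$ has been recognised as an $\S$-algebra morphism, which is just the naturality of $\muS$. The only thing worth flagging is that the finiteness hypothesis on $\mathcal B$ is used later, outside this proposition: since $\{\S f(\psi) \mid \psi \in \mathcal B\}$ is finite whenever $\mathcal B$ is, the identity above shows that $\convpowS f$ restricts to a function $\convpowfS(X) \to \convpowfS(Y)$, as needed.
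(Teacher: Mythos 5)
Your proof is correct and takes essentially the same route as the paper's: both inclusions rest on the naturality square $\S f \circ \muS_X = \muS_Y \circ \S\S f$, i.e.\ on recognising $\S f$ as a morphism of $\S$-algebras $(\S X, \muS_X) \to (\S Y, \muS_Y)$. The paper phrases the witnesses as elements $\Psi \in \S^2 X$ pushed forward along $\S^2 f$ (and, for the converse, builds $\Psi$ from $\Psi'$ by choosing $\S f$-preimages in $\mathcal B$), whereas you phrase the very same manipulation as preservation of finite convex combinations in the induced semimodule structures --- a purely presentational difference, and your choice of preimages $\psi_i \in \mathcal B$ for the generators in the converse direction handles the non-injectivity of $\S f$ just as the paper implicitly does.
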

\begin{proof}
For the left to right inclusion: let $\phi \in \mathcal A = \convclos {\mathcal B} {}$. Then there exists $\Psi \in \S^2 X$ such that $\sum_{\chi \in \S X} \Psi(\chi) =1$, $\supp \Psi \subseteq \mathcal B$, $\phi = \muS_X(\Psi)$. We have to prove that there is a $\Psi' \in \S^2 Y$ such that $\sum_{\chi \in \S Y} \Psi'(\chi) = 1$, $\supp \Psi' \subseteq \{\S f (\psi) \mid \psi \in \mathcal B \}$, $\muS_Y(\Psi')=\phi$.

Now, because of the naturality of $\muS$, we have
\[
\S f (\phi) = \S f (\muS_X(\Psi)) = \muS_Y(\S^2 f (\Psi)).
\]
One can easily see that $\S^2 f (\Psi)$ works for our desired $\Psi'$.

Vice versa, let $\Psi' \in \S^2 Y$ be such that $\sum_{\chi \in \S Y} \Psi'(\chi) = 1$ and $\supp \Psi' \subseteq \{\S f (\psi) \mid \psi \in \mathcal B \}$. We have to show that there is $\phi \in \mathcal A$ such that $\muS_Y (\Psi') = \S f (\phi)$.

We have that $\Psi'$ is of the form
\[
\left(
\begin{tikzcd}[row sep=0em]
\S f (\psi_1) \ar[r,|->] & \Psi'(\S f (\psi_1)) \\
\vdots & \vdots \\
\S f (\psi_n) \ar[r,|->] & \Psi'(\S f (\psi_n)) 
\end{tikzcd}
\right)
\]
Then that means that $\Psi'=\S(\S f)(\Psi)$ where $\Psi$ is defined as
\[
\Psi = \left(
\begin{tikzcd}[row sep=0em]
\psi_1 \ar[r,|->] & \Psi(\S f (\psi_1)) \\
\vdots & \vdots \\
\psi_n \ar[r,|->] & \Psi(\S f (\psi_n))
\end{tikzcd}
\right) \in \S^2 X
\]
Then, again by naturality of $\muS$, we have
\[
\muS_Y(\Psi')=\muS_Y(\S^2 f (\Psi)) = \S f (\muS_X (\Psi))
\]
and $\phi=\muS_X (\Psi)$ is indeed in $\mathcal A$ because $\mathcal A = \convclos {\mathcal B} {}$.
\qed
\end{proof}


This tells us that $\convpowfS$ is an endofunctor on $\Set$. Next,$\eta^\convpowS_X (x) = \{ \Delta_x \}$ and $\{\Delta_x\}$ is obviously finitely generated, therefore $\eta^\convpowS$ corestricts to $\convpowfS$. How about $\mu^\convpowS$?

Recall that $\mu^\convpowS_X \colon \convpowS \convpowS X \to \convpowS X$ is defined, for every $\mathscr A$ convex subset of $\S \bigl( \convpow (\S X) \bigr)$, as
\[
\mu^{\convpowS}_X (\mathscr A) = \bigcup_{\Omega \in \mathscr A} \{ \muS_X (F) \mid F \in \choice \Omega \}
\]
where 
\[
\choice\Omega = \{ F \in \S^2 X \mid \forall \mathcal A \in \supp \Omega \ldotp \exists u_{\mathcal A} \in \mathcal A \ldotp \forall \phi \in \S X \ldotp F(\phi) = \sum_{\substack{\mathcal A \in \supp \Phi \\ \phi = u_{\mathcal A}}} \Omega(\mathcal A) \}
\]
We aim to prove that $\bigcup_{\Omega \in \mathscr A} \{ \muS_X (F) \mid F \in \choice \Omega \}$ is, in fact, finitely generated in the hypothesis that $\mathscr A \in \convpowfS \convpowfS X$. We will achieve this in three steps.
\begin{itemize}
    \item \textbf{Step 1}: let $\mathscr B$ be a finite subset of $\mathscr A$ such that $\mathscr A = \convclos {\mathscr B} {}$. Then we prove that
    \[
    \bigcup_{\Omega \in \mathscr A} \{ \muS_X (F) \mid F \in \choice \Omega \} = 
    \convclos { \bigcup_{\Theta \in \mathscr B} \{ \muS_X (G) \mid G \in \choice \Theta \} } {\muS_X}
    \]
    showing therefore that we can reduce ourselves to a finite union.
    \item \textbf{Step 2}: we prove that each $\{ \muS_X (G) \mid G \in \choice \Theta \}$ as of Step 1 is convex and finitely generated.
    \item \textbf{Step 3}: we prove that the convex closure of a finite union of convex and finitely generated sets is in turn finitely generated.
\end{itemize}
The next three Lemmas will perform each step.

\begin{lemma}
Let $\mathscr A \in \convpowfS \convpowfS X$ and let $\mathscr B$ be a finite subset of $\mathscr A$ such that $\mathscr A = \convclos {\mathscr B} {}$. Then
\[
    \bigcup_{\Omega \in \mathscr A} \{ \muS_X (F) \mid F \in \choice \Omega \} = 
    \convclos { \bigcup_{\Theta \in \mathscr B} \{ \muS_X (G) \mid G \in \choice \Theta \} } {\muS_X}.
    \]
\end{lemma}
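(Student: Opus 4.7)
The plan is to prove the two inclusions separately, exploiting on both sides that convex combinations in $\S X$ (via $\mu^\S_X$) distribute nicely through the ``$\alpha$-apply'' operation $\Omega \mapsto \{\mu^\S_X(F) \mid F \in \choice\Omega\}$. Throughout I would use the explicit formulas for $\choice\Omega$ from \eqref{eqn:c(Phi)} and the fact that, for any $u \colon \supp\Omega \to \S X$ with $u(\mathcal A) \in \mathcal A$, one has $\mu^\S_X(\S(u)(\Omega))(x) = \sum_{\mathcal A \in \supp\Omega} \Omega(\mathcal A)\cdot u(\mathcal A)(x)$, so that each element of $\alpha_{\mu^\S_X}(\Omega)$ is a finite convex combination, with coefficients $\Omega(\mathcal A)$, of the selected representatives $u(\mathcal A)\in \mathcal A$. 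I would also dismiss the degenerate cases ($\Omega$ or some $\Theta_i$ containing $\emptyset$ in its support, or $\Omega = \zero$) exactly as in the proof of Theorem~\ref{thm:delta=delta'}, so that without loss of generality every $\Omega$ and $\Theta_i$ we work with has nonempty, nonzero support contained in $\convpowS X \setminus \{\emptyset\}$, and every scalar that needs to be inverted is nonzero.

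For the inclusion $\subseteq$, take $\Omega \in \convclos{\mathscr B}{}$ written as a convex combination $\Omega = \mu^\S_{\convpowS X}(W)$ with $\supp W \subseteq \mathscr B = \{\Theta_1,\dots,\Theta_n\}$ and weights $w_i := W(\Theta_i)$, so $\Omega(\mathcal A) = \sum_i w_i\,\Theta_i(\mathcal A)$ and $\sum_i w_i = 1$. Given $F = \S(u)(\Omega) \in \choice\Omega$, set $G_i := \S(u|_{\supp\Theta_i})(\Theta_i) \in \choice{\Theta_i}$. Direct pointwise computation shows $\mu^\S_X(F) = \sum_i w_i\,\mu^\S_X(G_i)$ as a convex combination in $(\S X, \mu^\S_X)$. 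Since each $\mu^\S_X(G_i) \in \alpha_{\mu^\S_X}(\Theta_i)$, this exhibits $\mu^\S_X(F)$ as an element of the right-hand side.

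For the inclusion $\supseteq$, which is the harder direction, take $\phi$ in the convex closure, say $\phi = \sum_j \lambda_j\,\mu^\S_X(G_j)$ with $\sum_j \lambda_j = 1$, $G_j \in \choice{\Theta_{i_j}}$, and $G_j = \S(u_j)(\Theta_{i_j})$ for some $u_j \colon \supp\Theta_{i_j} \to \S X$ selecting elements. Define $\Omega \in \S(\convpowS X)$ by $\Omega(\mathcal A) := \sum_{j \,:\, \mathcal A \in \supp\Theta_{i_j}} \lambda_j\,\Theta_{i_j}(\mathcal A)$; this is a convex combination of $\Theta_{i_1},\dots$ and hence lies in $\convclos{\mathscr B}{} = \mathscr A$. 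The main obstacle is to construct a single $u \colon \supp\Omega \to \S X$ with $u(\mathcal A) \in \mathcal A$ such that $F := \S(u)(\Omega)$ gives $\mu^\S_X(F) = \phi$. The required identity forces
\[
u(\mathcal A) \;=\; \frac{1}{\Omega(\mathcal A)} \sum_{j \,:\, \mathcal A \in \supp\Theta_{i_j}} \lambda_j\,\Theta_{i_j}(\mathcal A)\cdot u_j(\mathcal A),
\]
and here two things conspire: first, $S$ is a positive semifield, so $\Omega(\mathcal A)\neq 0$ and the division makes sense; second, the coefficients in the displayed formula sum to $1$, so the right-hand side is a convex combination of elements $u_j(\mathcal A)\in \mathcal A$, and hence lies in $\mathcal A$ because $\mathcal A$ is \emph{convex} in $(\S X,\mu^\S_X)$. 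This is exactly where the hypothesis $\mathcal A \in \convpowS X$ (rather than an arbitrary subset) is essential, mirroring the use of convexity in the proof of Proposition~\ref{thm:convpow(X,a) is a S-algebra}. Once $u$ is defined, a short pointwise calculation shows $\mu^\S_X(F) = \phi$, completing the argument.
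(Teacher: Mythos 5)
Your proof is correct, and your $\subseteq$ direction is essentially the paper's own: both write $\Omega \in \convclos{\mathscr B}{}$ as a convex combination $\sum_l \sigma_l\,\Theta_l$, restrict a choice function for $\Omega$ to obtain $G_l \in \choice{\Theta_l}$, and check pointwise that $\muS_X(F) = \sum_l \sigma_l \cdot \muS_X(G_l)$, a convex combination of elements of the generating set. Where you genuinely diverge is the reverse inclusion, which you call the harder direction but which the paper dispatches in one sentence: the left-hand side is $\mu^{\convpowS}_X(\mathscr A)$, already known to be convex in $(\S X, \muS_X)$ (via Proposition~\ref{prop:union of convex family of convex subsets is convex} and Theorem~\ref{thm:weaklift}), and it contains $\bigcup_{\Theta \in \mathscr B}\{\muS_X(G) \mid G \in \choice\Theta\}$ because $\mathscr B \subseteq \mathscr A$, hence it contains the convex closure. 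You instead re-prove this instance by hand: you assemble $\Omega = \sum_j \lambda_j\,\Theta_{i_j} \in \convclos{\mathscr B}{} = \mathscr A$ and merge the several choice functions $u_j$ into a single $u$ by dividing by $\Omega(\mathcal A)$, using convexity of each $\mathcal A \in \supp\Omega$ to get $u(\mathcal A) \in \mathcal A$ --- which is precisely the left-to-right argument of Proposition~\ref{thm:convpow(X,a) is a S-algebra}, transplanted to this setting. Both routes are sound: the paper's buys brevity by leaning on the already-established well-definedness of the composite multiplication, while yours is self-contained and makes explicit exactly where the semifield division and the convexity of the members of $\supp\Omega$ enter. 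Your blanket treatment of degenerate cases is also fine: any $\Theta$ with $\emptyset \in \supp\Theta$ has $\choice\Theta = \emptyset$ and contributes nothing to either side, $\Omega(\mathcal A) \neq 0$ holds on $\supp\Omega$ by definition of support, and the implicit inclusion $\supp\Theta_i \subseteq \supp\Omega$ needed for your restriction $u|_{\supp\Theta_i}$ follows from positivity together with property (B), both available in a positive semifield.
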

\begin{proof}
Let $\Omega \in \mathscr A = \convclos{\mathscr A}{}$. We have that
\[
\Omega = \muS_{\convpowS X} \left(
\begin{tikzcd}[row sep=0em]
\Theta_1 \ar[r,|->] & \sigma_1 \\
\vdots & \vdots \\
\Theta_t \ar[r,|->] & \sigma_t
\end{tikzcd}
\right)
\]
where $\Theta_i \in \mathscr B$ and $\sum_{l=1}^t \sigma_l = 1$. Notice that $\supp \Omega = \bigcup_{l=1}^t \supp \Theta_l$. Now, if $\supp \Omega = \{ \mathcal A_1, \dots, \mathcal A_n \} $ say, we have that any $F \in \choice \Omega$ is of the form
\[
F =  \left(
\begin{tikzcd}[row sep=0em]
\phi_1 \ar[r,|->] & \Omega(\mathcal A_1)=\sum_{l=1}^t \sigma_l \cdot \Theta_l (\mathcal A_1) \\
\vdots & \vdots  \\
\phi_n \ar[r,|->] & \Omega(\mathcal A_n) = \sum_{l=1}^t \sigma_l \cdot \Theta_l (\mathcal A_n)
\end{tikzcd}
\right)
\]
where each $\phi_i \in \mathcal A_i \subseteq \S X$. This leads us to define, for each $l \in \natset t$, a function $G_l \in \S^2 X$ as:
\[
G_l = \left(
\begin{tikzcd}[row sep=0em]
\phi_1 \ar[r,|->] & \Theta_l (\mathcal A_1) \\
\vdots & \vdots \\
\phi_n \ar[r,|->] & \Theta_l (\mathcal A_n)
\end{tikzcd}
\right)
\]
Notice that, in fact, $G_l \in \choice{\Theta_l}$. Indeed, fixed $l$, we have that $\supp \Theta_l \subseteq \supp \Omega$, so it can be the case that $\Theta_l(\mathcal A_i)=0$ for some $i \in \natset n$, in which case we have $G_l(\phi_i) = 0$. Nonetheless, for each $\mathcal A_i$ in $\supp {\Theta_l}$, the function $G_l$ chooses one of its elements and associates to it $\Theta_l (\mathscr A_i)$.

Define then
\[
H=\left(
\begin{tikzcd}[row sep=0em]
\muS_X(G_1) \ar[r,|->] & \sigma_1 \\
\vdots & \vdots \\
\muS_X(G_t) \ar[r,|->] & \sigma_t
\end{tikzcd}
\right)
\]
Then clearly $\sum_{\chi \in \S X} H(\chi) = 1$, and 
\[
\supp \chi \subseteq \bigcup_{l=1}^t \{ \muS_X(G) \mid G \in \choice {\Theta_l} \} \subseteq \bigcup_{\Theta \in \mathscr B} \{ \muS_X (G) \mid G \in \choice \Theta \}.
\]
It is a matter of direct calculation to show that $\muS_X (F) = \muS_X (H)$. This proves the left-to-right inclusion. The vice versa is immediate to see, recalling that we know that the left-hand side is convex. \qed
\end{proof}
Applying the following Lemma for the $\S$-algebra $(\S X, \muS_X)$, we obtain Step 2.
\begin{lemma}
Let $(X,a)$ in $\EM \S$. Then for all $\Phi \in \S \P_{cf}^a X$ we have that
\[
\{ a(\phi) \mid \phi \in \choice\Phi \} = \convclos { \{ a(\psi) \mid \psi \in \choice{\Phi'} \} } a
\]
where, if $\supp \Phi = \{ A_1,\dots,A_n \}$ say, and for all $i$ $A_i = \convclos {B_i} a$ with $B_i \subseteq A_i$ finite, then
\[
\Phi' = (B_1 \mapsto \Phi(A_1), \dots, B_n \mapsto \Phi(A_n)).
\]
\end{lemma}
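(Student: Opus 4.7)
The plan is to prove the two inclusions separately, leveraging the hypothesis that each $A_i$ is convex so that $\Phi \in \S \ConvPow X a$ and the material of Section~\ref{sec:proof} applies to it. In particular, combining Proposition~\ref{prop:image along a of convex closures} with Proposition~\ref{thm:convpow(X,a) is a S-algebra} one has $\pow a (\choice{\Phi}) = \pow a (\delta_X(\Phi)) = \convclos{\pow a (\choice \Phi)}{a}$, so the set $\{a(\phi) \mid \phi \in \choice{\Phi}\}$ is already $a$-convex. This takes care of the inclusion $\supseteq$ once one observes the elementary containment $\choice{\Phi'} \subseteq \choice{\Phi}$: given a choice function $v \colon \supp \Phi' \to X$ with $v(B_i) \in B_i$, setting $u(A_i) := v(B_i)$ defines a valid choice function for $\Phi$ (because $B_i \subseteq A_i$), and a direct unfolding shows that $\S(u)(\Phi)$ and $\S(v)(\Phi')$ coincide as elements of $\S X$. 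Passing to images along $a$ and then taking the $a$-convex closure yields $\supseteq$, using convexity of the right-hand side.

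The interesting direction is $\subseteq$, for which I shall adapt the witness construction from the proof of Proposition~\ref{thm:convpow(X,a) is a S-algebra}. Fix $\phi = \S(u)(\Phi) \in \choice{\Phi}$. For each $i \in \{1,\ldots,n\}$, since $u(A_i) \in A_i = \convclos{B_i}{a}$, I can pick $\chi_i \in \S X$ with $\supp \chi_i \subseteq B_i$, $\sum_{x \in X} \chi_i(x) = 1$ and $a(\chi_i) = u(A_i)$. Writing $B_i = \{b_i^1, \ldots, b_i^{s_i}\}$ and $\lambda_i^j = \chi_i(b_i^j)$, to each tuple $w \in \prod_{i=1}^n \{1,\ldots,s_i\}$ I associate the choice function $v_w(B_i) := b_i^{w_i}$ and the element $\psi_w = \S(v_w)(\Phi') \in \choice{\Phi'}$. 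Then I define $\sigma \in \S X$ by $\sigma(x) = \sum_{w \colon a(\psi_w) = x} \prod_{i=1}^n \lambda_i^{w_i}$. By Lemma~\ref{lemma:generalised distributivity}, $\sum_{x \in X} \sigma(x) = \prod_i \sum_j \lambda_i^j = 1$, and by construction $\supp \sigma \subseteq \{a(\psi) \mid \psi \in \choice{\Phi'}\}$, so $a(\sigma)$ lies in $\convclos{\{a(\psi) \mid \psi \in \choice{\Phi'}\}}{a}$.

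It remains to verify $a(\sigma) = a(\phi)$. Expanding via the semimodule axioms on $X$, $a(\sigma) = \sum_w \bigl(\prod_i \lambda_i^{w_i}\bigr) \cdot^a a(\psi_w)$ with $a(\psi_w) = \sum_j \Phi(A_j) \cdot^a b_j^{w_j}$. Interchanging sums, pulling out each $\Phi(A_j)$, and grouping the resulting inner sum over $w$ by the value $w_j = k$, a second application of Lemma~\ref{lemma:generalised distributivity} to the indices $i \neq j$ collapses the product of $\lambda_i^l$'s to $1$, reducing the expression to $\sum_j \Phi(A_j) \cdot^a \sum_k \lambda_j^k \cdot^a b_j^k = \sum_j \Phi(A_j) \cdot^a a(\chi_j) = \sum_j \Phi(A_j) \cdot^a u(A_j) = a(\phi)$. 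The main obstacle is precisely this combinatorial manipulation, together with the bookkeeping needed to handle coincidences among the $b_i^{w_i}$ across indices $i$ and among distinct tuples $w$ yielding the same $\psi_w$, plus the degenerate edge cases ($\Phi = 0$, or some $B_i = \emptyset$), all of which are dispatched exactly as in the detailed proofs of Theorem~\ref{thm:delta=delta'} and Proposition~\ref{thm:convpow(X,a) is a S-algebra}.
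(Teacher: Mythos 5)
Your proof is correct and follows essentially the same route as the paper's: the substantive inclusion $\subseteq$ uses the identical witness construction (the $\chi_i$ realising $u(A_i) \in \convclos{B_i}{a}$, the tuple-indexed $\psi_w$ with coefficients $\prod_i \lambda_i^{w_i}$, and Lemma~\ref{lemma:generalised distributivity}), your direct verification of $a(\sigma)=a(\phi)$ inside the semimodule $X$ being just an inlined version of the paper's check that $\muS_X(\Psi')=\muS_X(\Psi)$ followed by the algebra law $a \circ \S(a) = a \circ \muS_X$. The only (welcome) difference is that you spell out the reverse inclusion $\supseteq$ explicitly --- convexity of $\pow a (\choice{\Phi})$ via Theorem~\ref{thm:delta for positive refinable semifields} combined with Propositions~\ref{prop:image along a of convex closures} and~\ref{thm:convpow(X,a) is a S-algebra}, together with the containment $\choice{\Phi'} \subseteq \choice{\Phi}$ --- whereas the paper's proof records only the $\subseteq$ direction and leaves this part implicit.
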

\begin{proof}
Let $\phi \in \choice \Phi$. Then we can write $\phi$ as
\[
\phi = (u_1 \mapsto \Phi(A_1), \dots, u_n \mapsto \Phi(A_n)
\]
where $u_i \in A_i$ for all $i$. Notice that it is possible for $u_i = u_j$ for $i \ne j$: in that case, the notation above implicitly says that $u_i \mapsto \Phi(A_i) = \Phi(A_j)$. Now, since $A_i = \convclos {B_i} a$, we have that $u_i = a (\chi_i)$ for some $\chi_i \in \S X$ such that $\sum \chi_i(x) = 1$ and $\supp \chi_i \subseteq B_i$. So:
\begin{align*}
    \phi &= (a(\chi_1) \mapsto \Phi(A_1), \dots, a(\chi_n) \mapsto \Phi(A_n)) \\
    &= \S (a) \bigl( \chi_1 \mapsto \Phi(A_1), \dots, \chi_n \mapsto \Phi(A_n) \bigr)
\end{align*}
Call $\Psi = \bigl( \chi_1 \mapsto \Phi(A_1), \dots, \chi_n \mapsto \Phi(A_n) \bigr)$. Then we just said that $\phi = \S (a) (\Psi)$. We can write $\Psi$ more explicitly, by listing down the action of each $\chi_i$:
\[
\Psi = \left(
\begin{tikzcd}[row sep=0em]
\chi_1=\left(
	\parbox{1.6cm}{$x^1_1 \mapsto \lambda^1_1$ \\ \vdots \\ $x^1_{s_1} \mapsto \lambda^1_{s_1}$}
	\right)
	\ar[r,|->] & \Phi(A_1) \\
	\vdots & \vdots \\
	\chi_n=\left(
	\parbox{1.6cm}{$x^n_1 \mapsto \lambda^n_1$ \\ \vdots \\ $x^n_{s_n} \mapsto \lambda^n_{s_n}$}
	\right)
	\ar[r,|->] & \Phi(A_n)
\end{tikzcd}
\right)
\]
where, for each $k=1,\dots,n$, $\sum_{j=1}^{s_k} \lambda^k_j = 1$ and for all $j=1,\dots,s_k$ we have $x^k_j \in B_k$.

Now, define for all $w \in \prod_{k=1}^n \natset{s_k}$, a $n$-tuple whose $j$-th entry is a number between $1$ and $s_j$, the function
\[
\psi_w = \bigl( x^1_{w_1} \mapsto \Phi(A_1), \dots, x^n_{w_n} \mapsto \Phi(A_n) \bigr), \text{ i.e. } \psi_w(x) = \sum_{\substack{i \in \natset n \\ x=x^i_{w_i}}} \Phi(A_i).
\]
Define also
\[
\Psi' = ( \psi_w \mapsto \prod_{k=1}^n \lambda^k_{w_k} )_{w \in \prod \natset {s_k}} \text{ i.e. } \Psi'(\psi) = \sum_{\substack{w \in \prod \natset{s_k} \\ \psi=\psi_w }} \prod_{k=1}^n \lambda^k_{w_k}.
\]
Notice that, by Lemma~\ref{lemma:generalised distributivity}, we have that
\[
\sum_{w \in \prod \natset{s_k}} \prod_{k=1}^n \lambda^k_{w_k} = \prod_{k=1}^n \sum_{j=1}^{s_k} \lambda^k_j = \prod_{k=1}^n 1 = 1.
\]
This immediately implies that $\sum_{\psi \in \S X} \Psi'(\psi)=1$. Next, we show that $\muS_X (\Psi') = \muS_X(\Psi)$.
\begin{align*}
    \muS_X(\Psi')(x) &= \sum_{\psi \in \S X} \Psi'(\psi) \cdot \psi(x) \\
    &= \sum_w \Bigl[ \prod_{k=1}^n \lambda^k_{w_k} \cdot \sum_{\substack{i \in \natset n \\ x=x^i_{w_i} }} \Phi(A_i)  \Bigr] \\
    &= \sum_w \sum_{\substack{i \in \natset n \\ x=x^i_{w_i}}} \Bigl[ (\prod_{k=1}^n \lambda^k_{w_k} ) \cdot \Phi(A_i)  \Bigr] \\
    &= \sum_{i \in \natset n} \sum_{\substack{ w \\ x=x^i_{w_i}}} \Bigl[ \Phi(A_i) \cdot \underbrace{ \lambda^i_{w_i}}_{=\chi_i(x^i_{w_i}) = \chi_i(x)} \cdot \prod_{\substack{k \in \natset n \\ k \ne i}} \lambda^k_{w_k} \Bigr] \\
    &= \sum_{i=1}^n \Phi(A_i) \cdot \chi_i(x) \cdot \underbrace{\sum_{\substack{w \\ x=x^i_{w_i}}} \prod_{k \ne i} \lambda^k_{w_k}}_{=\prod_{k \ne i} \sum_{j=1}^{s_k} \lambda^k_j = 1} \\
    &= \muS_X(\Psi)(x).
    \end{align*}
    Hence:
    \[
    a(\phi) = a \bigl( \S(a)(\Psi)  \bigr) = a(\muS_X(\Psi)) = a(\muS_X (\Psi')) = a (\S(a)(\Psi'))
    \]
    where $a(\S(a)(\Psi'))$ indeed belongs to $\convclos { \{ a(\psi) \mid \psi \in \choice{\Phi'} \} } a$ because 
    \[
    \supp{(\S(a)(\Psi'))} = \{ a(\psi_w) \mid w \in \prod_{k=1^n} \natset{s_k} \} \subseteq \{ a(\psi) \mid \psi \in \choice{\Phi'} \}
    \]
    and the sum of all its images is $\sum_w \prod_{k=1}^n \lambda^k_{w_k} = 1$. \qed
\end{proof}

Notice that if $\Phi' \in \S\P X$ is such that $\supp{\Phi'} \subseteq \P_f (X)$, then $\choice{\Phi'}$ is finite. Finally, the next Lemma, when $A$ and $B$ are finite, proves Step 3.

\begin{lemma}
Let $(X,a)$ be in $\EM \S$, $A,B \subseteq X$. Then $\convclos{\convclos A {} \cup \convclos B {} } {} = \convclos{A \cup B} {}$.
\end{lemma}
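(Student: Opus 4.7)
The plan is to deduce the equality from three standard properties of the convex closure operator $\convclos{\cdot}{}$ on subsets of $X$, viewed as a closure operator on the powerset:
\begin{enumerate}
\item[(i)] \emph{extensiveness}: $A \subseteq \convclos{A}{}$;
\item[(ii)] \emph{monotonicity}: $A \subseteq B \implies \convclos{A}{} \subseteq \convclos{B}{}$;
\item[(iii)] \emph{idempotency}: $\convclos{\convclos{A}{}}{} = \convclos{A}{}$.
\end{enumerate}
Once these are established, the lemma is immediate by a standard closure-operator calculation: from $A,B \subseteq \convclos A{} \cup \convclos B{}$ and monotonicity I get $\convclos{A \cup B}{} \subseteq \convclos{\convclos A{} \cup \convclos B{}}{}$; conversely, from $\convclos A{}, \convclos B{} \subseteq \convclos{A\cup B}{}$ (monotonicity) I get $\convclos A{} \cup \convclos B{} \subseteq \convclos{A\cup B}{}$, and applying $\convclos{\cdot}{}$ together with idempotency yields $\convclos{\convclos A{} \cup \convclos B{}}{} \subseteq \convclos{\convclos{A\cup B}{}}{} = \convclos{A\cup B}{}$.

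So the real work is in verifying (i)--(iii) using only the $\S$-algebra laws on $(X,a)$. For (i), given $x \in A$, the Dirac function $\Delta_x \in \S X$ satisfies $\supp \Delta_x = \{x\} \subseteq A$ and $\sum_{y \in X} \Delta_x(y) = 1$, and the unit law $a \circ \etaS_X = \id X$ gives $a(\Delta_x) = x$, so $x \in \convclos{A}{a}$. Property (ii) is immediate from the definition of $\convclos{\cdot}{a}$, since enlarging $A$ only enlarges the set of admissible $\phi$.

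The main obstacle is (iii), specifically the non-trivial inclusion $\convclos{\convclos{A}{a}}{a} \subseteq \convclos{A}{a}$. The idea is to flatten a ``convex combination of convex combinations'' using the multiplication law $a \circ \muS_X = a \circ \S(a)$ of the $\S$-algebra. Concretely, given $y \in \convclos{\convclos A a}{a}$, write $y = a(\phi)$ with $\supp \phi = \{y_1,\dots,y_n\} \subseteq \convclos A a$ and $\sum \phi(y_i) = 1$; for each $i$, pick $\psi_i \in \S X$ witnessing $y_i \in \convclos A a$, i.e.\ $\supp \psi_i \subseteq A$, $\sum \psi_i = 1$, $a(\psi_i) = y_i$. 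Define $\Psi \in \S^2 X$ by $\Psi(\psi_i) = \phi(y_i)$ (summing if some $\psi_i$'s coincide). Then $\S(a)(\Psi) = \phi$, so $a(\muS_X(\Psi)) = a(\S(a)(\Psi)) = a(\phi) = y$, and one checks directly that $\muS_X(\Psi) \in \S X$ has support in $A$ and total mass $1$, exhibiting $y \in \convclos A a$ as required. This is the same bookkeeping performed in Proposition~\ref{thm:convpow(X,a) is a S-algebra}, so the argument is essentially a pared-down reuse of techniques already developed in the paper.
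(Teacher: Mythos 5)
Your proof is correct, and its computational heart coincides with the paper's: both hinge on flattening a ``convex combination of convex combinations'' through the Eilenberg--Moore law $a \circ \muS_X = a \circ \S(a)$, applied to a $\Psi \in \S^2 X$ assembled from chosen witnesses $\psi_i$, followed by the routine check that $\muS_X(\Psi)$ has support in the right set and total mass $1$. What differs is the decomposition. The paper proves the nontrivial inclusion $\convclos{\convclos A {} \cup \convclos B {}}{} \subseteq \convclos{A \cup B}{}$ in a single direct step, flattening a \emph{mixed} family whose members have support in $A$ or in $B$, and dismisses the converse as obvious from $A \cup B \subseteq \convclos A {} \cup \convclos B {}$ (tacitly using monotonicity). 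You instead isolate the three closure-operator axioms---extensiveness, monotonicity, idempotency---prove idempotency by the same flattening (now with all supports in $A$), and then derive the lemma by a purely formal closure-operator calculation. Your route is marginally longer but more modular: the final step works for \emph{any} extensive, monotone, idempotent operator, so it yields such Boolean-combination identities wholesale; and your item (iii) makes explicit something the paper only obtains abstractly, namely that idempotency of $\convclos{(-)}{a}$ is precisely the idempotency of the map $e_{(X,a)}$ of~\eqref{eqn:idempotent e}, which the paper inherits from Garner's splitting result rather than verifying by hand. One small remark: your parenthetical ``summing if some $\psi_i$'s coincide'' is vacuous, since $a(\psi_i) = y_i$ and the $y_i$, being the support of $\phi$, are distinct, so the $\psi_i$ are automatically distinct---harmless, but worth noting that the bookkeeping is even simpler than you allow for.
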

\begin{proof}
We have:
\begin{align*}
    \convclos { \convclos A {} \cup \convclos B {} } {} &= \{ a(\phi) \mid \phi \in \S X, \, \sum_{x} \phi(x)=1, \supp \phi \subseteq \convclos A {} \cup \convclos B {} \} \\
    \convclos A {} \cup \convclos B {} &= \{ a(\psi) \mid \psi \in \S X,\, \sum_x \psi(x)=1,\, \supp \psi \subseteq A \text{ or } \supp \psi \subseteq B \} \\
    \convclos{A \cup B} {} &= \{ a(\chi) \mid \chi \in \S X,\, \sum_x \chi(x) =1,\, \supp \chi \subseteq A \cup B \}.
\end{align*}
Let then $x \in \convclos{\convclos A {} \cup \convclos B {} } {}$. Then
\[
x=a(\phi) = a \left(
\begin{tikzcd}[row sep=0em]
a(\psi_1) \ar[r,|->] & \phi(a(\psi_1)) \\
\vdots \\
a(\psi_n) \ar[r,|->] & \phi(a(\psi_n))
\end{tikzcd}
\right)
= a(\S (a) (\Phi)) = a (\muS_X (\Phi))
\]
where $\Phi = (\psi_1 \mapsto \phi(a(\psi_1)),\dots, \psi_n \mapsto \phi(a(psi_n))$. Calling $\chi=\muS_X(\Phi)$, one can easily check that $\sum_x \chi(x)=1$ and that $\supp \chi \subseteq A \cup B$, hence $\convclos{\convclos A {} \cup \convclos B {} } {} \subseteq \convclos{A \cup B} {}$. The other inlusion is obvious, given that $A \cup B \subseteq \convclos A {} \cup \convclos B {}$. \qed
\end{proof}

\subsubsection{Proof of Theorem~\ref{thm:convpowfS is presented by (Sigma', E')}.} We first observe that the function 
\[
\bb{\cdot}_X \colon T_{\Sigma',E'}(X) \to \convpowS(X)
\]
factors as 
\[\begin{tikzcd}
T_{\Sigma',E'}(X) \ar[r,"\bb{\cdot}'_X"] &  \convpowfS(X) \ar[r,"\iota_X"] & \convpowS(X)
\end{tikzcd}\]
where $\iota_X \colon \convpowfS(X) \to\convpowS(X)$ is the obvious set-inclusion. This can be easily checked by induction on $T_{\Sigma',E'}(X)$.

Observe that, since $\bb{\cdot}_X$ is injective by Proposition \ref{prop:injectivemonadmap}, then also $\bb{\cdot}'_X$ 
is injective. We conclude by showing that it is also surjective.

Let $\mathcal A \in \convpowfS(X)$. Since $\mathcal A$ is finitely generated there exists a finite set $\mathcal B \subseteq \mon{S}(X)$ such that $\convclos{\mathcal B}{}=\mathcal A$. If $\mathcal A=\emptyset$, then $\bb{\bot}'_X = \mathcal A$. If $\mathcal B=\{\phi_1, \dots, \phi_n\}$ with $\phi_i \in \mon{S}(X)$ then, for all $i$, we take the term 
$$t_i=\phi_i(x_1)\cdot x_1 + \dots + \phi_i(x_m)\cdot x_m$$ 
where $\{x_1, \dots, x_m\}$ is the support of $\phi_i$.
It is easy to check that $\bb{t_i}_X'= \{\phi_i\}$. Then by the inductive definition of $\bb{\cdot}'_X$, one can easily verify that $\bb{t_1 \sqcup \dots \sqcup t_n}'_X = \convclos{\{\phi_1, \dots \phi_n\}}{}= \mathcal A$. \qed

\vfill

{\small\medskip\noindent{\bf Open Access} This chapter is licensed under the terms of the Creative Commons\break Attribution 4.0 International License (\url{http://creativecommons.org/licenses/by/4.0/}), which permits use, sharing, adaptation, distribution and reproduction in any medium or format, as long as you give appropriate credit to the original author(s) and the source, provide a link to the Creative Commons license and indicate if changes were made.}

{\small \spaceskip .28em plus .1em minus .1em The images or other third party material in this chapter are included in the chapter's Creative Commons license, unless indicated otherwise in a credit line to the material.~If material is not included in the chapter's Creative Commons license and your intended\break use is not permitted by statutory regulation or exceeds the permitted use, you will need to obtain permission directly from the copyright holder.}

\medskip\noindent\includegraphics{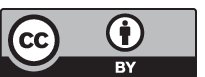}

\end{document}